\newtheorem{theorem}{Theorem}[section]
\newtheorem{lemma}[theorem]{Lemma}
\newtheorem{proposition}[theorem]{Proposition}
\newtheorem{corollary}[theorem]{Corollary}
\theoremstyle{definition}
\newtheorem{definition}[theorem]{Definition}
\newtheorem{remark}[theorem]{Remark}
\numberwithin{equation}{section}
\numberwithin{figure}{section}
\numberwithin{table}{section}
\def \bN {\mathbb{N}}
\def \bR {\mathbb{R}}
\def \bZ {\mathbb{Z}}
\def \cB {\mathcal{B}}
\def \cE {\mathcal{E}}
\def \cF {\mathcal{F}}
\def \cG {\mathcal{G}}
\def \cH {\mathcal{H}}
\def \cN {\mathcal{N}}
\def \cO {\mathcal{O}}
\def \cQ {\mathcal{Q}}
\def \cS {\mathcal{S}}
\def \Ba {{\boldsymbol{a}}}
\def \Bb {{\boldsymbol{b}}}
\def \Be {{\boldsymbol{e}}}
\def \Bk {{\boldsymbol{k}}}
\def \Bm {{\boldsymbol{m}}}
\def \Bn {{\boldsymbol{n}}}
\def \Bx {{\boldsymbol{x}}}
\def \By {{\boldsymbol{y}}}
\def \sgn {\,{\rm sgn}\,}
\def \Pdim {\,{\rm Pdim}\,}
\def \Bin {\,{\rm Bin}\,}
\def \mid {\,{\rm mid}\,}
\begin{document}

\title{Approximation in shift-invariant spaces with deep ReLU neural networks}

\author{
Yunfei Yang \thanks{Department of Mathematics, The Hong Kong University of Science and Technology, Clear Water Bay, Kowloon, Hong Kong, China. (Corresponding author, Email: yyangdc@connect.ust.hk)}
\and
Zhen Li \thanks{Theory Lab, Huawei Technologies Co., Ltd., Shenzhen, China.}
\and
Yang Wang \thanks{Department of Mathematics, The Hong Kong University of Science and Technology, Clear Water Bay, Kowloon, Hong Kong, China.}
}
\date{}
\maketitle

\begin{abstract}
We study the expressive power of deep ReLU neural networks for approximating functions in dilated shift-invariant spaces, which are widely used in signal processing, image processing, communications and so on. Approximation error bounds are estimated with respect to the width and depth of neural networks. The network construction is based on the bit extraction and data-fitting capacity of deep neural networks.  As applications of our main results, the approximation rates of classical function spaces such as Sobolev spaces and Besov spaces are obtained. We also give lower bounds of the $L^p (1\le p \le \infty)$ approximation error for Sobolev spaces, which show that our construction of neural network is asymptotically optimal up to a logarithmic factor. 

\vspace{0.3cm}
\noindent\textbf{Keywords}: deep neural networks, approximation complexity, shift-invariant spaces, Sobolev spaces, Besov spaces
\end{abstract}

\section{Introduction}

In the past few years, machine learning techniques based on deep neural networks have been remarkably successful in many applications such as computer vision, natural language processing, speech recognition and even art creating \citep{lecun2015deep, gatys2016image}. Despite their state-of-the-art performance in practice, the fundamental theory behind deep learning remains largely unsolved, including function representation, optimization, generalization and so on. One cornerstone in the theory of neural networks is their expressive power, which has been studied by many pioneer researchers in many different aspects such as VC-dimension and Pseudo-dimension \citep{bartlett1999almost,goldberg1995bounding,bartlett2019nearly}, number of linear sub-domains \citep{montufar2014number,raghu2017expressive,serra2018bounding}, data-fitting capacity \citep{yun2019small,vershynin2020memory} and data compression \citep{bolcskei2019optimal,elbrachter2021deep}.

In this paper, we study the expressive power of deep ReLU neural networks in terms of their capability of  approximating functions. It is well known that, under certain mild conditions on the activation function, two-layer neural networks are universal. They can approximate continuous functions arbitrarily well on compact set, if the width of network is allowed to grow arbitrarily large \citep{cybenko1989approximation,hornik1991approximation,pinkus1999approximation}. Recently, the universality of neural networks with fixed width have also been established in \citet{hanin2019universal,hanin2017approximating}. A further question is about the order of approximation error, or equivalently, the required size of a neural network that is sufficient for approximating a given class of functions, determined by the application at hand, to a prescribed accuracy. The study of this question mainly focused on shallow neural networks in the 1990s. Recent breakthrough of deep learning in practical areas has attracted many researchers to work on estimating approximation error of deep neural networks on different types of function classes, such as continuous functions \citep{yarotsky2018optimal}, band-limited functions \citep{montanelli2019deep}, smooth functions \citep{lu2020deep} and piecewise smooth functions \citep{petersen2018optimal}.

The purpose of this paper is to approximate functions in dilated shift-invariant spaces using neural networks. 
More specifically, we construct deep ReLU neural networks to approximate functions of the form 
\[
g(\Bx) = \sum_{\Bn\in\bZ^d} c_\Bn \varphi(2^j\Bx-\Bn),
\]
which are functions in the dilated shift-invariant spaces generated by a continuous function $\varphi$.
Our main contribution is that we provide a systematical way to construct such neural networks and that we characterize their expressive power by rigorous estimation of the approximation error. Our work is closely related to signal processing, image processing, communication of information and so on, for in these areas, shift-invariant spaces are widely used  \citep{grochenig2001foundations,mallat1999wavelet}. For example, digital signals transmitted in communication systems are expressed by functions in these spaces \citep{Oppenheim_2009}.
Recently, many efforts are made to apply neural networks to solve problems in these areas \citep{purwins2019deep,yu2010deep,ker2017deep,mousavi2015deep,kiranyaz20191,fan2020advancing}. 
Despite their success in practice, theoretical understanding of deep learning in such applications still remains open. We hope that our work provides a theoretical justification and explanation for the application of deep neural networks in such areas.

Our results on shift-invariant spaces can also be used to study the approximation on other function spaces. Shift-invariant spaces are closely related to wavelets \citep{daubechies1992ten,mallat1999wavelet}, which can be used to approximate classical function spaces such as Sobolev spaces, Besov spaces and so on \citep{de1994approximation,jia1993approximation,lei1997approximation,kyriazis1995approximation,jia2004approximation,jia2010approximation}. By combining our construction with these existing results, we can estimate approximation errors of Sobolev functions and Besov functions by deep neural networks,
which generalize the results of Yarotsky \citep{yarotsky2017error,yarotsky2018optimal,yarotsky2019phase} and Shen et al. \citep{shen2019nonlinear,shen2019deep,lu2020deep}. Besides, we also give lower bounds of the approximation error using the nonlinear $n$-width introduced by \citet{ratsaby1997value,maiorov1999degree}. It is worth to point out that our lower bounds hold for $L^p$ error with $1\le p\le \infty$, while, as far as we know, it is only proved for $L^\infty$ error in the literature. These lower bounds indicate the asymptotic optimality of our error estimates on Sobolev spaces.

The rest of this paper is organized as follows. Notations and necessary terminology are summarized in section \ref{sec:Preliminaries}. A detailed discussion of our main results is presented in section \ref{sec:shift_invariant_spaces}. In section \ref{sec:Sobolev_Besov}, we apply our main theorem to Sobolev spaces and Besov spaces, and show the optimality of the approximation result in Sobolev spaces. In section \ref{sec:discussion}, we make a summary of our result and discuss the its relation with other studies. Finally, the detail of the network construction and the proofs of main theorems are contained in sections \ref{sec:proof_Q} and \ref{sec:proof_uniform}.

\section{Preliminaries}\label{sec:Preliminaries}

\subsection{Notations}\label{sec:Notations}

Let us first introduce some notations.  We denote the set of positive integers by $\bN=\{1,2,\dots\}$. For each $j\in\bN$, we denote $\bZ^d_j := [0,2^j-1]^d\cap \bZ^d$. Hence, the cardinality of $\bZ^d_j$ is $|\bZ^d_j| = 2^{jd}$. Assume $\Bn \in \bN^d$, the asymptotic notation $f(\Bn) =\cO(g(\Bn))$ means that there exists $M,C>0$ independent of $\Bn$ such that $f(\Bn)\le Cg(\Bn)$ for all $\|\Bn\|_{\ell^\infty} \ge M$. The notation $f(\Bn) \asymp g(\Bn)$ means that $f(\Bn) =\cO(g(\Bn))$ and $g(\Bn) =\cO(f(\Bn))$. For any $x\in [0,1)$, we denote the binary representation of $x$ by 
\[
\Bin 0.x_1x_2\cdots=\sum_{i=1}^\infty 2^{-i}x_i=x,
\]
where each $x_i\in \{0,1\}$ and $\liminf_{i\to \infty} x_i \neq 1$. Notice that the binary representation defined in this way is unique for $x\in [0,1)$.

We will need the following notation to approximately partition $[0,1]^d$ into small cubes. For any $j,d\in \bN$, let $0<\delta<2^{-j}$, we denote 
\begin{equation}\label{Q(j,delta,1)}
Q(j,\delta,1):= [0,1)\setminus \cup_{k=1}^{2^j-1}(k2^{-j}-\delta,k2^{-j}),
\end{equation}
and for $d\ge 2$,
\begin{equation}\label{Q(j,delta,d)}
Q(j,\delta,d) := \{ \Bx=(x_1,\dots,x_d): x_i\in Q(j,\delta,1), 1\le i\le d \}.
\end{equation}
Figure \ref{example of Q} shows an example of $Q(j,\delta,d)$.

\begin{figure}[ht]\label{example of Q}
\centering
\includegraphics[scale=.37]{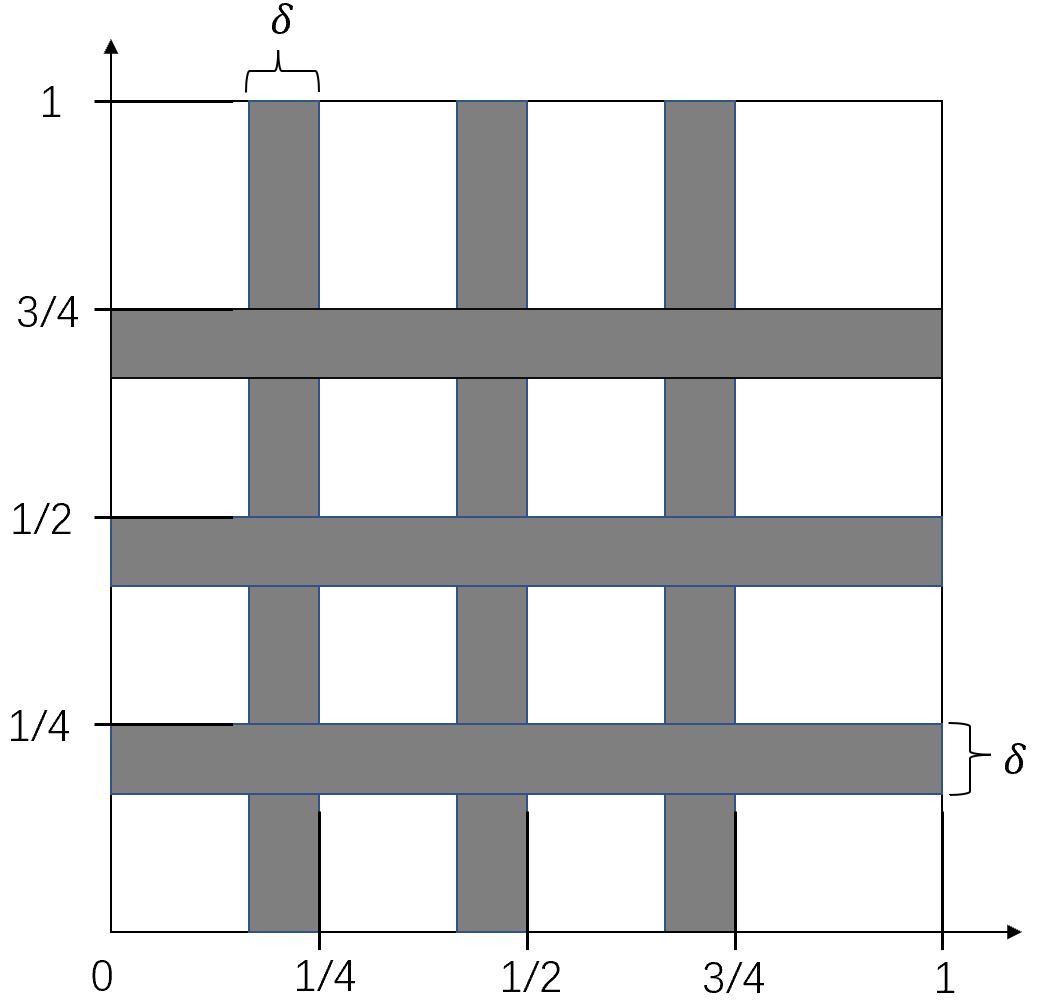}
\caption{An example of $Q(j,\delta,d)$ with $j=2$ and $d=2$. It is the union of the white region in $[0,1]^d$.}
\label{figure_Q}
\end{figure}

Finally, for any function $f: \Omega \subseteq \bR \to \bR$, we will extend its definition to $\Omega^d$ by applying $f$ coordinate-wisely to $\Bx=(x_1,\dots,x_d)\in \Omega^d$, i.e. $f(\Bx) := (f(x_1),\dots,f(x_d))$, without further notification.

In Table \ref{symbols}, we summarize a set of symbols that are used throughout this paper. Some of the notations will be introduced later.

\begin{table}[h]
\centering
\begin{tabular}{|c|c|}
\hline
Notation & Definition \\
\hline
$\bZ^d_j$ & $[0,2^j-1]^d\cap \bZ^d$ \\
\hline
$\Bin 0.x_1x_2\cdots$ & Binary representation of $x= \sum_{i=1}^\infty 2^{-i}x_i$, $x_i\in \{0,1\}$ \\
\hline
$Q(j,\delta,d)$ & Approximately partition of $[0,1]^d$, Eq.(\ref{Q(j,delta,d)}) \\
\hline
$\cN\cN(N,L)$ & Function class of neural networks with width $N$ and depth $L$ \\
\hline
$\cS_j(\varphi,M)$ & Dilated shift-invariant space generated by $\varphi$, Eq.(\ref{S_j(varphi,M)}) \\
\hline
$\cE(f,\cH;\cB)$ & Approximation error of $f$ from $\cH$ in the norm of $\cB$, Eq.(\ref{E(f,H,B)}) \\
\hline
$m_j:[0,1)\to\bZ_j$ & $m_j(x)=\lfloor 2^j x\rfloor$, Lemma \ref{float_representation} \\
\hline
$r_j:[0,1)\to [0,1)$ & $r_j(x)=2^j x-m_j(x)$, Lemma \ref{float_representation} \\
\hline
$\bZ^d_\varphi$ & $\{\Bn\in \bZ^d: \exists \Bx\in [0,1)^d\ s.t.\ \varphi(\Bx-\Bn) \neq 0  \}$, Lemma \ref{float_representation} \\
\hline
$C_\varphi$ & $|\bZ^d_\varphi|$, the cardinality of $\bZ^d_\varphi$ \\
\hline
\end{tabular}
\caption{\label{symbols}A list of notations used in this paper.}
\end{table}

\subsection{Neural networks}

In this paper, we only consider feed-forward neural networks with ReLU activation function $\sigma(x) :=\max\{0,x\}$. Let $2\le L\in \bN$ and $N_0,\dots,N_L\in \bN$. We say $\eta = (A^{(\ell)},\Ba^{(\ell)})_{\ell=1}^L$ is a network architecture, if $A^{(\ell)}\in \bR^{N_{\ell}\times N_\ell-1}$, $\Ba^{(\ell)}\in \bR^{N_\ell}$ and each entry of $A^{(\ell)}$ and $\Ba^{(\ell)}$ is in $\{0,1\}$. We say a function $f:\bR^{N_0}\to \bR^{N_L}$ can be implemented (or represented) by a neural network with architecture $\eta$ if it can be written in the form
\[
f(\Bx) = T_L(\sigma(T_{L-1}(\cdots \sigma(T_1(\Bx))\cdots))),
\]
where $T_\ell(\Bx) := (A^{(\ell)} \odot B^{(\ell)}) \Bx +\Ba^{(\ell)} \odot \Bb^{(\ell)}$ is an affine transformation with $B^{(\ell)} \in \bR^{N_\ell\times N_{\ell-1}}$ and $\Bb^{(\ell)}\in \bR^{N_\ell}$, and $\odot$ is entry-wise product. $L$ is called the depth of neural network. The width is referred to $N=\max\{N_1,\dots,N_{L-1} \}$. The number of parameters of the architecture $\eta$ is $W= \sum_{\ell=1}^L \|A^{(\ell)}\|_{\ell^0} + \|\Ba^{(\ell)}\|_{\ell^0}$ and the number of (hidden) neurons is $U=\sum_{\ell=1}^{L-1} N_\ell$.

We will mainly focus on fully connected neural networks, which we refer to the case that all entries of $A^{(\ell)}$ and $\Ba^{(\ell)}$ are ones. Hence, we have no restriction on the coefficients of the affine maps $T_\ell(\Bx) := B^{(\ell)} \Bx + \Bb^{(\ell)}$. When the input dimension $N_0$ and output dimension $N_L$ are clear from contexts, we denote by $\cN\cN(N,L)$ the set of functions that can be represented by neural networks with width at most $N$ and depth at most $L$.
The expression ``a neural network $\phi$ with width $N$ and depth $L$'' means $\phi \in \cN\cN(N,L)$.

\subsection{Shift-invariant spaces}

Let $\varphi:\bR^d\to \bR$ be a continuous function with compact support. The shift-invariant space $\cS(\varphi)$ generated by $\varphi$ is the set of all finite linear combinations of the shifts of $\varphi$, i.e. liner combination of $\varphi(\cdot -\Bn)$ with $\Bn \in \bZ^d$. For each $j\ge 0$, the dilated shift-invariant space $\cS_j(\varphi)$ is defined to be the dilation of $\cS(\varphi)$ by $2^j$. That is, every function $g\in S_j(\varphi)$ is of the form
\begin{equation}\label{summation}
g(\Bx) = \sum_{\Bn\in\bZ^d} c_\Bn \varphi(2^j\Bx-\Bn),
\end{equation}
where $(c_\Bn)_{\Bn\in\bZ^d}$ is zero except for finitely many $\Bn$. Note that the space $\cS_j(\varphi)$ is invariant under the translations $T_{2^{-j} \Bm} g(\Bx) := g(\Bx-2^{-j} \Bm)$ with $\Bm\in \bZ^d$. For any $M>0$, we denote
\begin{equation}\label{S_j(varphi,M)}
\cS_j(\varphi,M):= \left\{\sum_{\Bn\in\bZ^d} c_\Bn \varphi(2^j\Bx-\Bn)\in S_j(\varphi): |c_\Bn|<M \text{ for any } \Bn\in \bZ^d \right\}.
\end{equation}

\subsection{Sobolev spaces and Besov spaces}

For $1\le p\le \infty$, the $p$-norm of $L^p(\bR^d)$ is denoted by $\|\cdot\|_p$ for convenience. Let $k\in \bN$, the Sobolev space $W^{k,p}(\bR^d)$ is the set of functions $f\in L^p(\bR^d)$ which have finite Sobolev norm
\[
\|f\|_{W^{k,p}} := \left( \sum_{\|\boldsymbol{\alpha}\|_{\ell^1}\le k} \|D^{\boldsymbol{\alpha}} f\|^p_p \right)^{1/p},
\]
where $D^{\boldsymbol{\alpha}}$ is the weak derivative of order $\boldsymbol{\alpha}$.
There are several ways to generalize the definition of Sobolev norms to non-integer regularity. Here, we introduce the Besov spaces. Let us denote the difference operator by $\Delta_\By f(\Bx):= f(\Bx+\By)-f(\Bx)$ for any $\Bx, \By\in\bR^d$. Then, for any positive integer $m$, the $m$-th modulus of smoothness of a function $f\in L^p(\bR^d)$ is defined by
\[
\omega_m(f,h)_p := \sup_{\|\By\|_{\ell^2}\le h} \|\Delta_\By^m (f)\|_p, \quad h\ge 0,
\]
where
\[
\Delta_\By^m (f)(\Bx) := \sum_{j=0}^{m} \binom{m}{j} (-1)^{m-j}f(\Bx+j\By).
\]
For $\mu >0$ and $1\le p,q\le \infty$, the Besov space $B^\mu_{p,q}(\bR^d)$ is the collection of functions $f\in L^p(\bR^d)$ that have finite semi-norm $|f|_{B^\mu_{p,q}}<\infty$, where the semi-norm is defined as
\[
|f|_{B^\mu_{p,q}}:= \left\{
\begin{aligned}
&\left( \int_{0}^{\infty} \left| \frac{\omega_m(f,t)_p}{t^\mu} \right|^q \frac{dt}{t}  \right)^{1/q}, \quad &&1\le q<\infty, \\
&\sup_{t>0} \frac{\omega_m(f,t)_p}{t^{\mu}}, \quad &&q=\infty,
\end{aligned}
\right.
\]
where $m$ is an integer larger than $\mu$. The norm for  $B^\mu_{p,q}$ is
\[
\| f\|_{ B^\mu_{p,q}} := \|f\|_p + |f|_{B^\mu_{p,q}}.
\]
Note that for $k\in \bN$, we have the embedding $B^k_{p,1} \hookrightarrow W^{k,p} \hookrightarrow B^k_{p,\infty}$ and $B^k_{2,2}=W^{k,2}$. A general discussion of Sobolev spaces and Besov spaces can be found in \citet{devore1993constructive}.

\subsection{Approximation}

Let $\cB$ be a normed space and $f\in \cB$, we denote the approximation error of $f$ from a set $\cH\subseteq \cB$ under the norm of $\cB$ by
\begin{equation}\label{E(f,H,B)}
\cE(f,\cH;\cB) := \inf_{h\in\cH} \|f-h\|_\cB.
\end{equation}
The approximation error of a set $\cF\subseteq \cB$ is the supremum approximation error of each function $f\in\cF$, i.e.
\[
\cE(\cF,\cH;\cB) := \sup_{f\in \cF} \cE(f,\cH;\cB) =\sup_{f\in \cF} \inf_{h\in\cH} \|f-h\|_\cB.
\]

Let $f\in \cB$ and $\cG,\cH\subseteq \cB$, then for any $g\in \cG$,
\[
\cE(f,\cH;\cB) = \inf_{h\in\cH} \|f-h\|_\cB \le \|f-g\|_\cB + \inf_{h\in\cH} \|g-h\|_\cB \le \|f-g\|_\cB + \cE(\cG,\cH;\cB).
\]
By taking infimum over $g\in \cG$, we get the ``triangle inequality'' for approximation error:
\[
\cE(f,\cH;\cB) \le \cE(f,\cG;\cB) + \cE(\cG,\cH;\cB).
\]

Since we will mainly characterize the approximation error by width and depth of neural networks (or by number of neurons), we define the approximation order as follows.

\begin{definition}[Order]
We say that the approximation order (by neural networks) of a function $\varphi:\bR^d \to \bR$ is at least $\alpha> 0$ if 
\[
\cE(\varphi,\cN\cN(N,L);L^{\infty}(\bR^d)) = \cO((NL)^{-\alpha}).
\]
\end{definition}
More precisely, this definition means that there exist constants $C,M>0$ such that for any positive integers $N,L\ge M$, there exists a ReLU network $\phi$ with width $N$ and depth $L$ such that
\[
\| \varphi -\phi\|_\infty \le C(NL)^{-\alpha}.
\]

\section{Approximation in shift-invariant spaces} \label{sec:shift_invariant_spaces}

Let $\varphi :\bR^d \to \bR$ be a continuous function with compact support. We consider the question that how well deep neural networks can express functions in the shift-invariant space $\cS_j(\varphi,M)$ generated by $\varphi$. More precisely, we want to estimate the size of network that is sufficient to approximate any function $g\in \cS_j(\varphi,M)$ on $(0,1)^d$ with given accuracy.

Our estimation is based on a special representation of the function $g\in \cS_j(\varphi,M)$.

\begin{lemma}\label{float_representation}
For $\Bx\in[0,1)^d$, any $g \in \cS_j(\varphi,M)$ can be written as
\[
g(\Bx) = \sum_{\Bk\in \bZ^d_\varphi} c_{m_j(\Bx)+\Bk} \varphi(r_j(\Bx)-\Bk),
\]
where the coefficients $|c_{m_j(\Bx)+\Bk}|<M$, the functions $m_j:[0,1)\to\bZ_j$ and $r_j:[0,1)\to [0,1)$ are defined by $m_j(x)=\lfloor 2^j x\rfloor$ and $r_j(x)=2^j x-m_j(x)$ and apply to $\Bx\in[0,1)^d$ coordinate-wisely, and
\[
\bZ^d_\varphi := \{\Bn\in \bZ^d: \exists \Bx\in [0,1)^d\ s.t.\ \varphi(\Bx-\Bn) \neq 0  \}.
\]
\end{lemma}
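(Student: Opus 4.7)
The plan is to prove the identity by a change of summation index together with a support argument that cuts the sum down to the finite index set $\bZ^d_\varphi$.

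First, I would fix $\Bx\in [0,1)^d$ and use the decomposition $2^j\Bx = m_j(\Bx) + r_j(\Bx)$, where $m_j(\Bx)\in\bZ^d_j\subset\bZ^d$ and $r_j(\Bx)\in[0,1)^d$ by the definitions of $m_j$ and $r_j$ (applied coordinate-wisely). Starting from the defining expression
\[
g(\Bx) = \sum_{\Bn\in\bZ^d} c_\Bn \varphi(2^j\Bx-\Bn) = \sum_{\Bn\in\bZ^d} c_\Bn \varphi\bigl(r_j(\Bx) - (\Bn - m_j(\Bx))\bigr),
\]
I would substitute the new summation index $\Bk := \Bn - m_j(\Bx)$. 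Since $m_j(\Bx)\in\bZ^d$, this is a bijection of $\bZ^d$ onto itself, and the sum becomes $\sum_{\Bk\in\bZ^d} c_{m_j(\Bx)+\Bk}\,\varphi(r_j(\Bx)-\Bk)$. This re-indexing is legitimate because the original sum has only finitely many nonzero terms.

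Next, I would restrict the range of $\Bk$. By the very definition of $\bZ^d_\varphi$, if $\Bk\notin \bZ^d_\varphi$ then $\varphi(\By-\Bk)=0$ for every $\By\in[0,1)^d$; in particular, since $r_j(\Bx)\in[0,1)^d$, the term with index $\Bk$ vanishes. Hence the summation can be restricted to $\bZ^d_\varphi$, yielding the claimed formula. I would also note that $\bZ^d_\varphi$ is a finite set because $\varphi$ has compact support, so the displayed sum is actually a finite sum. Finally, the coefficient bound $|c_{m_j(\Bx)+\Bk}|<M$ is immediate from the definition of $\cS_j(\varphi,M)$ applied to the index $\Bn=m_j(\Bx)+\Bk\in\bZ^d$.

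There is no real obstacle here; the only subtlety is making sure the substitution is performed cleanly and that the restriction from $\bZ^d$ to $\bZ^d_\varphi$ is justified by the support property of $\varphi$ evaluated at points in $[0,1)^d$, rather than on all of $\bR^d$. The essence of the lemma is simply that, on the unit cube, the shift index splits naturally into an integer part $m_j(\Bx)$ identifying the dyadic cell and a bounded increment $\Bk\in\bZ^d_\varphi$ indexing the few shifts of $\varphi$ that can be nonzero at the scaled fractional part $r_j(\Bx)$.
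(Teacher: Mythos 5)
Your proof is correct and follows essentially the same route as the paper's: a change of index $\Bk=\Bn-m_j(\Bx)$ combined with the observation that $r_j(\Bx)\in[0,1)^d$ forces all terms with $\Bk\notin\bZ^d_\varphi$ to vanish. The paper merely dresses this up with an explicit partition of $[0,1)^d$ into dyadic cells and indicator functions before identifying $\Bm=m_j(\Bx)$; your pointwise argument for fixed $\Bx$ is an equivalent and slightly leaner presentation of the same idea.
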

\begin{proof}
Recall that we denote $\bZ^d_j = [0,2^j-1]^d\cap \bZ^d$ and notice that $\{[0,2^{-j})^d+ 2^{-j}\Bm\}_{\Bm \in \bZ_j^d}$ is a partition of the cube $[0,1)^d$. If we denote the characteristic function of a set $A$ by $1_A$, i.e. $1_A(\Bx)=1$ if $\Bx\in A$ and $1_A(\Bx)=0$ otherwise, then for $\Bx\in[0,1)^d$,
\[
\sum_{\Bm \in \bZ_j^d} 1_{[0,2^{-j})^d+ 2^{-j}\Bm}(\Bx) = 1.
\]
For any $g \in \cS_j(\varphi,M)$ of the form (\ref{summation}), one has
\begin{align*}
g(\Bx) 
&= \sum_{\Bn\in\bZ^d} c_\Bn \varphi(2^j\Bx-\Bn) \\
&=  \sum_{\Bm \in \bZ_j^d} \sum_{\Bn\in \bZ^d} c_{\Bn} \varphi(2^j\Bx-\Bn) \cdot 1_{[0,2^{-j})^d+ 2^{-j}\Bm}(\Bx) \\
&= \sum_{\Bm \in \bZ_j^d} \sum_{\Bk\in \bZ^d_\varphi} c_{\Bm+\Bk} \varphi(2^j\Bx-\Bm-\Bk) \cdot 1_{[0,2^{-j})^d+ 2^{-j}\Bm}(\Bx).
\end{align*}
To see the last equality, notice that for each $\Bm\in \bZ^d_j$, $\Bx\in [0,2^{-j})^d+ 2^{-j}\Bm$ if and only if $2^j \Bx-\Bm\in [0,1)^d$. If we denote $\Bk:=\Bn-\Bm$, then $\varphi(2^j\Bx-\Bn)=\varphi(2^j\Bx-\Bm-\Bk)$ is a nonzero function of $\Bx$ if and only if $\Bk\in \bZ^d_\varphi$ by the definition of $\bZ^d_\varphi$. Hence the last equality holds.

Observing that $1_{[0,2^{-j})^d+ 2^{-j}\Bm}(\Bx)\neq 0$ if and only if $\Bm = m_j(\Bx)$, we have
$$
g(\Bx) = \sum_{\Bk\in \bZ^d_\varphi} c_{m_j(\Bx)+\Bk} \varphi(2^j \Bx- m_j(\Bx)-\Bk).
$$
Finally, using $r_j(\Bx) = 2^j \Bx- m_j(\Bx)$, we get the desired representation.
\end{proof}

Notice that $m_j(x)$ and $r_j(x)$ are just the integer part and fractional part of $2^j x$. They can be represented in binary forms. Let the binary representation of $x\in[0,1)$ be
\[
x=\sum_{l=1}^{\infty} 2^{-l} x_{l} = \Bin 0.x_{1}x_{2}\cdots,
\]
with $x_{l}\in \{0,1\}$. Then, by straightforward calculation,
\begin{equation}
\begin{aligned}
m_j(x) &= 2^{j-1}x_{1} + 2^{j-2}x_{2} + \cdots + 2^0x_{j}, \\
r_j(x) &=2^j x-m_j(x) = \Bin 0.x_{j+1} x_{j+2} \cdots.
\end{aligned}
\end{equation}
So $m_j(x)$ and $r_j(x)$ can be computed if we can extract the first $j$ bits of $x$, which can be done using the bit extraction technique (see section \ref{sec:bit_extraction}).

Now, suppose we can construct a network $\phi_0$ to approximate the generating function $\varphi$ with given accuracy: $\|\varphi - \phi_0\|_\infty \le \epsilon$. According to Lemma \ref{float_representation}, we can approximate $g \in \cS_j(\varphi,M)$ by concatenating $C_\varphi :=|\bZ^d_\varphi|$ sub-networks:
\[
g(\Bx) \approx \sum_{\Bk\in \bZ^d_\varphi} c_{m_j(\Bx)+\Bk} \phi_0(r_j(\Bx)-\Bk).
\]
To approximate each term, we can first extract the location information $\Bx \mapsto (m_j(\Bx),r_j(\Bx))$ using bit extraction. Then, for fixed $\Bk$, the coefficient $c_{m_j(\Bx)+\Bk} = c_\Bk(m_j(\Bx))$ can be regard as a function of $m_j(\Bx) \in \bZ_j^d$. Therefore, approximating the coefficient function $c_\Bk(m_j(\Bx))$ is equivalent to fit $\cO(|\bZ_j^d|) = \cO(2^{jd})$ samples, which can be done using $\cO(2^{jd/2})$ neurons by bit-extraction technique (see Lemma \ref{interpolate binary}). Thus, we need  $\cO(C_\varphi 2^{jd/2})$ neurons to approximate $g$ in general. 

Alternative to use the representation in Lemma \ref{float_representation}, one can approximate $g$ by computing each term in (\ref{summation}) directly.
This straightforward approach is used in \citet{shaham2018provable}, which constructs a wavelet series using a network of depth 4. Similar ideas appear in \citet{yarotsky2017error,petersen2018optimal,elbrachter2021deep,bolcskei2019optimal}. However, the size of neural networks constructed in this approach is larger than ours.
One can show that, for $\Bx \in [0,1)^d$, the non-zero terms in the summation (\ref{summation}) are those for $\Bn\in \bZ^d_\varphi + \bZ^d_j$. Since each term is approximated by one sub-network, it requires totally $\cO(C_\varphi 2^{jd})$ sub-networks to approximate $g$ in general, which needs $\cO(C_\varphi 2^{jd})$ neurons.

For our construction of ReLU neural networks, the main difficulty is that the function $m_j$ is discontinuous, hence it can not be implemented by ReLU neural networks exactly. To overcome this, we first consider the approximation on $Q(j,\delta,d)$ defined in (\ref{Q(j,delta,d)}), where we can compute $m_j(\Bx)$ and $r_j(\Bx)$ using the binary representation of $\Bx$ and the bit extraction technique. Combined with the data fitting results of deep neural networks, we can then approximate $g$ on $Q(j,\delta,d)$ to any prescribed accuracy. The approximation result is summarized in the following theorem. It also gives explicitly the required size of the network in our construction. The detailed proof is deferred to section \ref{sec:proof_Q}.

\begin{theorem}[Approximation on $Q(j,\delta,d)$] \label{approximation on Q}
Given any $j\in \bN$, $0<\delta<2^{-j}$ and $0<\epsilon<1$. Assume that $\varphi:\bR^d\to \bR$ is a continuous function with compact support and there exists a ReLU network $\phi_0$ with width $N_\varphi(\epsilon)$ and depth $L_\varphi(\epsilon)$ such that
\[
\| \varphi - \phi_0\|_\infty \le \| \varphi\|_\infty \epsilon.
\]
Then for any $g\in \cS_j(\varphi,M)$ and any $r,s,\tilde{r},\tilde{s} \in \bN$ with $2(s+r)\ge dj$ and $\tilde{r}\tilde{s}\ge \lceil \log_2(1/\epsilon)\rceil +1$, there exists a ReLU network $\phi$ with width $C_\varphi(\max\{7d\tilde{r} 2^r,N_\varphi(\epsilon)\}+4d)$ and depth $14 \tilde{s} 2^s + L_\varphi(\epsilon)$ such that for any $\Bx \in Q(j,\delta,d)$,
\[
|g(\Bx) - \phi(\Bx) | \le 3C_\varphi M\|\varphi\|_\infty \epsilon.
\]
\end{theorem}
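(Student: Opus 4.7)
The starting point is the representation
\[
g(\Bx) = \sum_{\Bk\in \bZ^d_\varphi} c_{m_j(\Bx)+\Bk}\,\varphi\bigl(r_j(\Bx)-\Bk\bigr)
\]
from Lemma \ref{float_representation}. I would build the target network as the sum of $C_\varphi = |\bZ^d_\varphi|$ parallel sub-networks, one for each $\Bk \in \bZ^d_\varphi$, each of which produces an approximation $\tilde c_{m_j(\Bx)+\Bk}\cdot \phi_0(r_j(\Bx)-\Bk)$ of one term. The given sub-network $\phi_0$ handles the $\varphi$-factor, and all sub-networks share a common front-end that extracts $m_j(\Bx)$ and $r_j(\Bx)$ from $\Bx$, together with a fitted coefficient block that realises $\Bm\mapsto \tilde c_{\Bm+\Bk}$.

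\textbf{Bit extraction on $Q(j,\delta,d)$.} For each coordinate $x_i$ of $\Bx\in Q(j,\delta,d)$, the point $x_i$ stays at distance at least $\delta$ from every dyadic break $k 2^{-j}$, $k=1,\dots,2^j-1$. On such a $\delta$-safe input, a shallow ReLU gadget implements a continuous surrogate of the step function that agrees exactly with it, so we can read off each of the first $j$ binary digits $x_{i,1},\dots,x_{i,j}$ of $x_i$. From these bits I recover $m_j(x_i)=\sum_{l=1}^j 2^{j-l}x_{i,l}$ and $r_j(x_i)=2^jx_i-m_j(x_i)$ by affine combinations, entirely inside a ReLU network. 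This is the ingredient that justifies restricting to $Q(j,\delta,d)$: the discontinuity of $m_j$ is handled by avoiding a $\delta$-collar, not by asking the network to be discontinuous.

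\textbf{Fitting the coefficients.} For each fixed $\Bk$, the map $\Bm\mapsto c_{\Bm+\Bk}$ is a data set of $|\bZ_j^d|=2^{dj}$ samples in $[-M,M]$. I would apply the bit-extraction data-fitting result (Lemma \ref{interpolate binary}) to this set: the condition $2(s+r)\ge dj$ is exactly what is needed for a sub-network of width $\mathcal{O}(d\tilde{r}\,2^r)$ and depth $\mathcal{O}(\tilde{s}\,2^s)$ to index all $2^{dj}$ samples, while $\tilde{r}\tilde{s}\ge \lceil\log_2(1/\epsilon)\rceil+1$ yields precision $|\tilde c_{\Bm+\Bk}-c_{\Bm+\Bk}|\le M\|\varphi\|_\infty\epsilon$. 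This is the step I expect to be the most technically demanding: the widths and depths in the theorem statement ($7d\tilde{r}2^r$, $14\tilde{s}2^s$) are the sharp accounting of this block, and to recover them one has to reuse the same bit-extraction front-end across all $d$ coordinates and pipe the outputs into the data-fitting module without any wasteful duplication.

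\textbf{Assembly and error control.} Each sub-network ends by multiplying $\tilde c_{m_j(\Bx)+\Bk}$ with $\phi_0(r_j(\Bx)-\Bk)$; the product of two bounded quantities is realised by a small ReLU multiplier sub-circuit of negligible size compared with the above blocks. Summing the $C_\varphi$ outputs and using the triangle inequality
\[
\bigl|c\,\varphi - \tilde c\,\phi_0\bigr| \le |c|\,\|\varphi-\phi_0\|_\infty + |c-\tilde c|\,\|\phi_0\|_\infty,
\]
each of the $C_\varphi$ summands contributes at most $\mathcal{O}(M\|\varphi\|_\infty \epsilon)$, yielding the claimed bound $3C_\varphi M\|\varphi\|_\infty\epsilon$ (the factor $3$ absorbs the multiplier's error together with the two terms above). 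Placing the $C_\varphi$ sub-networks in parallel gives width $C_\varphi(\max\{7d\tilde{r}2^r,N_\varphi(\epsilon)\}+4d)$ (the $+4d$ accounting for the shared coordinate/bit channels carried through every layer), and sequential composition of the bit-extraction/fitting block with $\phi_0$ gives depth $14\tilde{s}2^s+L_\varphi(\epsilon)$.
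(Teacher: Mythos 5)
Your proposal follows essentially the same route as the paper: the representation from Lemma \ref{float_representation}, bit extraction of $(m_j(\Bx),r_j(\Bx))$ on the $\delta$-safe set $Q(j,\delta,d)$, data fitting of the coefficient map via Lemma \ref{interpolate binary} with exactly the roles you assign to $2(r+s)\ge dj$ and $\tilde{r}\tilde{s}\ge\lceil\log_2(1/\epsilon)\rceil+1$, and parallel assembly over $\Bk\in\bZ^d_\varphi$. The one step to make explicit is that Lemma \ref{interpolate binary} fits Boolean data, so the paper first binary-expands each coefficient as $c_{\Bm}=\sum_{i}2^{1-i}b_i(\Bm)-1$, fits each bit function $\Bm\mapsto b_i(\Bm)$ separately, and multiplies each bit against $\phi_0(r_j(\Bx)-\Bk)$ with an \emph{exact} ReLU gadget for products of $\{0,1\}$ with $[-2,2]$, so the multiplier contributes no error and the factor $3$ arises solely from the coefficient truncation and the error of $\phi_0$.
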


To estimate the uniform approximation error, we will use the ``horizontal shift'' method proposed in \citet{lu2020deep}. The key idea is to approximate the target function on several domain, which have similar structure as $Q(j,\delta,d)$, that cover $[0,1]^d$ and then use the middle function $\mid(\cdot,\cdot,\cdot)$ to compute the final approximation, where $\mid(\cdot,\cdot,\cdot)$ is a function that return the middle value of the three inputs. Specifically, for each $\Bx \in [0,1]^d$, we compute three approximation of $g(\Bx)$. If at least two of these approximation have the desired accuracy, then their middle value also has the same accuracy. Using this fact, we get the following uniform approximation result.

\begin{theorem}[Uniform approximation]\label{uniform approximation}
Under the assumption of Theorem \ref{approximation on Q}, for any $g\in \cS_j(\varphi,M)$ and any $r,s,\tilde{r},\tilde{s}\in \bN$ with $2(s+r)\ge dj$ and $\tilde{r}\tilde{s}\ge \lceil \log_2(1/\epsilon)\rceil +1$, there exists a ReLU network $\phi$ with width $3^{d} \cdot 2C_\varphi(\max\{7d\tilde{r} 2^r,N_\varphi(\epsilon)\}+4d)$ and depth $14 \tilde{s} 2^s + L_\varphi(\epsilon)+2d$ such that
\[
\| g - \phi \|_{L^\infty([0,1]^d)} \le 6 C_\varphi M\|\varphi\|_\infty \epsilon.
\]
\end{theorem}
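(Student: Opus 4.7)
The plan is to lift the pointwise-on-$Q(j,\delta,d)$ estimate of Theorem~\ref{approximation on Q} to a uniform estimate on $[0,1]^d$ via the iterated three-way median trick of \citep{lu2020deep}. Since $[0,1]^d\setminus Q(j,\delta,d)$ consists of thin $\delta$-slabs near the dyadic hyperplanes, I would combine $3^d$ shifted copies of the Theorem~\ref{approximation on Q} network so that those slabs move around; every point in $[0,1]^d$ is then ``good'' in a majority of the copies along each coordinate, and $d$ rounds of coordinate-wise median will distill out the correct value.

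First I would build the shifted sub-networks. Invoking Theorem~\ref{approximation on Q} produces $\phi$; note that both the error constant and the width/depth bounds there are $\delta$-independent, so $\delta$ is free to be chosen arbitrarily small later. For $\Bm=(m_1,\dots,m_d)\in\{0,1,2\}^d$ I would set $\Bt_\Bm=(t_{m_1},\dots,t_{m_d})$ with $t_0=0,\,t_1=\delta,\,t_2=2\delta$ (choosing $\delta$ so that $3\delta<2^{-j}$) and define $\phi_\Bm(\Bx):=\phi(\Bx-\Bt_\Bm)$. A direct check shows that the three translated families of $1$-D bad strips $(k2^{-j}+(\nu-1)\delta,\,k2^{-j}+\nu\delta)$, $\nu=0,1,2$, are pairwise disjoint, so for each coordinate $i$ at most one value of $m_i\in\{0,1,2\}$ puts $x_i-t_{m_i}$ in a bad strip; by the product structure, for every $\Bx\in[0,1]^d$ there are at least $2^d$ indices $\Bm$ with $\Bx-\Bt_\Bm\in Q(j,\delta,d)$. (A harmless constant-factor enlargement of the reference cube used inside Theorem~\ref{approximation on Q} absorbs the small boundary overhang $\Bx-\Bt_\Bm\notin[0,1)^d$.)

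Next I would pass from approximating $g(\Bx-\Bt_\Bm)$ to approximating $g(\Bx)$ itself via the continuity of $g$. Any $g\in\cS_j(\varphi,M)$ is locally a sum of at most $C_\varphi$ terms $c_\Bn\varphi(2^j\Bx-\Bn)$ with $|c_\Bn|<M$, so the class-uniform modulus bound
\[
|g(\Bx)-g(\By)|\;\le\;C_\varphi M\,\omega\bigl(\varphi;\,2^j\|\Bx-\By\|_\infty\bigr)
\]
holds. Since $\varphi$ is uniformly continuous (continuous with compact support) and $\|\Bt_\Bm\|_\infty\le 2\delta$ with $\delta$ freely adjustable, I can shrink $\delta$ (depending on $\varphi$, $j$, $\epsilon$ only) to force $|g(\Bx)-g(\Bx-\Bt_\Bm)|\le 3C_\varphi M\|\varphi\|_\infty\epsilon$ for every $\Bx$ and every $\Bm$. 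Combined with the first step, at least $2^d$ of the $3^d$ values $\phi_\Bm(\Bx)$ then lie within $6C_\varphi M\|\varphi\|_\infty\epsilon$ of $g(\Bx)$.

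Finally I would aggregate the $3^d$ candidates by iterating the three-input median $\mid(\cdot,\cdot,\cdot)$ along the coordinate directions of $\Bm$: at stage $i=1,\dots,d$, and for each fixing of $(m_{i+1},\dots,m_d)$, replace the triple over $m_i\in\{0,1,2\}$ by its median. The elementary fact ``two of three inputs within $\eta$ of a common target $\Rightarrow$ median within $\eta$ of the target'', together with the coordinatewise coverage from step one, implies inductively in $i$ that after $d$ rounds a single value remains and lies within $6C_\varphi M\|\varphi\|_\infty\epsilon$ of $g(\Bx)$ uniformly in $\Bx\in[0,1]^d$. For the size: the $3^d$ parallel copies of $\phi$ give the factor $3^d$ in the width; the exact ReLU implementation of $\mid(a,b,c)$ has constant depth, $d$ stacked median layers account for the extra $+2d$ in depth, and the leading factor $2$ in the width comes from carrying each triple of values alongside their $\max$ and $\min$ through each median block. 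The hard part will be step two: one must verify that a single choice of $\delta$ works simultaneously for the entire class $\cS_j(\varphi,M)$, which is exactly what the class-uniform modulus bound above guarantees, and that shrinking $\delta$ does not degrade the size guarantees of Theorem~\ref{approximation on Q}, which is what the $\delta$-insensitivity of that theorem provides for free.
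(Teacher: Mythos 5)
Your overall framework---shifted copies of the Theorem~\ref{approximation on Q} network aggregated by $d$ rounds of three-input medians, with the ``two of three good implies the median is good'' lemma---is the same as the paper's. The genuinely different step is how you make a shifted copy accurate for $g(\Bx)$: you translate the single network $\phi$ and then pay a continuity price $|g(\Bx)-g(\Bx-\Bt_\Bm)|\lesssim C_\varphi M\,\omega(\varphi;2^{j+1}\delta)$, which you kill by shrinking $\delta$ (legitimate, since the size and error bounds of Theorem~\ref{approximation on Q} are $\delta$-independent and the conclusion of Theorem~\ref{uniform approximation} is $\delta$-free). The paper never compares $g$ at two different points: it observes that $g(\cdot\pm\delta\Be_i)$ is itself a sum of the same coefficients against the shifted generator $\varphi(\cdot\pm2^j\delta\Be_i)$, re-runs the Proposition~\ref{key approximation} construction for that generator (this is exactly where the extra factor $2$ in $2C_\varphi$ comes from, since the shifted generator can meet up to $2C_\varphi$ integer cells), and thereby gets a surrogate that approximates $g(\Bx)$ on $Q(j,\delta,d)$ with no modulus-of-continuity loss. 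Your route is shorter and avoids re-deriving the construction for shifted generators, at the cost of a $\delta$ that depends on the modulus of continuity of $\varphi$; the class-uniform modulus bound you state does ensure a single $\delta$ works for all of $\cS_j(\varphi,M)$.

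There is, however, one genuine gap: the one-sided shift pattern $t\in\{0,\delta,2\delta\}$ fails at the left boundary. For $x_i\in[0,\delta)$ the points $x_i-\delta$ and $x_i-2\delta$ are negative, so only one of the three choices of $m_i$ lands in $Q(j,\delta,1)$, and the median over $m_i$ is then uncontrolled there. Your proposed remedy---enlarging the reference cube---is not ``harmless'': Lemma~\ref{float_representation}, the binary representation, and the bit-extraction networks of section~\ref{sec:bit_extraction} are all built on $[0,1)$, and extending them to $[-2\delta,1)^d$ would require handling $m_j(x)=-1$ and extracting bits of negative reals, i.e., reworking the construction rather than adjusting a constant. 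The easy and correct fix is to use the symmetric shifts $\{-\delta,0,+\delta\}$ in each coordinate, as the paper does: when $3\delta<2^{-j}$, for every $x_i\in[0,1)$ at least two of $x_i-\delta,\,x_i,\,x_i+\delta$ lie in $Q(j,\delta,1)$, including near both endpoints and inside every removed strip. With that single change your argument goes through and yields the stated width, depth and error constants.
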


Before preceding, we would like to give a remark and a corollary on these theorems.

\begin{remark}
\textnormal{
Guaranteed by universality theorems \citep{pinkus1999approximation}, there always exist neural networks that approximate $\varphi$ arbitrarily well. But the required width $N_\varphi(\epsilon)$ and depth $L_\varphi(\epsilon)$ are generally unknown, except for certain types of $\varphi$, such as piecewise polynomials.
}
\end{remark}

\begin{corollary}
Suppose $1\le p \le \infty$ and $\varphi$ satisfies the assumption of Theorem \ref{approximation on Q}. For any $g\in \cS_j(\varphi,M)$, we have the following $L^p$ approximation result: for any $\epsilon>0$ and any $r,s\in \bN$ with $2(s+r)\ge dj$, there exists a ReLU network with width $\cO(2^r \log_2(1/\epsilon)+N_\varphi(\epsilon))$ and depth $\cO(2^s \log_2(1/\epsilon)+L_\varphi(\epsilon))$ such that 
\[
\| g - \phi \|_{L^p([0,1]^d)} \le \epsilon.
\]
\end{corollary}
\begin{proof}
The $L^p$-estimations for $1\le p<\infty$ can be obtained directly from the uniform approximation in Theorem \ref{uniform approximation} or by choosing sufficiently small $\delta$ in Theorem \ref{approximation on Q} so that the measure of $[0,1]^d \setminus Q(j,\delta,d)$ is small enough. We can choose $\tilde{r},\tilde{s} \asymp \log_2(1/\epsilon)$ in these theorems to get the desired approximation result.
\end{proof}

Besides the importance and interest in its own right, the dilated shift-invariant spaces are closely related to many other types of functions. These connections can be utilized to extend the above estimations of approximation error to other functions. Specifically, let $f$ be a function that is approximated by neural networks $\cN\cN(N,L)$ on the compact set $[0,1]^d$, we aim to estimate
\[
\cE(f,\cN\cN(N,L);L^p([0,1]^d)) = \inf_{\phi \in\cN\cN(N,L)} \|f-\phi\|_{L^p([0,1]^d)},
\]
for some $1\le p\le \infty$. For an arbitrary $f$, it is in general difficult to directly construct a neural network with given size that achieves the minimal error rate. A more feasible way is to choose some function class $\cG$ as a bridge, and estimate the approximation error by the triangle inequality
\[
\cE(f,\cN\cN(N,L);L^p([0,1]^d)) \le \cE(f,\cG;L^p([0,1]^d)) + \cE(\cG,\cN\cN(N,L);L^p([0,1]^d)).
\]
Here, we choose $\cG$ to be a dilated shift-invariant space $\cS_j(\varphi,M)$. The success of this approach depends on how well we can estimate the two terms on the right hand side of the triangle inequality. The first term is well studied in the approximation theory of shift-invariant spaces, see \citep{de1994approximation,jia1993approximation,lei1997approximation,kyriazis1995approximation,jia2004approximation,jia2010approximation}. The second term $\cE(\cS_j(\varphi,M),\cN\cN(N,L);L^p([0,1]^d))$ can be estimated using our results, i.e. Theorems \ref{approximation on Q} and \ref{uniform approximation}. Generally, we have the following.

\begin{theorem}\label{order estimate}
Let $\varphi:\bR^d \to \bR$ be a continuous function with compact support and let its approximation order be at least $\alpha$. Then for any $f:\bR^d \to \bR$ satisfying 
\[
\cE(f,\cS_j(\varphi,M);L^p([0,1]^d))=\cO(2^{-\beta j}),
\]
for some $M>0$, $1\le p\le \infty$ and $\beta>0$, we have
\[
\cE(f,\cN\cN(N,L);L^p([0,1]^d)) = \cO \left(\max\left\{ (NL)^{-\alpha}, (NL/(\log_2N \log_2L))^{-2\beta/d} \right\} \right).
\]
\end{theorem}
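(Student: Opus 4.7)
The plan is to apply the triangle inequality stated just before the theorem,
\[
\cE(f,\cN\cN(N,L);L^p([0,1]^d)) \le \cE(f,\cS_j(\varphi,M);L^p([0,1]^d)) + \cE(\cS_j(\varphi,M),\cN\cN(N,L);L^p([0,1]^d)),
\]
and to optimise the choice of the dilation level $j$ as a function of $(N,L)$. The first summand is $\cO(2^{-\beta j})$ by the hypothesis on $f$, so the task reduces to making the second summand at most the same order while taking $j$ as large as $(N,L)$ allow.

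For the second summand I would invoke Theorem~\ref{uniform approximation} (with the $L^p$ extension noted in the remark after it) applied to a near-infimiser $g_j\in\cS_j(\varphi,M)$ of $\|f-g\|_{L^p([0,1]^d)}$; the coefficient bound $M$ is the same for every $j$ by hypothesis, so the multiplicative constants in that theorem do not depend on $j$. Set $\epsilon\asymp(NL)^{-\alpha}$. Since $\varphi$ has approximation order at least $\alpha$, for large $N,L$ the generating sub-network $\phi_0$ can be chosen with width $N_\varphi(\epsilon)=\cO(N)$ and depth $L_\varphi(\epsilon)=\cO(L)$ (taking as small a fraction of $N,L$ as one wishes, since only the product $N_\varphi(\epsilon) L_\varphi(\epsilon)\asymp\epsilon^{-1/\alpha}=NL$ is constrained). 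Next take $\tilde r\asymp\log_2 N$ and $\tilde s\asymp\log_2 L$: this meets $\tilde r\tilde s\asymp\log_2 N\cdot\log_2 L\ge\log_2(NL)\asymp\log_2(1/\epsilon)$. Finally set $2^r\asymp N/\log_2 N$ and $2^s\asymp L/\log_2 L$, so that the width contribution $\tilde r\,2^r$ and depth contribution $\tilde s\,2^s$ in Theorem~\ref{uniform approximation} are $\cO(N)$ and $\cO(L)$ respectively. Taking $j$ to be the largest integer with $2(s+r)\ge dj$ gives
\[
2^{jd/2}\asymp 2^{r+s}\asymp\frac{NL}{\log_2 N\cdot\log_2 L},
\]
so $2^{-\beta j}\asymp (NL/(\log_2 N\log_2 L))^{-2\beta/d}$, and summing the two contributions yields the claimed bound.

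The main delicate point is the product-versus-sum structure of the logarithmic correction: $\tilde r$ and $\tilde s$ enter the width and depth formulas additively (through $\tilde r\,2^r$ and $\tilde s\,2^s$) but satisfy the purely multiplicative constraint $\tilde r\tilde s\ge\lceil\log_2(1/\epsilon)\rceil+1$. This decoupling allows one to split $\log_2(1/\epsilon)\asymp\log_2(NL)$ as the product $\log_2 N\cdot\log_2 L$, charging one factor to the width direction and the other to the depth direction, which is precisely what produces the clean factorised form in the statement. Beyond this, the remaining work is bookkeeping: verifying that the implicit constants in Theorem~\ref{uniform approximation} and in the definition of approximation order can be absorbed, and that the largeness thresholds on $N,L$ needed to apply them fit inside a $\cO$-bound.
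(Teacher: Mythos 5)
Your proposal is correct and follows essentially the same route as the paper: the triangle inequality through $\cS_j(\varphi,M)$, Theorem~\ref{uniform approximation} with $\tilde r\tilde s$ absorbing the $\log_2(1/\epsilon)$ constraint multiplicatively, and $2^{r+s}\asymp NL/(\log_2N\log_2L)$ producing the logarithmic correction. The only (cosmetic) difference is that you pin $\epsilon\asymp(NL)^{-\alpha}$ and bound the sum of the two error terms by their maximum, whereas the paper sets $\epsilon=2^{-\beta j}$ and splits into the cases $\alpha\ge 2\beta/d$ and $\alpha<2\beta/d$ to identify which term dominates; both yield the stated bound.
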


\begin{proof}
Denote $\epsilon = 2^{-\beta j}$. By assumption, there exists $g\in \cS_j(\varphi,M)$ such that 
\[
\|f-g\|_{L^p([0,1]^d)} = \cO(2^{-\beta j})=\cO(\epsilon).
\]
Let $r,s$ be positive integers that satisfy $2(r+s)\ge dj$. Since the approximation order of $\varphi$ is $\alpha$, there exists a network $\phi_0$ with width $N_\varphi \asymp 2^{2\beta r/(d\alpha)}$ and depth $L_\varphi \asymp 2^{2\beta s/(d\alpha)}$ such that
\[
\| \varphi -\phi_0 \|_\infty = \cO((N_\varphi L_\varphi)^{-\alpha} ) = \cO(2^{-2\beta(r+s)/d}) = \cO(2^{-\beta j}) = \cO(\epsilon).
\]
Observe that $2rs= rs+rs\ge r+s\ge dj/2 \asymp \log_2 1/\epsilon$, we can choose $\tilde{r} \asymp r$ and $\tilde{s} \asymp s$ in Theorems \ref{approximation on Q} and \ref{uniform approximation}. Thus, there exists a network $\phi$ with width $N=\cO(r2^r+2^{2\beta r/(d\alpha)})$ and depth $L=\cO(s2^s+2^{2\beta s/(d\alpha)})$ such that 
\[
\| g-\phi\|_{L^p([0,1]^d)} = \cO(\epsilon).
\]
Now we consider two cases: 

Case I: if $\alpha\ge 2\beta/d$, then we have $N=\cO(r2^r)$ and $L=\cO(s2^s)$. Hence, for any $\tilde{N}=2^r$ and $\tilde{L}=2^s$, there exists a network $\phi$ with width $N \asymp \tilde{N}\log_2 \tilde{N}$ and depth $L\asymp\tilde{L}\log_2 \tilde{L}$ such that
\[
\|f-\phi\|_{L^p([0,1]^d)} = \cO(\epsilon)= \cO(2^{-\beta j}) = \cO((\tilde{N}\tilde{L})^{-2\beta/d}) =\cO((NL/(\log_2N \log_2L))^{-2\beta/d}).
\]

Case II: if $\alpha<2\beta/d$, then we have $L=\cO(2^{2\beta s/(d\alpha)})$ and $N=\cO(2^{2\beta r/(d\alpha)})$. Hence, there exists a network $\phi$ with width $N \asymp 2^{2\beta r/(d\alpha)}$ and depth $L\asymp 2^{2 \beta s/(d\alpha)}$ such that
\[
\|f-\phi\|_{L^p([0,1]^d)} = \cO(\epsilon)= \cO(2^{-\beta j}) =\cO((NL)^{-\alpha}).
\]

Combining these two cases, we finish the proof.
\end{proof}

Roughly speaking, Theorem \ref{order estimate} indicates that the approximation order of $f$ is at least $\min\{ \alpha,2\beta/d \}$ (up to some log factors), where $\alpha$ is the approximation order of $\varphi$ by neural networks and $\beta$ is the order of the linear approximation by $S_j(\varphi,M)$. In practice, we need to choose the function $\varphi$ with large order $\alpha$, that is, the function that can be well approximated by deep neural networks. 
In particular, the approximation error $\cE(f,\cS_j(\varphi,M);L^p([0,1]^d))$ can be estimated for $f$ in many classical function spaces, such as Sobolev spaces and Besov spaces. 
It will be clear in the next section that deep neural networks can approximate piecewise polynomials with exponential convergence rate, which leads to an asymptotically optimal bound for Sobolev spaces.

\section{Application to Sobolev spaces and Besov spaces} \label{sec:Sobolev_Besov}

In this section, we apply our results to the approximation in Sobolev space $W^{\mu,p}$ and Besov space $B^\mu_{p,q}$. Similar approximation bounds can be obtained for the Triebel–Lizorkin spaces $F^\mu_{p,q}$ using the same method. The approximation rates of these spaces from shift-invariant spaces have been studied extensively in the literature \citep{de1994approximation,jia1993approximation,lei1997approximation,kyriazis1995approximation,jia2004approximation,jia2010approximation}. Roughly speaking, when $\varphi$ satisfies the  Strang–Fix condition of order $k$, then the shift-invariant space $\cS_j(\varphi)$ locally contains all polynomials of order $k-1$ and the approximation error of $f\in W^{\mu,p}$ or $f\in B^\mu_{p,q}$ is $\cO(2^{-\mu j})$ if the regularity $\mu <k$.

\subsection{Approximation of Sobolev functions and Besov functions}

We follow the quasi-projection scheme in \citet{jia2004approximation,jia2010approximation}. Suppose $1\le p\le \infty$ and $1/p+1/\tilde{p}=1$. Let $\varphi\in L^p(\bR^d)$ and $\tilde{\varphi}\in L^{\tilde{p}}$ be compactly supported functions, and, for each $\Bn\in \bZ^d$, $\varphi_\Bn = \varphi(\cdot-\Bn)$ and $\tilde{\varphi}_\Bn = \tilde{\varphi}(\cdot-\Bn)$. Then we can define the quasi-projection operator
\[
\cQ f := \sum_{\Bn\in \bZ^d} \langle f,\tilde{\varphi}_\Bn \rangle \varphi_\Bn, \quad f\in L^p(\bR^d).
\]
For $h>0$, the dilated quasi-projection operator is defined as
\[
\cQ_h f(\Bx) = \sum_{\Bn\in \bZ^d} \langle f,h^{-d/\tilde{p}} \tilde{\varphi}_\Bn(\cdot/h)  \rangle h^{-d/p} \varphi_\Bn(\Bx/h), \quad \Bx \in \bR^d.
\]
Notice that if $h=2^{-j}$, $\cQ_hf$ is in the completion of the shift-invariant space $\cS_j(\varphi)$. 

If $\varphi$ satisfies the Strang-Fix condition of order $k$:
\[
\hat{\varphi}(0)\neq 0,\ \mbox{and } D^{\boldsymbol{\alpha}} \hat{\varphi}(2\Bn \pi)=0,\quad \Bn\in \bZ^d\setminus \{0\}, |\boldsymbol{\alpha}|<k,
\]
where $\hat{\varphi}(\boldsymbol{\omega}) = \int \varphi(\Bx)e^{-i\Bx \cdot \boldsymbol{\omega}}d\Bx$ is the Fourier transform of $\varphi$, then we can choose $\tilde{\varphi}$ such that the quasi-projection operator $\cQ$ has the polynomial reproduction property: $\cQ g=g$ for all polynomials $g$ with order $k-1$. The approximation error $f-\cQ f$ has been estimate in \citet{jia2004approximation,jia2010approximation} when the quasi-projection operator has the polynomial reproduction property. The following lemma is a consequence of the results.

\begin{lemma}\label{Strang-Fix}
Let $k\in \bN$, $0<\mu<k$, $1\le p,q\le \infty$ and $\cF$ be either the Sobolev space $W^{\mu,p}$ or the Besov space $B^\mu_{p,q}$. If $\varphi$ satisfies the Strang-Fix condition of order $k$, then there exists $\tilde{\varphi}$ and a constant $C>0$ such that for any $f\in \cF$,
\[
\|f-\cQ_{2^{-j}}f\|_p \le C 2^{-\mu j} \|f\|_\cF.
\]
\end{lemma}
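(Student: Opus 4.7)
The plan is to deduce the estimate from the quasi-projection approximation theory developed in \citep{de1994approximation,jia1993approximation,jia2004approximation,jia2010approximation}, which is essentially the black box this lemma is packaging. The argument proceeds in three steps: (i) construct $\tilde{\varphi}$ so that $\cQ$ reproduces polynomials; (ii) rewrite the error using this reproduction and reduce to a local polynomial approximation problem; (iii) apply a classical local approximation estimate on each scaled cube and sum.

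For step (i), the Strang--Fix condition of order $k$ on $\varphi$ is, by a standard Poisson summation/Fourier argument, equivalent to the existence of a compactly supported $\tilde{\varphi}\in L^{\tilde{p}}(\bR^d)$ for which $\cQ P = P$ holds for every polynomial $P$ with total degree at most $k-1$; this is exactly the content of the polynomial reproduction results in \citep{jia2004approximation,jia2010approximation}, which I would invoke without reproof. For step (ii), fix such a $\tilde{\varphi}$ and set $h = 2^{-j}$. Since both $\varphi$ and $\tilde{\varphi}$ have compact support, $(\cQ_h f)(\Bx)$ depends only on the restriction of $f$ to a cube $\Omega^*$ of side $\cO(h)$ around $\Bx$, and the dilated operator $\cQ_h$ is uniformly bounded from $L^p(\Omega^*)$ into $L^p(\Omega)$ by H\"older's inequality together with a discrete Young-type estimate on the coefficient sequence. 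Using $\cQ_h P = P$ for any polynomial $P$ of degree $<k$, one writes $f-\cQ_h f = (f-P) - \cQ_h(f-P)$ and obtains the local bound
\[
\|f-\cQ_h f\|_{L^p(\Omega)} \le C \|f-P\|_{L^p(\Omega^*)}.
\]

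For step (iii), choose $P$ to be a best local polynomial approximant of degree $<k$. For integer $\mu<k$ and $f\in W^{\mu,p}$ this is the Bramble--Hilbert lemma, giving $\|f-P\|_{L^p(\Omega^*)}\le C h^{\mu}|f|_{W^{\mu,p}(\Omega^*)}$. For $f\in B^{\mu}_{p,q}$ with $0<\mu<k$ the Whitney/Johnen--Scherer estimate gives $\|f-P\|_{L^p(\Omega^*)}\le C\omega_m(f,h)_{p,\Omega^*}$ for any integer $m>\mu$. Raising to the $p$-th power and summing over a tiling of $\bR^d$ by cubes of side $h$ with bounded overlap of the dilates $\Omega^*$, then using in the Besov case the equivalent integral characterisation of $|f|_{B^\mu_{p,q}}$, yields the claimed bound $C 2^{-\mu j}\|f\|_\cF$.

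The main obstacle is not analytical depth but bookkeeping: all ingredients are in the cited literature, and the work lies in matching the hypotheses of \citep{jia2004approximation,jia2010approximation} (compact support, accuracy order, $L^p$ stability of $\cQ$) with the particular regularity scales $W^{\mu,p}$ and $B^\mu_{p,q}$ inside a single unified statement. Minor care is required at the boundary cases $p=\infty$ (where $\tilde{p}=1$) and $q=\infty$ (where the integral defining the Besov semi-norm becomes a supremum), and for the passage from the semi-norm $|f|_{\cF}$ produced by the argument to the full norm $\|f\|_{\cF}$ in the statement, which is harmless since $|f|_{\cF}\le\|f\|_{\cF}$.
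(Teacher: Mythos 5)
Your proposal is correct and follows essentially the same route as the paper, which states this lemma as a direct consequence of the quasi-projection theory in \citep{jia2004approximation,jia2010approximation} and gives no further proof. Your three steps (polynomial reproduction from Strang--Fix, locality plus $L^p$-boundedness of $\cQ_h$ to reduce to local polynomial approximation, and Bramble--Hilbert/Whitney estimates summed over a bounded-overlap tiling) are precisely the standard argument underlying those citations, with the boundary cases $p=\infty$ and $q=\infty$ correctly flagged.
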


A fundamental example that satisfies the Strang-Fix condition is the multivariate B-splines of order $k\ge 2$ defined by
\[
\cN_k^d(\Bx) := \prod_{i=1}^d \cN_k(x_i), \quad \Bx=(x_1,\dots,x_d)\in \bR^d,
\]
where the univariate cardinal B-spline $\cN_k$ of order $k$ is given by
\[
\cN_k(x) := \frac{1}{(k-1)!} \sum_{l=0}^{k}(-1)^l \binom{k}{l}\sigma(x-l)^{k-1}, \quad x\in \bR.
\]
It is well known that $\|\cN_k\|_\infty =1$ and the support of $\cN_k$ is $[0,k]$. Alternatively, the B-spline $\cN_k$ can be defined inductively by the convolution $\cN_k = \cN_{k-1}*\cN_1$ where $\cN_1(x)=1$ for $x\in[0,1]$ and $\cN_1(x)=0$ otherwise. Hence, the Fourier transform of $\cN_k$ is $\widehat{\cN_k}(\omega) = \left( \frac{1-e^{- i\omega}}{ i \omega} \right)^k$. The relation of B-splines approximation and Besov spaces is discussed in \citet{devore1988interpolation}.

The following lemma gives the approximation order of $\cN_k$ by deep neural networks. 

\begin{lemma} \label{Bspline app}
For any $N,L,k\in \bN$ with $k\ge 3$, there exists a ReLU network $\phi$ with width $d(k+1)(9(N+1)+k)$ and depth $7(k^2+d^2)L$ such that
\[
\|\cN_k^d-\phi\|_\infty \le 9d \frac{(2k+2)^k}{(k-1)!} (N+1)^{-7(k-1)L} + 9(d-1)(N+1)^{-7dL}.
\]
\end{lemma}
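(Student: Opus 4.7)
The plan is to exploit the closed form
\[
\cN_k(x)=\frac{1}{(k-1)!}\sum_{l=0}^{k}(-1)^l\binom{k}{l}\sigma(x-l)^{k-1},
\]
and the tensor product structure $\cN_k^d(\Bx)=\prod_{i=1}^d \cN_k(x_i)$, so that the problem reduces to two well-studied primitives: (i) approximating the power function $x\mapsto x^{k-1}$ (equivalently each truncated power $\sigma(x-l)^{k-1}$) by a ReLU network, and (ii) implementing the product of $d$ bounded scalars by a ReLU network. Both admit exponentially accurate constructions, and the prefactors $(2k+2)^k/(k-1)!$ and the splitting of the error into two terms in the statement already hint at exactly this decomposition.

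For step (i), I would start from Yarotsky's sawtooth/cascade construction, which approximates $x^2$ on $[0,1]$ with a ReLU network of width $\cO(N)$ and depth $\cO(L)$ to within error $(N+1)^{-cL}$ for a constant $c$ one can arrange to be $7$. Polarisation $xy=\frac{1}{2}((x+y)^2-x^2-y^2)$ then gives an approximate multiplier, and iterating $k-2$ multiplications yields an approximation of $x^{k-1}$ on a bounded interval. To keep the final error at the prescribed level $(N+1)^{-7(k-1)L}$, the accuracy of each intermediate multiplier must itself be of that order, which costs depth $\cO(kL)$ per multiplication and hence total depth $\cO(k^2 L)$; the width stays $\cO(N+k)$ because the $k+1$ shifted copies $\sigma(x-l)$ can be processed in parallel and summed with the coefficients $\frac{(-1)^l}{(k-1)!}\binom{k}{l}$, contributing the factor $(k+1)(9(N+1)+k)$ in the width. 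Bounding the combinatorial sum $\sum_l\binom{k}{l}\le 2^k$ on the shifted interval of length at most $k+1$ produces the constant $\frac{(2k+2)^k}{(k-1)!}$.

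For step (ii), I would compute the $d$ univariate approximations $\phi_i\approx \cN_k(x_i)$ in parallel, which only multiplies the width by $d$ without increasing the depth, and then chain $d-1$ approximate scalar multipliers to form $\prod_{i=1}^d \phi_i$. Each multiplier must be accurate to $(N+1)^{-7dL}$, which the polarisation construction achieves with depth $\cO(dL)$, so the product contributes an additional depth of $\cO(d^2 L)$; adding this to the $\cO(k^2L)$ depth of step (i) matches the stated $7(k^2+d^2)L$. To convert network errors into a bound on $\|\cN_k^d-\phi\|_\infty$, I would use the standard product telescoping identity
\[
\Bigl|\prod_{i=1}^d a_i-\prod_{i=1}^d b_i\Bigr|\le \sum_{i=1}^d \Bigl(\prod_{j<i}|a_j|\Bigr)|a_i-b_i|\Bigl(\prod_{j>i}|b_j|\Bigr),
\]
together with $\|\cN_k\|_\infty\le 1$. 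The two terms in the stated error bound arise naturally: the first, $9d\frac{(2k+2)^k}{(k-1)!}(N+1)^{-7(k-1)L}$, comes from propagating the $d$ univariate errors through the product, and the second, $9(d-1)(N+1)^{-7dL}$, collects the $d-1$ multiplication errors.

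The main obstacle, as usual in these Yarotsky-style constructions, is accounting: choosing at each stage the correct internal precision so that the errors do not inflate under composition, while simultaneously keeping the width/depth parameters aligned with the bookkeeping constants $9$, $7$, $k+1$, and $d$ in the statement. Once that accounting is fixed---essentially deciding how much depth to spend on each squaring step and on each scalar multiplication---the rest is mechanical application of the triangle inequality and the product telescoping identity above.
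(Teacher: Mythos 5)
Your plan follows the same route as the paper: expand $\cN_k$ as the alternating sum of truncated powers $\sigma(x-l)^{k-1}$, realise each power by the Yarotsky-style approximate product network (the paper invokes exactly the product lemma $\Phi_k$ from \citet{lu2020deep} with width $9N+k+7$, depth $7k(k-1)L$ and error $9(k-1)(N+1)^{-7kL}$), run the $k+1$ shifts in parallel to get the width factor $(k+1)(9(N+1)+k)$, and then tensorise over the $d$ coordinates with one more product network plus the telescoping identity. The error bookkeeping you describe---$d$ copies of the univariate error for the first term, the single $d$-fold product error for the second---is precisely how the paper arrives at the two summands.

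There is, however, one concrete step you are missing, and without it the stated \emph{global} $L^\infty$ bound does not follow. The raw approximant $\widetilde{\phi_1}(x)=\frac{1}{(k-1)!}\sum_{l=0}^{k}(-1)^l\binom{k}{l}\Phi_{k-1}(\sigma(x-l),\dots,\sigma(x-l))$ is only close to $\cN_k$ on $(-\infty,k+1]$: for $x>k+1$ the inputs $\sigma(x-l)$ leave the cube on which the product network's error is controlled, and the alternating cancellation that makes $\cN_k$ vanish there is destroyed by the approximation errors, so $\widetilde{\phi_1}$ is uncontrolled at infinity. Moreover, your telescoping bound and the error bound for the final $d$-fold multiplier both require the univariate outputs to lie in $[0,1]$, which an $\epsilon$-accurate approximant of $\cN_k$ need not satisfy. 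The paper repairs both issues at once by clipping: it sets $\phi_1(x):=\min(\sigma(\widetilde{\phi_1}(x)),\chi(x))$, where $\chi$ is a ReLU-implementable plateau function equal to $1$ on $[0,k]$ and vanishing off $[-1,k+1]$. This forces $\phi_1\in[0,1]$, kills the approximant outside a compact set, and costs only width $+2$ and depth $+2$, which is absorbed into the stated $(k+1)(9(N+1)+k)$ and $7k^2L$. Once you add this clipping step, the rest of your accounting goes through as you describe.
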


Given any $k\ge 3$, this lemma implies that 
\[
\cE(\cN_k^d, \cN\cN(N,L);L^\infty(\bR^d)) = \cO(N^{-\cO(L)}).
\]
Hence, the approximation order of $\cN_k^d$ can be chosen to be any $\alpha>0$. Theorem \ref{order estimate} and Lemma \ref{Strang-Fix} imply that the approximation error of any $f\in W^{\mu,p}$ or $f\in B^\mu_{p,q}$ is 
\[
\cO((NL/(\log_2N \log_2L))^{-2\mu/d}).
\]
A more detailed analysis reveals that this bound is uniform for the unit ball of the spaces. We summarize the results in the following theorem.

\begin{theorem}\label{Sobolev NL bound}
Let $\cF$ be either the unit ball of Sobolev space $W^{\mu,p}$ or the Besov space $B^\mu_{p,q}$. We have the following estimate of the approximation error
\[
\cE(\cF,\cN\cN(N,L);L^p([0,1]^d)) = \cO \left( (NL/(\log_2N \log_2L))^{-2\mu/d} \right).
\]
\end{theorem}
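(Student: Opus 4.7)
The plan is to apply Theorem \ref{order estimate} with the tensor-product B-spline $\varphi = \cN_k^d$ for a fixed integer $k > \mu$. Lemma \ref{Strang-Fix} will handle the linear approximation piece, while Lemma \ref{Bspline app} provides essentially exponential ReLU approximation of the generator itself.

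First I would verify that $\cN_k^d$ satisfies the Strang--Fix condition of order $k$: from $\widehat{\cN_k}(\omega) = ((1-e^{-i\omega})/(i\omega))^k$ one reads off that $\widehat{\cN_k^d}$ vanishes to order $k$ at each nonzero point of $2\pi\bZ^d$. Lemma \ref{Strang-Fix} then yields a compactly supported dual $\tilde\varphi$ and a constant $C$ (depending only on $\cF$, $k$, $d$) such that every $f$ in the unit ball of $\cF$ satisfies $\|f - \cQ_{2^{-j}} f\|_p \le C\, 2^{-\mu j}$. A direct Hölder estimate on each quasi-projection coefficient (after the change of variables $\By = 2^j \Bx$) shows $\cQ_{2^{-j}} f \in \cS_j(\cN_k^d, M)$ with $M = C'\, 2^{jd/p}$, uniformly in $f$ (and $M$ is a constant when $p = \infty$).

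Next, Lemma \ref{Bspline app} gives $\|\cN_k^d - \phi\|_\infty \le C_k (N+1)^{-cL}$ for some $c = c(k,d) > 0$, so the approximation order of $\cN_k^d$ can be made any $\alpha > 0$ by taking $L$ large; in particular I choose $\alpha \ge 2\mu/d$, placing me in Case I of the proof of Theorem \ref{order estimate}. I then replay that proof with $\beta = \mu$, $\epsilon = 2^{-\mu j}$, and target accuracy $\epsilon/M$ in Theorem \ref{uniform approximation} in order to cancel the prefactor $6 C_\varphi M \|\varphi\|_\infty$; since $\log_2(M/\epsilon) = \cO(j)$, the parameters $\tilde r, \tilde s$ can still be taken of the same order as $r, s$ satisfying $2(r + s) \ge dj$. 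The resulting network has width $N \asymp r 2^r$ and depth $L \asymp s 2^s$, so $\log_2 N \asymp r$ and $\log_2 L \asymp s$; substituting $2(r + s) \ge dj$ into $\epsilon = 2^{-\mu j}$ and rearranging yields the rate $\cO((NL/(\log_2 N \log_2 L))^{-2\mu/d})$, uniformly over the unit ball.

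The principal obstacle is that the coefficient bound $M = C'\, 2^{jd/p}$ grows with $j$ when $p < \infty$, whereas Theorem \ref{order estimate} is stated for $M$ independent of $j$. Resolving this requires re-examining its proof with the target accuracy rescaled by $M$, and checking that the extra $\log_2 M = \cO(j)$ contribution to $\log_2(1/\epsilon)$ only inflates $\tilde r, \tilde s$ by constant factors and thus leaves the asymptotic network size unchanged. Once this uniform version of Theorem \ref{order estimate} is in place, the supremum estimate over $\cF$ follows because every remaining constant appearing in Lemmas \ref{Strang-Fix}, \ref{Bspline app} and Theorems \ref{approximation on Q}, \ref{uniform approximation} depends only on $\cF$, $k$, $d$, $p$, $q$, not on $f$.
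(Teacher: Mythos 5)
Your proposal is correct and follows essentially the same route as the paper: B-splines as generator, Lemma \ref{Strang-Fix} for the linear rate, the coefficient bound $M \le C2^{dj/p}$, and a rerun of the Theorem \ref{order estimate} argument with the target accuracy rescaled to $\epsilon/M$ (the paper sets $\epsilon = 2^{-\mu j - dj/p}$ and takes $\alpha = 2 + 2\mu/d$, which absorbs the extra $2/p$ your choice $\alpha \ge 2\mu/d$ would narrowly miss, but since $\cN_k^d$ has arbitrarily high order this is a free adjustment). The obstacle you flag --- the $j$-dependence of $M$ --- is exactly the point the paper's proof handles, and in the same way you propose.
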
 
\begin{proof}
Let $f\in \cF$ and $k>\mu$, then by Lemma \ref{Strang-Fix}, $\|f-\cQ_{2^{-j}}f\|_p\le C2^{-\mu j}\|f\|_\cF$, where $\cQ_{2^{-j}}f = \sum_{\Bn\in \bZ^d} c_\Bn(f) \cN_k^d(2^j\cdot-\Bn)$ is a B-spline series. It can be shown that the coefficients of a B-spline series is bounded by the $L^p$ norm of the series: $|c_\Bn(f)|\le C2^{dj/p}\|\cQ_{2^{-j}}(f)\|_p$ (See, for example, \citep[Chapter 5.4]{devore1993constructive} and \citep[Lemma 4.1]{devore1988interpolation}). Hence, $|c_\Bn(f)|\le C2^{dj/p} \|f\|_\cF\le C2^{dj/p}$, which implies $\cQ_{2^{-j}}f\in \cS_j(\cN_k^d,M)$ with $M\le C2^{dj/p}$.

Let $r,s\in \bN$ satisfy $2(r+s)\ge dj$, denote $\widetilde{N}=2^r$, $\widetilde{L}=2^s$ and choose $\epsilon= 2^{-\mu j-dj/p}$. By Lemma \ref{Bspline app}, if $N,L$ are sufficiently large, $\cE(\cN_k^d, \cN\cN(N,L),L^\infty(\bR^d))\le C_{k,d}N^{-7L}\le C_{k,d}(NL)^{-\alpha}$, where we choose $\alpha=2+2\mu/d$. Thus, there exists a network $\phi_0$ with width $\cO(\widetilde{N})$ and depth $\cO(\widetilde{L})$ such that
\[
\|\cN_k^d-\phi_0\|_\infty \le (\widetilde{N}\widetilde{L})^{-\alpha} \le 2^{-dj\alpha/2} = 2^{-\mu j -dj} \le \epsilon.
\]
Since $2rs= rs+rs\ge r+s\ge dj/2 \asymp \log_2 1/\epsilon$, we can choose $\tilde{r} \asymp r$ and $\tilde{s} \asymp s$ in Theorem \ref{approximation on Q} and Theorem \ref{uniform approximation}. Therefore, since $\cQ_{2^{-j}}f\in \cS_j(\cN_k^d,M)$, there exists a network $\phi$ with width $\cO(\widetilde{N}\log_2\widetilde{N})$ and depth $\cO(\widetilde{L}\log_2\widetilde{L})$ such that
\[
\| \cQ_{2^{-j}}f - \phi \|_{L^p([0,1]^d)} \le CM\epsilon \le C2^{-\mu j}.
\]
The triangle inequality gives
\[
\| f - \phi\|_{L^p([0,1]^d)} \le \cO(2^{-\mu j }) = \cO((\widetilde{N}\widetilde{L})^{-2\mu/d}).
\]

Finally, let $N= \tilde{N}\log_2\tilde{N}$ and $L = \tilde{L}\log_2\tilde{L}$, we have
\[
\cE(f,\cN\cN(N,L);L^p([0,1]^d)) = \cO((\widetilde{N}\widetilde{L})^{-2\mu/d}) = \cO \left( (NL/(\log_2N \log_2L))^{-2\mu/d} \right).
\]
Since the bound is uniform for all $f\in \cF$, we finish the proof.
\end{proof}

So far, we characterize the approximation error by the number of neurons $NL$, we can also estimate the error by the number of weights. To see this, let the width $N$ be sufficiently large and fixed, then the number of weights $W\asymp N^2L\asymp L$ and we have 
\[
\cE(\cF,\cN\cN(N,L);L^p([0,1]^d)) = \cO \left( (W/\log_2 W)^{-2\mu/d} \right).
\]
Note that similar bounds have been obtained in \citet{yarotsky2019phase} and \citet{lu2020deep} for H\"older spaces. The paper \citep{suzuki2018adaptivity} also studies the approximation in Besov spaces, but they only get the bound $\cO(W^{-\mu/d})$.

\subsection{Optimality for Sobolev spaces}\label{sec:optimality}
We consider the optimality of the upper bounds we have derived for the unit ball $\cF$ of Sobolev spaces $W^{k,p}$. The main idea is to find the connection between the approximation accuracy and the Pseudo-dimension (or VC-dimension) of neural networks. Let us first introduce some results of Pseudo-dimension.

\begin{definition}[Pseudo-dimension]
Let $\cH$ be a class of real-valued functions defined on $\Omega$. The Pseudo-dimension of $\cH$, denoted by $\Pdim(\cH)$, is the largest integer of $N$ for which there exist points $x_1,\dots,x_N \in \Omega$ and constants $c_1,\dots,c_N\in \bR$ such that 
\[
|\{ \sgn(h(x_1)-c_1),\dots,\sgn(h(x_N)-c_N): h\in \cH \}| =2^N.
\]
If no such finite value exists, $\Pdim(\cH)=\infty$.
\end{definition}

There are some well-known upper bounds on Pseudo-dimension of deep ReLU networks in the literature \citep{anthony2009neural,bartlett1999almost,goldberg1995bounding,bartlett2019nearly}. We summarize two bounds in the following lemma.

\begin{lemma}\label{Pdim bound}
Consider a network architecture $\eta$ with $W$ parameters, $U$ neurons and depth $L$. Let $\cH_\eta$ be the set of functions that can be represented by such architecture with ReLU activation. Then there exists constants $C_1,C_2> 0$ such that
\[
\Pdim(\cH_\eta) \le C_1 W^2 \quad \mbox{and} \quad \Pdim(\cH_\eta) \le C_2 WL\log_2 U.
\]
\end{lemma}

Intuitively, if a function class $\cH$ can approximate a function class $\cF$ of high complexity with small precision, then $\cH$ should also have high complexity. In other words, if we use a function class $\cH$ with $\Pdim(\cH)\le n$ to approximate a complex function class, we should be able to get a lower bound of the approximation error. Mathematically, we can define a nonlinear $n$-width using Pseudo-dimension: let $\cB$ be a normed space and $\cF\subseteq \cB$, we define
\[
\rho_n (\cF,\cB):= \inf_{\cH^n} \cE(\cF,\cH^n;\cB) = \inf_{\cH^n} \sup_{f\in \cF} \inf_{h\in\cH^n} \|f-h\|_\cB,
\]
where $\cH^n$ runs over all classes in $\cB$ with $\Pdim(\cH^n)\le n$. 

We remark that the $n$-width $\rho_n$ is different from the famous continuous $n$-th width $\omega_n$ introduced by  \citet{devore1989optimal}:
\[
\omega_n(\cF,\cB):= \inf_{\Ba,M_n} \sup_{f\in \cF} \|f-M_n(\Ba(f))\|_\cB,
\]
where $\Ba:\cF\to \bR^n$ is continuous and $M_n:\bR^n\to \cF$ is any mapping. In neural network approximation, $\Ba$ maps the target function $f\in\cF$ to the parameters in neural network and $M_n$ is the realization mapping that associates the parameters to the function realized by neural network. Applying the results in \citet{devore1989optimal}, one can show that the approximation error of the unit ball of Sobolev space $W^{k,p}(\bR^d)$ is lower bounded by $cW^{-k/d}$, where $W$ is the number of parameters in the network, see \citep{yarotsky2017error,yarotsky2019phase}. However, we have obtained an upper bound $\cO((W/\log_2 W)^{-2k/d})$ for these spaces. The inconsistency is because the parameters in our construction does not continuously depend on the target function and hence it does not satisfy the requirement in the $n$-width $\omega_n$. This implies that we can get better approximation order by taking advantage of the incontinuity. 

The $n$-width $\rho_n$ was firstly introduced by  \citet{maiorov1999degree,ratsaby1997value}. They also gave upper and lower estimates of the $n$-width for Sobolev spaces. The following lemma is from \citet{maiorov1999degree}.

\begin{lemma}\label{n-width}
Let $\cF$ be the unit ball of Sobolev space $W^{k,p}(\bR^d)$ and $1\le p,q\le \infty$, then
\[
\rho_n (\cF,L^q([0,1]^d)) \ge cn^{-k/d},
\]
for some constant $c>0$ independent of $n$.
\end{lemma}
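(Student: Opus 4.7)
The plan is to exhibit an exponentially large family of well-separated ``bumps'' lying inside $\cF$ and then argue that any class with $\Pdim \le n$ is combinatorially too poor to approximate all of them simultaneously. Fix a nonnegative $\psi\in C_c^\infty([0,1]^d)$ with $\int\psi>0$ and a large constant $C>0$; set $m:=\lceil Cn^{1/d}\rceil$, $N:=m^d$, and partition $[0,1]^d$ into $N$ disjoint cubes $B_\alpha$ of side $1/m$ centred at $\Bx_\alpha$. Define $\psi_\alpha(\Bx):=\rho\, m^{-k}\psi(m(\Bx-\Bx_\alpha))$ with $\rho:=\|\psi\|_{W^{k,p}}^{-1}$, so that, thanks to disjointness of the supports, every $f_\epsilon:=\sum_\alpha \epsilon_\alpha \psi_\alpha$ with $\epsilon\in\{-1,+1\}^N$ satisfies $\|f_\epsilon\|_{W^{k,p}}=1$ and hence $f_\epsilon\in\cF$. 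The same disjointness gives $\|f_\epsilon-f_{\epsilon'}\|_{L^q}^q \asymp d_H(\epsilon,\epsilon')\cdot m^{-kq-d}$. A Gilbert--Varshamov / greedy packing then furnishes a subset $\cP\subset\{-1,+1\}^N$ with $|\cP|\ge 2^{N/2}$ and pairwise Hamming distance at least $N/4$, giving
\[
\|f_\epsilon - f_{\epsilon'}\|_{L^q} \ge c_1 m^{-k} \asymp C^{-k} n^{-k/d} \quad (\epsilon\ne \epsilon' \in \cP).
\]

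Suppose for contradiction that some class $\cH\subset L^q([0,1]^d)$ with $\Pdim(\cH)\le n$ satisfies $\cE(\cF,\cH;L^q([0,1]^d))<\delta:=c_1 m^{-k}/3$. Pick $h_\epsilon\in\cH$ within $\delta$ of $f_\epsilon$ for every $\epsilon\in\cP$; by the triangle inequality the family $\{h_\epsilon\}_{\epsilon\in\cP}$ is pairwise $L^q$-separated by at least $\delta$. The crux is to leverage $\Pdim(\cH)\le n$ to bound the cardinality of any such simultaneously-approximable family by $(eN/n)^n = \exp(O(n\log C))$. Comparing with $|\cP|\ge 2^{C^d n/2}$ then forces $C^d\log 2 / 2 \le O(\log C)$, which fails for $C$ sufficiently large; since $\delta\asymp C^{-k}n^{-k/d}$ with $C$ now a fixed constant, we conclude $\rho_n(\cF,L^q([0,1]^d))\ge c\, n^{-k/d}$, as claimed.

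The main obstacle is the cardinality bound in the previous paragraph. The Sauer--Shelah consequence of $\Pdim(\cH)\le n$ controls sign patterns of pointwise evaluations $h(\Bx_\alpha)-c_\alpha$, a pointwise statement, whereas our packing lives in the $L^q$ metric. The bridge, following Maiorov and Ratsaby, is to pass to the cell averages $\lambda_\alpha(h):=|B_\alpha|^{-1}\int_{B_\alpha}h$: a H\"older/Jensen computation yields $\sum_\alpha|\lambda_\alpha(h_\epsilon)-\gamma\epsilon_\alpha|^q \le m^d\delta^q$ with $\gamma:=\rho m^{-k}\int\psi$, so a Markov estimate guarantees that $\sgn\lambda_\alpha(h_\epsilon)=\epsilon_\alpha$ on all but a prescribed small fraction of the cells. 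A second Gilbert--Varshamov step then thins $\cP$ to a sub-packing on which the map $\epsilon\mapsto(\sgn\lambda_\alpha(h_\epsilon))_\alpha$ is injective, after which a careful argument transfers the Sauer--Shelah count from the linear-functional class $\{(\lambda_\alpha h)_\alpha:h\in\cH\}$ back to $\cH$ itself (through the fat-shattering dimension, or via Haussler's covering theorem applied at the appropriate scale). This reduction is the delicate part; once it is in hand, the remainder is bookkeeping.
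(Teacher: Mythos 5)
First, note that the paper does not prove this lemma at all: it is quoted verbatim from Maiorov--Ratsaby \citep{maiorov1999degree}, so what you are really doing is reconstructing their argument. Your packing construction is the right one and is essentially theirs: the rescaled disjoint bumps do give $\|f_\epsilon\|_{W^{k,p}}\le 1$ (not $=1$, since the lower-order derivatives pick up extra factors of $m^{-1}$, but $\le 1$ is all you need), the $L^q$ separation computation is correct, and a Gilbert--Varshamov step does give a sub-packing of size $2^{cN}$ at Hamming distance $N/4$ (though the rate is $1-H(1/4)\approx 0.19$, not $1/2$ --- harmless). The logic of the contradiction, with $N=\lceil Cn^{1/d}\rceil^d$ and $C$ large, is also sound.

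The gap is exactly where you say it is, and it is not closed. The step ``any family in $\cH$ that is pairwise $\delta$-separated and $\delta/3$-approximates the $f_\epsilon$ has cardinality $e^{O(n\log C)}$'' is the entire content of the lemma, and your primary route to it does not go through: the Sauer--Shelah consequence of $\Pdim(\cH)\le n$ bounds the number of sign patterns of \emph{point evaluations} $\bigl(\sgn(h(\Bx_\alpha)-c_\alpha)\bigr)_\alpha$ over a \emph{fixed} point set, whereas you need to count sign patterns of the averaged functionals $\lambda_\alpha(h)=|B_\alpha|^{-1}\int_{B_\alpha}h$, and the class $\{(\lambda_1 h,\dots,\lambda_N h):h\in\cH\}$ has no a priori pseudo-dimension bound inherited from $\cH$ (pseudo-dimension is not stable under composing with integral functionals). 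Trying instead to pick, inside each cube, a point where $h_\epsilon$ is close to $f_\epsilon$ fails for the symmetric reason: the good points then depend on $\epsilon$, while Sauer--Shelah needs one point set for all $h$. The clean repair is the alternative you name only in passing: truncate $\cH$ to $[-Cm^{-k},Cm^{-k}]$ (this decreases neither $\Pdim$ nor the approximation quality), observe that the $h_\epsilon$ are pairwise $cm^{-k}$-separated already in $L^1([0,1]^d)$ with Lebesgue measure (since $\|\cdot\|_1\le\|\cdot\|_q$ controls the error terms and the $f_\epsilon$ are $L^1$-separated by disjointness), and invoke Haussler's packing bound, which gives at most $e(n+1)(2eM/\delta)^n=e^{O(n)}$ functions in any $\delta$-packing of a $[-M,M]$-valued class of pseudo-dimension $n$ when $M/\delta=O(1)$; comparison with $2^{cC^dn}$ then finishes the proof uniformly in $q\in[1,\infty]$. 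As written, however, your proposal defers precisely the step that carries the mathematical weight, so it is not yet a proof.
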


Combining Lemmas \ref{Pdim bound} and \ref{n-width}, we can give lower bound of the approximation error by ReLU neural networks. These lower bounds show that the upper bound in Theorem \ref{Sobolev NL bound} is asymptotically optimal up to a logarithm factor.

\begin{corollary}\label{lower bound}
Let $\cF$ be the unit ball of Sobolev space $W^{k,p}(\bR^d)$ and $1\le p,q\le \infty$. For the function class $\cH_\eta$ in Lemma \ref{Pdim bound}, we have
\[
\cE(\cF,\cH_\eta;L^q([0,1]^d)) \ge c_1 W^{-2k/d}\quad \mbox{and} \quad  \cE(\cF,\cH_\eta;L^q([0,1]^d)) \ge c_2 (WL\log_2 U)^{-k/d}.
\]
for some constant $c_1,c_2>0$. In particular, there exists $c>0$ such that
\[
\cE(\cF,\cN\cN(N,L);L^q([0,1]^d)) \ge c (N^2L^2\log_2 NL)^{-k/d}.
\]
\end{corollary}
\begin{proof}
We choose $n= \Pdim(\cH_\eta)$, then by the definition of the $n$-width $\rho_n (\cF,L^q([0,1]^d))$ and Lemma \ref{n-width}, 
\[
\cE(\cF,\cH_\eta;L^q([0,1]^d)) \ge \rho_n (\cF,L^q([0,1]^d)) \ge c_0n^{-k/d}.
\]
By lemma \ref{Pdim bound}, we have $n= \Pdim(\cH_\eta)\le C_1 W^2$ and $n\le C_2 WL \log_2 U$, which give the desired lower bounds for $\cE(\cF,\cH_\eta;L^q([0,1]^d))$. 

When $\cH$ is the fully connected network $\cN\cN(N,L)$, we have $W=\cO(N^2L)$ and $U=\cO(NL)$. Hence, 
\[
\cE(\cF,\cN\cN(N,L);L^q([0,1]^d)) \ge c_2 (WL\log_2 U)^{-k/d} \ge c (N^2L^2\log_2 NL)^{-k/d}. \qedhere
\]
\end{proof}

\section{Discussion}\label{sec:discussion}

In this paper, we study how well deep ReLU networks can approximate functions in dilated shift-invariant spaces. Our main theorems, Theorem \ref{approximation on Q} and \ref{uniform approximation}, give upper bounds on the approximation error of these spaces. The results can be easily applied to wavelet, which is widely used in signal processing. As an illustration, we consider a multiresolution approximation $\{V_j\}_{j\in \bZ}$ of $L^2(\bR)$, which satisfies $V_{j+1} \subseteq V_j$ for all $j\in\bZ$. And let $\psi$ and $\varphi$ be the wavelet and the scaling function that generate an orthogonal basis \citep[Chapter 7]{mallat1999wavelet}. Denote $\varphi_{j,n}(x) := 2^{-j/2} \varphi(2^{-j}x-n)$, then the orthogonal projection of $f\in L^2(\bR)$ over $V_{-j}$ is
\[
P_{V_{-j}} f = \sum_{n=-\infty}^{+\infty} \langle f, \varphi_{-j,n} \rangle \varphi_{-j,n},
\]
which has the same form of functions in the dilated shift-invariant space $\cS_j(\varphi)$. Hence, Theorem \ref{approximation on Q} and \ref{uniform approximation} can be applied to derive approximation bound of the projection $P_{V_{-j}} f$. Alternatively, we can also approximate the wavelet decomposition
\[
f = \sum_{j=-\infty}^{\infty}\sum_{n=-\infty}^{+\infty} \langle f, \psi_{j,n} \rangle \psi_{j,n},
\]
using the approximation result for $\cS_j(\psi)$. 

The abstract approximation results for shift-invariant spaces can also be applied to study the approximation of classical smooth function spaces by deep neural networks, which has received much attention in recent years. When the approximation error is measured by the number parameters $W$, the seminal work of \citet{yarotsky2017error} obtained approximation bound $\cO(W^{-s/d})$ for the Sobolev spaces $W^{s,\infty}$, ignoring the logarithmic factors. The recent works  \citep{yarotsky2018optimal,yarotsky2019phase} improved the upper bound to $\cO(W^{-2s/d})$. In contrast, if the error is measured by the number of neurons $U \asymp NL$, \citet{shen2019deep,lu2020deep} showed the bound $\cO \left( U^{-2s/d} \right)$ for smooth function class $C^s([0,1]^d)$. All these results are derived through approximating local Taylor expansions by neural networks. In this paper, we take a multiresolution approximation point of view. By choosing the B-spline as the generating function $\varphi$ of the shift-invariant space $\cS_j(\varphi)$, we can recover all the existing bounds and generalize them to the Besov spaces $B^\mu_{p,q}$. Our result improves the existing bounds for Besov spaces obtained by \citet{suzuki2018adaptivity}. We also prove lower bounds of the approximation error in $L^p$-norm ($1\le p\le \infty$), which show the optimality of the upper bounds. As far as we know, only $L^\infty$ lower bounds for neural network approximation are obtained in the literature.

Although the lower bounds in Corollary \ref{lower bound} are proved for ReLU networks, similar lower bounds can be derived for piecewise polynomial activation functions using the same argument and the upper bounds of Pseudo-dimension for such activation functions in \citet{bartlett2019nearly}. However, for more complicated activation functions, this kind of lower bound may not exist. For example, \citet{maiorov1999lower} showed that there exists an analytic, strictly increasing, and sigmoidal activation function such that any continuous function on $[0,1]^d$ can be uniformly approximated to within any error by a neural network with width $6d+3$ and depth $3$. In other words, we can approximate any continuous function using a network of \emph{fixed} finite size with this activation function. However, by Lemma \ref{n-width}, the function class generated by this network has infinite Pseudo-dimension, which is due to the high ``complexity'' of the activation function.

\section{Proof of Theorem \ref{approximation on Q}} \label{sec:proof_Q}

Without loss of generality, we can assume that $M=1$ and $\|\varphi\|_\infty= 1$. By Lemma \ref{float_representation}, for $\Bx\in[0,1)^d$,
\[
g(\Bx) = \sum_{\Bn\in\bZ^d} c_{\Bn} \varphi(2^j\Bx-\Bn) = \sum_{\Bk\in \bZ^d_\varphi} c_{m_j(\Bx)+\Bk} \varphi(r_j(\Bx)-\Bk),
\]
where $m_j$ and $r_j$ are applied coordinate-wisely to $\Bx=(x_1,\dots,x_d)\in [0,1)^d$ and
\begin{align*}
m_j(x_i) &= 2^{j-1}x_{i,1} + 2^{j-2}x_{i,2} + \cdots + 2^0x_{i,j}, \\
r_j(x_i) &=2^j x-m_j(x_i) = \Bin 0.x_{i,j+1} x_{i,j+2} \cdots,
\end{align*}
if $x_i = \Bin 0.x_{i,1}x_{i,2}\cdots$ is the binary representation of $x_i\in[0,1)$.

For any fixed $\Bk\in \bZ^d$, we are going to construct a network that approximates the function
\[
\Bx \mapsto c_{m_j(\Bx)+\Bk} \varphi(r_j(\Bx)-\Bk), \quad \Bx\in Q(j,\delta,d).
\]
We can summarize the result as follows.
\begin{proposition}\label{key approximation}
For any fixed $j\in \bN$ and $\Bk\in \bZ^d$, there exists a network $\phi^{(\Bk)}$ with width $\max\{7d\tilde{r} 2^r,N_\varphi(\epsilon)\}+4d$ and depth $14 \tilde{s} 2^s + L_\varphi(\epsilon)$ such that for any $\Bx\in Q(j,\delta,d)$,
\[
|c_{m_j(\Bx)+\Bk} \varphi(r_j(\Bx)-\Bk) - \phi^{(\Bk)}(\Bx)|\le 3\epsilon.
\]
\end{proposition}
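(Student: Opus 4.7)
The plan is to realise $\phi^{(\Bk)}$ as a combination of three sub-networks run essentially in parallel, followed by an approximate multiplication. Since $\Bx \in Q(j,\delta,d)$ forces every coordinate $x_i$ to stay at distance at least $\delta$ from the discontinuity points $\ell 2^{-j}$ of $m_j$, a ReLU network can stably extract from $\Bx$ the first $j$ binary digits of each coordinate. From these $dj$ bits the integer part $m_j(\Bx)\in\bZ_j^d$ and the fractional part $r_j(\Bx)\in[0,1)^d$ are both assembled by affine maps. The branch supplied by the hypothesis computes $\tilde\varphi(\Bx):=\phi_0(r_j(\Bx)-\Bk)$, which approximates $\varphi(r_j(\Bx)-\Bk)$ within $\epsilon$ since $\|\varphi\|_\infty=1$. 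A third branch takes $m_j(\Bx)$ as input and outputs an $\epsilon$-approximation $\tilde c(\Bx)$ to the coefficient $c_{m_j(\Bx)+\Bk}$. The product $\tilde c(\Bx)\tilde\varphi(\Bx)$, produced by a small multiplication gadget, is the output.

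More concretely, I would first invoke the bit-extraction construction announced in Section \ref{sec:bit_extraction}, applied coordinate-wise, to extract the first $j$ bits of $x_1,\dots,x_d$. The capacity requirement $2(s+r)\ge dj$ is precisely what allows a network of width $\cO(d\tilde r\,2^r)$ and depth $\cO(\tilde s\,2^s)$ to recover all $dj$ bits and hence $m_j(\Bx)$ and $r_j(\Bx)$ exactly on $Q(j,\delta,d)$. Because $m_j(\Bx)$ takes only $|\bZ_j^d|=2^{jd}$ values, producing $\tilde c(\Bx)$ is a finite interpolation problem on at most $2^{jd}$ samples in $(-1,1)$; Lemma \ref{interpolate binary} realises it with the same asymptotic width and depth, with accuracy $\epsilon$ guaranteed by the second hypothesis $\tilde r\tilde s\ge\lceil\log_2(1/\epsilon)\rceil+1$. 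An approximate ReLU multiplication implemented through the polarisation identity $xy=\tfrac14((x+y)^2-(x-y)^2)$ combined with a ReLU approximation of $t\mapsto t^2$ multiplies $\tilde c(\Bx)$ and $\tilde\varphi(\Bx)$ with error at most $\epsilon$ on $[-1,1]^2$. Using $|c_{m_j(\Bx)+\Bk}|<1$ and $\|\varphi\|_\infty=1$, the telescoping bound
$$
|c_{m_j(\Bx)+\Bk}\varphi(r_j(\Bx)-\Bk)-\tilde c(\Bx)\tilde\varphi(\Bx)|\le |c|\,|\varphi-\tilde\varphi|+|\tilde\varphi|\,|c-\tilde c|+\text{mult.\ error}
$$
delivers the target accuracy $3\epsilon$.

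The main obstacle is not any single ingredient but the bookkeeping needed to hit the precise width $\max\{7d\tilde r\,2^r,N_\varphi(\epsilon)\}+4d$ and depth $14\tilde s\,2^s+L_\varphi(\epsilon)$ stated in the proposition. This forces the bit-extraction branch, the coefficient-fitting branch and the $\phi_0$ branch to be laid out so that (i) the coefficient-fitting and bit-extraction circuits share a common depth budget $14\tilde s\,2^s$, with their widths combined into the single $7d\tilde r\,2^r$ term via the parallelisation trick, (ii) the $\phi_0$ branch is executed in parallel with the coefficient branch so that only $\max\{7d\tilde r\,2^r,N_\varphi(\epsilon)\}$ enters the width, and (iii) the multiplication gadget and the routing wires add only the constant overhead $4d$ in width and are absorbed into the existing depth. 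The explicit constants $7d$ and $14$ ultimately come from inspection of the bit-extraction circuit in Section \ref{sec:bit_extraction}, which is the most delicate step to carry out precisely.
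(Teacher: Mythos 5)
Your overall architecture (bit-extract $\Bx\mapsto(m_j(\Bx),r_j(\Bx))$ on $Q(j,\delta,d)$, run $\phi_0$ on $r_j(\Bx)-\Bk$, fit the coefficient as a function of $m_j(\Bx)$, multiply) matches the paper's, but there is a genuine gap in the coefficient branch. You propose to output a real-valued $\epsilon$-approximation $\tilde c(\Bx)\in(-1,1)$ of $c_{m_j(\Bx)+\Bk}$ by treating it as ``a finite interpolation problem on $2^{jd}$ samples'' solved by Lemma \ref{interpolate binary}. That lemma only fits Boolean targets $y_{i,k}\in\{0,1\}$; it does not interpolate real values. The real-valued alternative, Lemma \ref{interpolate real}, fits only $N^2L$ samples with width $\cO(N)$ and depth $\cO(L)$, and Proposition \ref{Hausdorff estimate} shows this is essentially tight: with width $\cO(\tilde r 2^r)$ and depth $\cO(\tilde s 2^s)$ the parameter count is $\cO(\tilde r^2 2^{2r}\tilde s 2^s)$, which is far below the $2^{jd}=2^{2(r+s)}$ samples you must fit when $2(r+s)=dj$ (take $r=0$, $s=dj/2$). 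So the step ``produce $\tilde c(\Bx)$'' cannot be carried out as written within the stated width and depth. The missing idea is to \emph{binarize the coefficients}: write $c_\Bm/2+1/2=\Bin 0.b_1(\Bm)b_2(\Bm)\cdots$, fit each Boolean map $\Bm\mapsto b_i(\Bm)$ with Lemma \ref{interpolate binary} (whose $N^2L^2$ capacity, not $N^2L$, is exactly what makes $2^{2(r+s)}\ge 2^{jd}$ samples feasible), and truncate after $\tilde r\tilde s\ge\lceil\log_2(1/\epsilon)\rceil+1$ bits. The condition $2(s+r)\ge dj$ enters here, in splitting the $dj$ extracted bits into a key of size $\le 2^{2r}$ and an index in $\{1,\dots,2^s\}$ for the Boolean interpolation — not in the bit extraction itself, which only needs width $\cO(d2^r)$ and depth $\cO(j/r)$.

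Two further consequences of the bitwise route that your proposal replaces with something weaker. First, because each $b_i\in\{0,1\}$, the product $b_i(m_j(\Bx))\cdot\phi_0(r_j(\Bx)-\Bk)$ is computed \emph{exactly} by the two-layer gadget $4\sigma(b/4+a-1/2)-2a=ab$ valid for $a\in\{0,1\}$, $b\in[-2,2]$; there is no multiplication error. Your polarisation-identity multiplier introduces a third error term, and your own telescoping bound $|c|\,|\varphi-\tilde\varphi|+|\tilde\varphi|\,|c-\tilde c|+\text{mult.\ error}\le\epsilon+(1+\epsilon)\epsilon+\text{mult.\ error}$ already exceeds $3\epsilon$. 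The paper's bound $\epsilon|c_{m_j(\Bx)}|+\epsilon(1+\epsilon)\|\varphi\|_\infty\le 3\epsilon$ closes only because the product is exact. Second, the ``bookkeeping'' you defer is load-bearing: the $\tilde r\tilde s$ bit networks must be arranged as $\tilde s$ sequential blocks of $\tilde r$ parallel copies (each of width $\cO(2^r)$ and depth $\cO(2^s)$), accumulating the partial sums $\sum_i 2^{1-i}b_i\phi_0$ through carried channels, which is how the width stays at $\cO(\tilde r 2^r)$ while the depth stays at $\cO(\tilde s 2^s)$; a fully parallel or fully sequential layout breaks one of the two bounds.
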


Assume that Proposition \ref{key approximation} is true. We can construct the desired function $\phi$ by
\[
\phi(\Bx) = \sum_{\Bk\in \bZ^d_\varphi} \phi^{(\Bk)}(\Bx),
\]
which can be computed by $C_\varphi$ parallel sub-networks $\phi^{(\Bk)}$. Since $\phi(\Bx)$ is a linear combination of $\phi^{(\Bk)}(\Bx)$, the required depth is $14 \tilde{s} 2^s + L_\varphi(\epsilon)$ and the required width is at most $C_\varphi(\max\{7d\tilde{r} 2^r,N_\varphi(\epsilon)\}+4d)$. The approximation error is
\[
\left| \sum_{\Bn\in\bZ^d} c_{\Bn} \varphi(2^j\Bx-\Bn) - \phi(\Bx) \right| \le \sum_{\Bk\in \bZ^d_\varphi} |c_{m_j(\Bx)+\Bk} \varphi(r_j(\Bx)-\Bk) - \phi^{(\Bk)}(\Bx)| \le 3C_\varphi \epsilon.
\]

It remains to prove Proposition \ref{key approximation}. The key idea is as follows. Since $|c_{\Bm}|< 1$, we let $b_i (\Bm)\in \{0,1\}$ be the $i$-bit of $c_{\Bm} /2+ 1/2 \in [0,1)$. Thus, we have the binary representation
\begin{equation}\label{definition of b}
c_{\Bm} = \sum_{i=1}^\infty 2^{1-i} b_i (\Bm) -1.
\end{equation}
As a consequence, we have
\[
c_{m_j(\Bx)+\Bk} \varphi(r_j(\Bx)-\Bk) = \sum_{i=1}^\infty 2^{1-i} b_i (m_j(\Bx)+\Bk) \varphi(r_j(\Bx)-\Bk) -\varphi(r_j(\Bx)-\Bk).
\]
We will construct a neural network that approximates the truncation
\[
\sum_{i=1}^{\lceil \log_2(1/\epsilon)\rceil +1} 2^{1-i} b_i (m_j(\Bx)+\Bk) \varphi(r_j(\Bx)-\Bk) -\varphi(r_j(\Bx)-\Bk), \quad \Bx \in Q(j,\delta,d).
\]
The construction can be divided into two parts:
\begin{enumerate}
	\item For each $j\in \bN$, construct a neural network to compute $\Bx \mapsto (m_j(\Bx), r_j(\Bx))$. This can be done by the bit extraction technique.
	
	\item For each $i,j\in \bN$, construct a neural network to compute $\Bm \in \bZ_j^d \mapsto b_i (\Bm)$, which is equivalent to interpolate $2^{dj}$ samples $(\Bm,b_i (\Bm))$.
	
\end{enumerate}
We gather the necessary results in the following two subsections and give a proof of Proposition \ref{key approximation} in subsection \ref{archetecture}.

\subsection{Bit extraction}\label{sec:bit_extraction}

In order to compute $\Bin 0.x_1x_2\cdots \mapsto (x_1,\dots,x_r)$, we need to use the bit extraction technique in \citet{bartlett1999almost,bartlett2019nearly}. Let us first introduce the basic lemma that extract $r$ bits using a shallow network.

\begin{lemma}\label{bit extraction}
Given $j\in \bN$ and $0<\delta<2^{-j}$. For any positive integer $r\le j$, there exists a network $\phi_r$ with width $2^{r+1}+1$ and depth $3$ such that 
\[
\phi_r(x) = (x_1,\dots,x_r,\Bin 0.x_{r+1}x_{r+2}\cdots), \quad \forall x= \Bin 0.x_1x_2\cdots \in Q(j,\delta,1).
\]
\end{lemma}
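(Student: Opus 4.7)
The plan is to build a depth-$3$ network whose first hidden layer assembles, via pairs of ReLUs, a family of ``soft step'' functions that become exact indicators on $Q(j,\delta,1)$, and whose second hidden layer takes linear combinations of those soft steps to read off the individual bits.

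First I would exploit the structure of $Q(j,\delta,1)$. Since $r \le j$, the points $k2^{-r}$ with $1 \le k \le 2^r - 1$ all lie in $\{m 2^{-j} : 1 \le m \le 2^j - 1\}$, so each excluded interval $(k2^{-r}-\delta,k2^{-r})$ sits inside the union defining $Q(j,\delta,1)$. Hence every $x \in Q(j,\delta,1)$ satisfies $x<k2^{-r}-\delta$ or $x\ge k2^{-r}$ for each such $k$. For $k=1,\dots,2^r-1$ define
\[
h_k(x) := \delta^{-1}\bigl(\sigma(x-k2^{-r}+\delta)-\sigma(x-k2^{-r})\bigr),
\]
which rises linearly from $0$ to $1$ on $[k2^{-r}-\delta,k2^{-r}]$ and is constant outside. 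By the previous observation, $h_k(x) = \mathbf{1}_{x\ge k2^{-r}}$ for every $x \in Q(j,\delta,1)$, and so $\lfloor 2^r x \rfloor = \sum_{k=1}^{2^r-1} h_k(x)$ on $Q(j,\delta,1)$.

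Next I would express each requested output as a linear combination of the $h_k$ and of $x$. Setting $h_{2^r}\equiv 0$, the $i$-th bit of the integer $\lfloor 2^r x\rfloor$ is
\[
x_i = \sum_{m=0}^{2^{i-1}-1}\bigl(h_{(2m+1)2^{r-i}}(x)-h_{(2m+2)2^{r-i}}(x)\bigr),\qquad i=1,\dots,r,
\]
while the tail is $\Bin 0.x_{r+1}x_{r+2}\cdots = 2^r x - \sum_{k=1}^{2^r-1} h_k(x)$. These $r+1$ target outputs all take values in $[0,1]$ on $Q(j,\delta,1)$, so composing them with $\sigma$ leaves them unchanged.

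Finally I would assemble the network. The first affine map $T_1$ produces the $2(2^r-1)$ pre-activations $x-k2^{-r}+\delta$ and $x-k2^{-r}$ together with $x$ itself, after which $\sigma$ gives all the ingredients needed for $h_k$ and retains $x=\sigma(x)$; this is a hidden layer of width at most $2^{r+1}-1$, well inside the claimed bound $2^{r+1}+1$. The second affine $T_2$ realises the $r+1$ linear combinations above; since the results lie in $[0,1]$ on $Q(j,\delta,1)$, the following $\sigma$ acts as the identity. The output affine $T_3$ simply propagates these $r+1$ values, giving total depth $3$. The main subtlety is the bookkeeping in the previous step, namely verifying that the combinatorial identity for $x_i$ really is a linear combination of the $h_k$ and has range contained in $[0,1]$ on $Q$; everything else is a routine width and depth count.
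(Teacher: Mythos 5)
Your proof is correct and follows essentially the same strategy as the paper's: exploit the fact that $Q(j,\delta,1)$ excludes the bad intervals $(k2^{-r}-\delta,k2^{-r})$ so that a continuous ReLU surrogate of a discontinuous bit-reading function becomes exact there, then recover the bits and the tail as fixed linear combinations inside a depth-$3$, width-$O(2^{r})$ network. The only cosmetic difference is the gadget: the paper uses plateau-shaped indicators $f_{[a,b]}$ of the dyadic intervals on which a given bit equals $1$ (consuming both hidden layers), whereas you use single-layer step ramps $h_k$ together with a telescoping parity identity, which is equally valid and fits the same width and depth budget.
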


\begin{proof}
We follow the construction in \citet{bartlett2019nearly}. For any $a\le b$, observe that the function
\[
f_{[a,b]} (x) := \sigma(1-\sigma(a/\delta-x/\delta)) + \sigma(1-\sigma(x/\delta - b/\delta)) - 1
\]
satisfies $f_{[a,b]}(x) =1$  for $x\in[a,b]$, and $f_{[a,b]}(x) = 0$ for $x\notin (a-\delta,b+\delta)$, and $f_{[a,b]}(x)\in [0,1]$ for all $x$. So, we can use $f_{[a,b]}$ to approximate the indicator function of $[a,b]$, to precision $\delta$. Note that $f_{[a,b]}$ can be implemented by a ReLU network with width $2$ and depth $3$. 

Since $x_1,\dots,x_r$ can be computed by adding the corresponding indicator functions of $[k2^{-r},(k+1)2^{-r}]$, $0\le k\le 2^r-1$, we can compute $x_1,\dots,x_r$ using $2^r$ parallel networks
\[
f_{[1-2^{-r},1]},\quad f_{[k2^{-r},(k+1)2^{-r}-\delta]},\quad k=0,\dots,2^r-2.
\]
Observe that 
\[
\Bin 0.x_{r+1}x_{r+2}\cdots = 2^r x-\sum_{i=1}^{r}2^{r-i}x_i,
\]
which is a linear combination of $x,x_1,\dots,x_r$. There exists a network $\phi_r$ with width $2^{r+1}+1$ and depth $3$ such that $\phi_r(x) = (x_1,\dots,x_r,\Bin 0.x_{r+1}x_{r+2}\cdots)$ for $x\in Q(j,\delta,1)$. (We can use one neuron in each hidden layer to 'remember' the input $x$. Since $\phi_r(x)$ is a linear transform of $x$ and outputs of the parallel sub-networks, we do not need extra layer to compute summation.)
\end{proof}

Note that the function $\Bin 0.x_1x_2\cdots \mapsto (x_1,\dots,x_r)$ is not continuous, while every ReLU network function is continuous. So, we cannot implement the bit extraction on the whole set $[0,1]$. This is why we restrict ourselves to $Q(j,\delta,1)$.

The next lemma is an extension of Lemma \ref{bit extraction}. It will be used to extract the location information $(m_j(\Bx),r_j(\Bx))$.

\begin{lemma}\label{bit extraction sum}
Given $r,j\in \bN$ and $0<\delta<2^{-j}$. For any integer $k\le j$, there exists a ReLU network $\phi$ with width $2^{r+1} +3$ and depth $2\lceil j/r \rceil +1$ such that 
\[
\phi(x) = \left(\sum_{i=1}^k 2^{j-i}x_i,\sum_{i=k+1}^j 2^{j-i}x_i, \Bin 0.x_{j+1}x_{j+2}\cdots \right), \quad \forall x= \Bin 0.x_1x_2\cdots \in Q(j,\delta,1).
\]
\end{lemma}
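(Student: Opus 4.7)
The plan is to build the network as a cascade of $T := \lceil j/r \rceil$ copies of the bit-extraction module from Lemma \ref{bit extraction}, each of which peels off the next block of (at most) $r$ binary digits from the current fractional remainder, while two dedicated neurons per layer carry running partial sums forward through the cascade. Writing $y_t := \Bin 0.x_{(t-1)r+1}x_{(t-1)r+2}\cdots$ for the remainder at the start of stage $t$ (so $y_1 = x$), the $t$-th stage takes as input the triple $(y_t, S_1^{(t)}, S_2^{(t)})$ and outputs $(y_{t+1}, S_1^{(t+1)}, S_2^{(t+1)})$, where the accumulators are updated by adding the contribution $2^{j-i}x_i$ of each newly extracted bit to $S_1$ if $i \le k$ and to $S_2$ otherwise. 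After $T$ stages we obtain exactly the three claimed quantities.

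For the width count, each stage carries: the $2^{r+1}$ neurons used by the $2^r$ parallel indicator modules $f_{[a,b]}$ that appear in Lemma \ref{bit extraction}, one neuron that propagates the current remainder $y_t$ through the stage, and two additional neurons that propagate the accumulators $S_1^{(t)}, S_2^{(t)}$. This gives the required width $2^{r+1}+3$. For the depth count, one stage has depth $3$, but when two consecutive stages are composed the final affine map of one stage merges with the initial affine map of the next into a single affine map — and the linear updates $S_i^{(t+1)} \leftarrow S_i^{(t)} + (\text{linear combination of extracted bits})$ together with the linear rescaling needed to feed $y_{t+1}$ into the next block of $f_{[a,b]}$ modules both live naturally in this merged layer. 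Consequently $T$ stages use depth $3 + 2(T-1) = 2\lceil j/r\rceil + 1$.

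Correctness rests on the claim that the hypothesis $x \in Q(j,\delta,1)$ propagates: at each stage we need the current remainder $y_t$ to lie safely inside the "good" set where the soft indicators of the next stage return the exact values in $\{0,1\}$, so that the extracted digits are genuine binary digits and no rounding error accumulates. This follows because $y_t$ equals the fractional part of $2^{(t-1)r}x$, so any forbidden interval $(m\,2^{-(j-(t-1)r)} - \delta, m\,2^{-(j-(t-1)r)})$ for $y_t$ pulls back to a forbidden interval of width $\delta$ just below a dyadic point of level $2^{-j}$ for $x$, which is precisely excluded by $Q(j,\delta,1)$. Thus every stage operates exactly, and the final accumulators and remainder agree exactly with the three target expressions.

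The main obstacle I expect is not the existence of such a construction but verifying that the width stays at $2^{r+1}+3$ uniformly — i.e., that the two accumulator neurons and the one remainder-propagation neuron can be reused at every layer rather than accumulating with depth. The bookkeeping argument above hinges on the fact that the accumulator update at the end of stage $t$ is a single affine operation and can therefore be absorbed, together with the affine "read-in" of stage $t+1$, into the shared affine layer at the junction; once this is checked, both the depth savings ($2T+1$ rather than $3T$) and the width invariance drop out automatically.
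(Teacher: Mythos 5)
Your proposal is correct and follows essentially the same route as the paper: a cascade of $\lceil j/r\rceil$ copies of the Lemma \ref{bit extraction} module, with the remainder and the two running partial sums carried through extra neurons, and with the output affine map of each stage merged with the input affine map of the next to get depth $2\lceil j/r\rceil+1$. Your explicit check that membership in $Q(j,\delta,1)$ propagates to the later-stage remainders (via pulling the forbidden intervals back to level-$2^{-j}$ dyadic points) is a detail the paper leaves implicit, but it is the same construction.
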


\begin{proof}
Without loss of generality, we can assume $r\le j$. By Lemma \ref{bit extraction}, there exists a network $\phi_r$ with width $2^{r+1}+1$ and depth $3$ such that $\phi_r(x) = (x_1,\dots,x_r,\Bin 0.x_{r+1}x_{r+2}\cdots)$. Observe that any summation $\sum_{i=1}^k 2^{j-i}x_i$ and $\sum_{i=k+1}^r 2^{j-i}x_i$ with $k\le r$ are linear combinations of outputs of $\phi_r$. We can compute them by a network having the same size as $\phi_r$. If $k>r$, we compute $\sum_{i=1}^r 2^{j-i}x_i$ as intermediate result. Then, by applying another $\phi_r$ to $\Bin 0.x_{r+1}x_{r+2}\cdots$, we can extract the next $r$ bits $x_{r+1},\dots, x_{2r}$, and compute $\Bin 0.x_{2r+1}x_{2r+2}\cdots$. Again, any summation  $\sum_{i=1}^{k} 2^{j-i}x_i$ and $\sum_{i=k+1}^{2r} 2^{j-i}x_i$ with $k\le 2r$ are linear combinations of the outputs. If $k>2r$, we compute $\sum_{i=1}^{2r} 2^{j-i}x_i$ as intermediate result. Continuing this strategy, after we extract $\lfloor j/r \rfloor r$ bits, we can use $\phi_{j-\lfloor j/r \rfloor r}$ to extract the rest bits. Using this construction, we can compute the required function $\phi$ by a network with width at most $2^{r+1} +3$ and depth at most $2\lceil j/r \rceil +1$. (Two neurons in each hidden layer are used to 'remember' the intermediate computation.)
\end{proof}

The following lemma shows how to extract a specific bit.

\begin{lemma}\label{extract one bit}
For any $r,K\in \bN$ with $r\le K$, there exists a ReLU network $\phi$ with width $2^{r+1}+3$ and depth $4\lceil K/r \rceil +1$ such that for any $x= \Bin 0.x_1x_2\cdots x_K$ and positive integer $k\le K$, we have $\phi(x,k) = x_k$.
\end{lemma}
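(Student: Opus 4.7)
The plan is to build $\phi$ by iteratively processing the $K$ binary digits of $x$ in chunks of $r$ bits, using the integer input $k$ to pick out the correct bit in exactly one iteration. The architecture mirrors that of Lemma \ref{bit extraction sum}, except each chunk now also runs a small selection subcircuit driven by $k$.

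At iteration $i\in\{1,\dots,\lceil K/r\rceil\}$ the network maintains and updates a triple $(u_i, k, v_i)$, starting with $u_1=x$, $v_1=0$. First, applying Lemma \ref{bit extraction} to $u_i$ produces the next $r$ bits $x_{(i-1)r+1},\dots,x_{ir}$ together with the shifted fractional part $u_{i+1}$; bits past index $K$ are zero by the convention $x=\Bin 0.x_1\cdots x_K$. Second, for each $j=1,\dots,r$ the network computes the integer indicator $\mathbf{1}[k=(i-1)r+j]$ via the gadget $\sigma\bigl(1-\sigma(k-m)-\sigma(m-k)\bigr)$, which equals $1$ iff the integers $k$ and $m$ coincide. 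Third, the products $x_{(i-1)r+j}\cdot \mathbf{1}[k=(i-1)r+j]=\sigma(x_{(i-1)r+j}+\mathbf{1}-1)$ (valid because both factors lie in $\{0,1\}$) are summed over $j$ and added to $v_i$ to give $v_{i+1}$. Only iteration $i=\lceil k/r\rceil$ contributes a nonzero summand, and that summand is exactly $x_k$, so $v_{\lceil K/r\rceil+1}=x_k$, which is read out by one final affine layer.

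For the depth, the bit-extraction and the integer-indicator subnetworks can be run in parallel (each uses two ReLU activations), followed by one ReLU for the integer product and one more absorbed into the buffering/accumulation layer that prepares the triple for the next iteration; this gives four ReLU layers per iteration, hence $4\lceil K/r\rceil+1$ affine layers overall. For the width, the $2^{r+1}$ neurons per hidden layer used by Lemma \ref{bit extraction} to host the $2^r$ threshold gadgets $f_{[a,b]}$ dominate, and the remaining three neurons in $2^{r+1}+3$ are used to carry the scalars $u$, $k$, $v$ from one layer to the next.

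The main obstacle I foresee is respecting the narrow width budget during the selection step: producing all $r$ integer indicators in parallel with the extraction would naively require $\Theta(r)$ additional neurons, which is too many for $r>1$. The remedy is to fuse selection with extraction, writing the single target bit as one linear combination $\sum_m c_m(k,i)\, f_m(u_i)$ of the subinterval gadgets $f_m$ already computed in the extraction layer, where the integer-valued coefficients $c_m(k,i)\in\{0,1\}$ are built from the same $\sigma(1-\sigma(\cdot))$ primitive applied to $k$; in this way the $2^{r+1}$ parallel neurons of the extraction layer are reused for selection rather than duplicated. The last chunk, when $K$ is not a multiple of $r$, is handled by zero-padding, consistent with $x_i=0$ for $i>K$.
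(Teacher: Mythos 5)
Your overall architecture is the paper's: process the $K$ bits in $\lceil K/r\rceil$ chunks, in each chunk extract $r$ bits via Lemma \ref{bit extraction}, form the integer equality indicators from $k$ with a ReLU gadget, multiply indicator by bit using $\sigma(t_1+t_2-1)$ for Boolean inputs, and accumulate. The depth count and the observation that only one chunk contributes are fine. The problem is the step you yourself flag as the main obstacle, and your proposed remedy for it does not work. Writing the target bit as ``one linear combination $\sum_m c_m(k,i)\,f_m(u_i)$'' with coefficients $c_m(k,i)$ that depend on the input $k$ is not an operation a ReLU layer can perform: affine layers have \emph{fixed} weights, so a sum of products of two input-dependent quantities is not a linear readout of the extraction neurons. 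Each product $c_m(k,i)\,f_m(u_i)$ would itself have to be realized by a product gadget in a further layer, which costs $2^r$ extra neurons (one per subinterval gadget) plus the neurons computing the $c_m$ --- strictly worse than the $\Theta(r)$ you were trying to avoid. So the ``fusion'' neither reuses the extraction neurons nor salvages the width bound, and as written the proof does not establish width $2^{r+1}+3$.

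The resolution is simpler and is what the paper does: the worry that ``$\Theta(r)$ additional neurons is too many'' is only valid if you insist on computing the indicators in the \emph{same} layers as the $2^{r+1}$ extraction neurons. Instead, within each block, first run the two extraction layers (width $2^{r+1}+O(1)$, carrying $k$ and the accumulator alongside), after which $x_{(i-1)r+1},\dots,x_{ir}$ are available as affine functions of the activations and the $2^{r+1}$ extraction neurons are no longer needed; then use the next two layers to compute the $3r$ neurons $\sigma(k-m\pm 1),\sigma(k-m)$ (or your $2r$-neuron variant) together with $r$ neurons carrying the bits, and finally the $r$ product neurons $\sigma(\delta+x-1)$. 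The widest of these later layers has about $4r+3$ neurons, and $4r\le 2^{r+1}$ for every $r\ge 1$, so the whole block fits in width $\max\{2^{r+1},4r\}+3=2^{r+1}+3$ with four hidden layers, giving the stated depth $4\lceil K/r\rceil+1$. With that replacement your argument goes through; the rest of the proposal matches the paper's proof.
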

\begin{proof}
Let $\delta_{ki} =0$ if $k\neq i$ and $\delta_{ki} =1$ if $k=i$. Observe that 
\[
\delta_{ki} = \sigma(k-i+1) + \sigma(k-i-1) - 2\sigma(k-i),
\]
and $t_1t_2 = \sigma(t_1+t_2-1)$ for any $t_1,t_2\in \{0,1\}$. We have the expression
\[
x_k = \sum_{i=1}^K \delta_{ki}x_i = \sum_{i=1}^K \sigma \left( \sigma(k-i+1) + \sigma(k-i-1) - 2\sigma(k-i) +x_i -1 \right).
\]

By Lemma \ref{bit extraction}, there exists a ReLU network $\phi_r$ with width $2^{r+1}+1$ and depth $3$ such that $\phi_r(x) = (x_1,\dots,x_r,\Bin 0.x_{r+1}x_{r+2}\cdots x_K)$. Hence, the function 
\[
\tilde{\phi}_r(x,k) = \left(\Bin 0.x_{r+1}x_{r+2}\cdots x_K, k,  \sum_{j=1}^r \delta_{kj}x_j \right)
\]
is a network with width at most $\max\{2^{r+1},4r\}+3= 2^{r+1}+3$ and depth $5$. Applying Lemma \ref{bit extraction} to the first output $\Bin 0.x_{r+1}x_{r+2}\cdots x_K$ and preserving the last output $(k,\sum_{j=1}^r \delta_{kj}x_j)$, we can implement
\[
\tilde{\phi}_{2r}(x,k) = \left(\Bin 0.x_{2r+1}x_{2r+2}\cdots x_K, k,  \sum_{j=1}^{2r} \delta_{kj}x_j \right)
\]
by a network with width $2^{r+1}+3$ and depth $9$. Using this construction iteratively, we can implement the required function $\phi(x,k) = x_k = \sum_{j=1}^K \delta_{kj}x_j$ by a network with width at most $2^{r+1}+3$ and depth $4\lceil K/r \rceil +1$.
\end{proof}

\subsection{Interpolation}

Given an arbitrary sample set $(x_i,y_i)$, $i=1,\dots,M$, we want to find a network $\phi$ with certain architecture to interpolate the data: $\phi(x_i)=y_i$. This problem has been studied in many papers \citep{yun2019small,shen2019nonlinear,vershynin2020memory}. Roughly speaking, the number of samples that a network can interpolate is in the order of the number of parameters.

The following lemma is a combination of Proposition 2.1 and 2,2 in \citet{shen2019nonlinear}.

\begin{lemma}\label{interpolate real}
For any $N,L\in \bN$, given $N^2L$ samples $(x_i,y_i)$, $i=1,\dots,N^2L$, with distinct $x_i\in \bR^d$ and $y_i\ge 0$. There exists a ReLU network $\phi$ with  width $4N+4$ and depth $L+2$ such that $\phi(x_i) =y_i$ for $i=1,\dots,N^2L$.
\end{lemma}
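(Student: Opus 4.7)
The plan is to reduce the problem to a univariate interpolation and then apply a bit-extraction trick to pack many target values into the binary expansion of a single real number, so that a network of width $O(N)$ and depth $O(L)$ can decode $N^2L$ samples.

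First, since the $x_i \in \bR^d$ are distinct, I would pick a direction $\Bw \in \bR^d$ for which the projections $\tilde{x}_i := \Bw \cdot x_i$ are pairwise distinct; the set of bad directions is a finite union of hyperplanes of Lebesgue measure zero. After composing with this linear map (which a single affine layer implements) and relabelling, assume $\tilde{x}_1 < \tilde{x}_2 < \cdots < \tilde{x}_{N^2L}$, and partition the indices into $N$ consecutive blocks $B_1,\ldots,B_N$ of size $NL$ each. This reduces the task to interpolating ordered univariate data using a ReLU network of width $4N+4$ and depth $L+2$.

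I would then build $\phi$ in two stages. Stage one uses the first two layers and width $O(N)$ to compute the block index $m(\tilde{x}) \in \{1,\ldots,N\}$ via piecewise-linear indicator functions of the $N$ block intervals, and to form, by a single linear combination of these indicators, an encoded real number $z_{m(\tilde{x})} \in [0,1)$ whose binary expansion packs the $NL$ targets in block $B_{m(\tilde{x})}$. The hypothesis $y_i \ge 0$ allows the rescaling $y_i \mapsto y_i/Y$ with $Y = \max_i y_i + 1$ into $[0,1)$, the factor $Y$ being restored at the final output. Stage two performs bit extraction in the spirit of Lemmas~\ref{bit extraction}--\ref{extract one bit}: each of the next $L$ layers peels off one chunk of $N$ targets from the running remainder, using the in-block position of $\tilde{x}$ as a selector to pick out the appropriate entry within the chunk. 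After $L$ extraction rounds plus one assembly layer, the output equals $y_i$ exactly.

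The main obstacle will be keeping the width pinned at $4N+4$ while multiplexing several parallel streams: the $N$ block-indicator values, the running remainder of the bit-extraction pipeline, the in-block selector identifying the chunk to decode, and a couple of ``memory'' neurons forwarding $\tilde{x}$ and the accumulating partial sum. Each extraction layer must simultaneously update the remainder and evaluate $O(N)$ selector indicators, which is exactly where the constant $4$ in $4N+4$ absorbs the bookkeeping streams; the constant $+2$ in the depth accommodates the initial block identification and the final assembly of $Y \cdot (\text{decoded value})$. The tight width and depth accounting, together with the exact form of the encoding, is carried out in \citep{shen2019nonlinear}, from which this lemma is taken.
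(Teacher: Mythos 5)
The paper does not actually prove this lemma; it imports it wholesale as ``a combination of Proposition 2.1 and 2.2 in \citep{shen2019nonlinear}.'' Your opening reduction --- projecting the distinct $x_i\in\bR^d$ to a generic direction $\Bw$ so the $\tilde{x}_i=\Bw\cdot x_i$ are distinct, then sorting --- is exactly how the cited result is lifted from one dimension to $d$ dimensions, so that part is fine. The problem is the core of your construction.

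The bit-packing/bit-extraction scheme cannot deliver what the lemma asserts, namely \emph{exact} interpolation $\phi(x_i)=y_i$ for \emph{arbitrary} reals $y_i\ge 0$. Packing the $NL$ targets of a block into the binary expansion of a single $z\in[0,1)$ and then ``peeling off'' bits over $L$ layers recovers only finitely many bits of each target, so the output agrees with $y_i$ only up to a truncation error --- exact equality holds only for targets with finite dyadic expansions. Bit extraction is the right tool precisely when the targets are single bits, which is why the paper uses it in Lemma~\ref{interpolate binary} for $y_{i,k}\in\{0,1\}$; note that in the paper's logical order Lemma~\ref{interpolate binary} is \emph{derived from} the present lemma, not the other way around, so your route also inverts the dependency. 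The width accounting is off as well: by Lemma~\ref{bit extraction}, extracting $r$ bits in one block costs width $2^{r+1}+1$, so a layer of width $4N+4$ can peel off only $\cO(\log_2 N)$ bits, giving $\cO(L\log_2 N)$ recovered bits in total --- nowhere near $NL$ real values. The construction in \citep{shen2019nonlinear} is different in kind: it never encodes the $y_i$; it directly builds a continuous piecewise-linear function whose graph passes through the $N^2L$ sorted points, organized as $L$ composed blocks of width $\cO(N)$, each block contributing a width-$N$ piecewise-linear correction selected by a group index, with the hypothesis $y_i\ge 0$ ensuring that the ReLU truncations in the pipeline do not corrupt the stored partial values. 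If you want a self-contained proof, that direct piecewise-linear composition is the argument to reproduce; the encoding idea should be reserved for the Boolean-valued Lemma~\ref{interpolate binary}.
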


We can also give an upper bound of the interpolation capacity of a given network architecture.

\begin{proposition}\label{Hausdorff estimate}
Let $\cH=\{\phi_{\theta} : \bR^{d_{in}}\to \bR^{d_{out}}\}$ be the class of functions that can be represented by a ReLU network with architecture of $W$ parameters $\theta$. If for any $M$ samples $(x_i,y_i)$ with distinct $x_i\in \bR^{d_{in}}$ and $y_i\in \bR^{d_{out}}$, there exists $\theta$ such that $\phi_\theta(x_i) =y_i$ for $i=1,\dots,M$, then $W\ge Md_{out}$.
\end{proposition}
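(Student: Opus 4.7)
The plan is to view the hypothesis as a statement about surjectivity of the \emph{realization map} and then obstruct surjectivity by a dimension argument, which is precisely the kind of ``Hausdorff estimate'' the proposition is named after.

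First I would fix the $M$ distinct inputs $x_1,\ldots,x_M\in\bR^{d_{in}}$ and define the evaluation map
\[
F:\bR^W\to\bR^{Md_{out}},\qquad F(\theta)=\bigl(\phi_\theta(x_1),\ldots,\phi_\theta(x_M)\bigr).
\]
The hypothesis is exactly that $F$ is surjective: every prescribed target tuple $(y_1,\ldots,y_M)$ lies in the image of $F$. My goal is to show surjectivity forces $W\ge Md_{out}$.

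Next I would establish that $F$ is locally Lipschitz. For the fixed architecture, each coordinate of $\phi_\theta(x_i)$ is obtained from $\theta$ by alternately composing affine maps (in $\theta$, for fixed preactivation signs) with the ReLU nonlinearity. Partitioning $\bR^W$ according to the finitely many sign patterns that can occur at the hidden neurons across the inputs $x_1,\ldots,x_M$ decomposes $\bR^W$ into finitely many (closed) semialgebraic pieces; on each piece $\phi_\theta(x_i)$ is a polynomial of bounded degree in $\theta$, hence smooth, hence locally Lipschitz. Gluing along the boundaries (where $F$ is continuous) yields local Lipschitz continuity of $F$ on all of $\bR^W$.

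Then I would invoke the standard fact that a locally Lipschitz map from $\bR^W$ into $\bR^n$ does not increase Hausdorff dimension: $\dim_H F(\bR^W)\le W$. Writing $\bR^W=\bigcup_{k\in\bN} B_k$ as a countable union of balls on which $F$ is Lipschitz and using $\dim_H(\mathrm{Lip}\text{-image of }B_k)\le\dim_H B_k=W$ together with countable stability of Hausdorff dimension gives $\dim_H F(\bR^W)\le W$. If $W<Md_{out}$, then $F(\bR^W)$ has Hausdorff dimension strictly less than $Md_{out}$ and in particular Lebesgue measure zero in $\bR^{Md_{out}}$, contradicting surjectivity. Therefore $W\ge Md_{out}$.

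The only nonroutine step is verifying local Lipschitzness of $F$; the rest is bookkeeping. Alternatively, one can avoid Hausdorff dimension entirely and use a cleaner measure-theoretic substitute: the piecewise-polynomial structure shows $F(\bR^W)$ is a finite union of images of $\bR^W$ under $C^\infty$ maps, so if $W<Md_{out}$ each piece has Lebesgue measure zero in $\bR^{Md_{out}}$ by Sard's theorem (or simply because a smooth map from a lower-dimensional Euclidean space has measure-zero image). That route avoids any discussion of Hausdorff measure and is the cleanest way to finish.
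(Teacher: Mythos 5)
Your proposal is correct and follows essentially the same route as the paper: both fix the $M$ inputs, form the evaluation map $F(\theta)=(\phi_\theta(x_1),\dots,\phi_\theta(x_M))$, note it is a continuous piecewise polynomial hence Lipschitz on closed balls, and conclude via countable stability of Hausdorff dimension that the surjective image $\bR^{Md_{out}}=F(\bR^W)$ has dimension at most $W$. Your extra detail on the sign-pattern decomposition and the alternative finish via Sard's theorem are fine but not a different argument in substance.
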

\begin{proof}
Choose any $M$ distinct points $\{x_i \}_{i=1}^N \subseteq \bR^{d_{in}}$. We consider the function $F:\bR^W \to \bR^{Md_{out}}$ defined by
\[
F(\theta) = (\phi_\theta(x_1),\dots,\phi_\theta(x_M)).
\]
By assumption, $F$ is surjective. Since $F$ is a continuous piecewise multivariate polynomial, it is Lipschitz on any closed ball. Therefore, the Hausdorff dimension of the image under $F$ of any closed ball is at most $W$ \citep[Theorem 2.8]{evans2015measure}. Since $\bR^{Md_{out}} = F(\bR^W)$ is a countable union of images of closed balls, its Hausdorff dimension is at most $W$. Hence, $Md_{out}\le W$.
\end{proof}

This proposition shows that a ReLU network with width $N$ and depth $L$ can interpolate at most $\cO (N^2L)$ samples, which implies the construction in Lemma \ref{interpolate real} is asymptotically optimal. However, if we only consider Boolean output, we can construct a network with width $\cO(N)$ and depth $\cO(L)$ to interpolate $N^2L^2$ well-spacing samples. The construction is based on the bit extraction Lemma \ref{extract one bit}.

\begin{lemma}\label{interpolate binary}
Let $N,L\in \bN$. Given any $N^2L^2$ samples $\{(x_i,k,y_{i,k}) :i=1,\dots,N^2L,\  k=1,\dots,L \}$, where $x_i\in \bR^d$ are distinct and $y_{i,k}\in \{0,1\}$. There exists a ReLU network $\phi$ with width $4N+5$ and depth $5L+2$ such that $\phi(x_i,k) =y_{i,k}$ for $i=1,\dots,N^2L$ and $k=1,\dots,L$.
\end{lemma}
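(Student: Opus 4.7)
The main idea is to pack each vertical strip of $L$ binary labels into one real number: for each $i=1,\ldots,N^2L$, define
\[
z_i := \sum_{k=1}^{L} 2^{-k} y_{i,k} \in [0,1),
\]
which encodes the string $y_{i,1}y_{i,2}\cdots y_{i,L}$ as its binary expansion $\Bin 0.y_{i,1}y_{i,2}\cdots y_{i,L}$. The network $\phi(x,k)$ will be built as a two-stage composition: first compute $z_i$ from $x_i$, then extract the $k$-th bit of $z_i$.

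The plan is as follows. First, apply Lemma \ref{interpolate real} to the $N^2L$ non-negative samples $\{(x_i, z_i)\}$ to obtain a sub-network $\phi_1:\bR^d\to\bR$ of width $4N+4$ and depth $L+2$ with $\phi_1(x_i)=z_i$. Augment $\phi_1$ with a single additional neuron in each hidden layer that carries the second input $k\ge 1$ forward via $\sigma(k)=k$; this yields a network $(x,k)\mapsto(\phi_1(x),k)$ of width $4N+5$ and depth $L+2$. Second, apply Lemma \ref{extract one bit} with $r=1$ and $K=L$ to get a sub-network $\phi_2$ of width $2^{r+1}+3=7$ and depth $4\lceil L/1\rceil+1=4L+1$ such that $\phi_2(z,k)$ extracts the $k$-th bit of the binary expansion of $z$. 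Composing gives $\phi(x_i,k)=\phi_2(\phi_1(x_i),k)=y_{i,k}$ for all $(i,k)$, as required.

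The only delicate point is matching the stated depth exactly: naively adding depths gives $(L+2)+(4L+1)=5L+3$, one more than the target $5L+2$. The saving comes from the affine structure at the seam between $\phi_1$ and $\phi_2$: the final map $T^{(1)}_{L+2}$ of $\phi_1$ is affine (applied after the last $\sigma$), and the first map $T^{(2)}_1$ of $\phi_2$ is also affine (applied before its first $\sigma$), so the composition $T^{(2)}_1\circ T^{(1)}_{L+2}$ collapses to a single affine map. Hence the two adjacent affine layers at the junction merge into one, shaving the depth by exactly one to $5L+2$. For the width, the hidden layers of the augmented $\phi_1$ use at most $4N+5$ neurons, and those of $\phi_2$ use at most $7\le 4N+5$ neurons (for $N\ge 1$), so the overall width is $4N+5$.

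The conceptually new ingredient is simply the binary encoding that turns $N^2L^2$ Boolean samples into $N^2L$ real-valued ones, after which Lemmas \ref{interpolate real} and \ref{extract one bit} provide the required building blocks. The main obstacle is therefore not conceptual but bookkeeping: one must carefully account for the input-carrying neuron and invoke the affine-merging trick at the junction in order to hit the stated depth of $5L+2$ and width of $4N+5$ exactly, rather than the looser bound $5L+3$ obtained by naive concatenation.
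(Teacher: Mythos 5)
Your proposal is correct and follows essentially the same route as the paper: encode each column of $L$ bits as the binary expansion $\Bin 0.y_{i,1}\cdots y_{i,L}$, fit these $N^2L$ real values with Lemma \ref{interpolate real}, and then apply Lemma \ref{extract one bit} with $r=1$, $K=L$ to recover the $k$-th bit. Your explicit bookkeeping of the carried input $k$ and the merging of adjacent affine maps at the junction is exactly the (implicit) accounting behind the paper's stated width $4N+5$ and depth $5L+2$.
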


\begin{proof}
For any $i=1,\dots,N^2L$, denote $y_i = \Bin 0.y_{i,1}y_{i,2}\cdots y_{i,L} \in [0,1]$. Considering the $N^2L$ samples $(x_i,y_i)$, by Lemma \ref{interpolate real}, there exists a network $\phi_1$ with width $4N+4$ and depth $L+2$ such that $\phi_1(x_i) =y_i$ for $i=1,\dots,N^2L$. 

By Lemma \ref{extract one bit}, there exists a network $\phi_2$ with width $7$ and depth $4L+1$ such that $\phi_2(y_i,k) = y_{i,k}$ for any $i=1,\dots,N^2L$ and $k=1,\dots,L$. Hence, the function $\phi(x,k) = \phi_2(\phi_1(x),k)$ can be implemented by a network with width $4N+5$ and depth $5L+2$.
\end{proof}

The pseudo-dimension of a network with width $N$ and depth $L$ is $\cO(N^2L^2\log_2(NL))$, which means $\cN\cN(N,L)$ can interpolates at most $\cO(N^2L^2\log_2(NL))$ samples with Boolean outputs. Hence, the construction in Lemma \ref{interpolate binary} is optimal up to a logarithm factor. But we require that the samples are well-spacing in the lemma.

\subsection{Proof of Proposition \ref{key approximation}}\label{archetecture}

Now, we are ready to prove Proposition \ref{key approximation}. For simplicity, we only consider the case $\Bk=(0,\dots,0)$, the following construction can be easily applied to general $\Bk\in \bZ^d$. 

Recall that
\[
c_{m_j(\Bx)} \varphi(r_j(\Bx)) = \sum_{i=1}^\infty 2^{1-i} b_i (m_j(\Bx)) \varphi(r_j(\Bx)) -\varphi(r_j(\Bx)),
\]
where $b_i (\Bm)\in \{0,1\}$ is the $i$-bit of $c_{\Bm} /2+ 1/2 \in [0,1)$. For any fixed $i, j\in \bN$, we first construct a network to approximate
\[
2^{1-i} b_i (m_j(\Bx)) \varphi(r_j(\Bx)).
\]

For any $r,s\in \bN$ with $2(r+s)\ge jd$, by Lemma \ref{bit extraction sum}, there exist ReLU networks $h_m:\bR \to \bR^{3}$, $1\le m\le d$, with width $2^{r+1}+3$ and depth $2\lceil j/r \rceil +1$ such that for any $x_m = \Bin 0.x_{m,1}x_{m,2} \cdots \in Q(j,\delta,1)$, 
\[
h_m(x_m) = \left(\sum_{\ell=1}^{k_m} 2^{j-\ell}x_{m,\ell},\sum_{\ell=k_m+1}^j 2^{j-\ell}x_{m,\ell}, \Bin 0.x_{m,j+1}x_{m,j+2}\cdots \right),
\]
where we choose $\{k_m\}_{m=1}^d\subseteq \bN$ such that $\sum_{m=1}^{d}(j-k_m) = s$. By stacking $h_m$ in parallel, there exists a network $\phi_1:\bR^d \to \bR^{3d}$ with width $d2^{r+1}+3d$ and depth $2\lceil j/r \rceil +1$ such that
\[
\phi_1(\Bx) = (h_1(x_1),\dots,h_d(x_d)), \quad \forall \Bx = (x_1,\dots,x_d) \in Q(j,\delta,d).
\]
Note that the outputs of $\phi_1(\Bx)$ is one-to-one correspondence with $(m_j(\Bx),r_j(\Bx))$ by
\begin{align*}
m_j(\Bx) &= \left( \sum_{\ell=1}^{k_1} 2^{j-\ell}x_{1,\ell}+ \sum_{\ell=k_1+1}^j 2^{j-\ell}x_{1,\ell}, \dots, \sum_{\ell=1}^{k_d} 2^{j-\ell}x_{d,\ell}+\sum_{\ell=k_d+1}^j 2^{j-\ell}x_{d,\ell} \right), \\
r_j(\Bx) &= \left( \Bin 0.x_{1,j+1}x_{1,j+2}\cdots, \dots, \Bin 0.x_{d,j+1}x_{d,j+2}\cdots \right).
\end{align*}
Using this correspondence, by Lemma \ref{interpolate binary}, there exists a network $\phi_{2,i}: \bR^{d+1} \to \bR$ with width at most $4\cdot 2^{(jd-2s)/2} +5 \le 2^{r+2}+5$ and depth $5\cdot2^s+2$ such that $\phi_{2,i}$ interpolate $2^{jd}$ samples:
\[
\phi_{2,i} \left( \left( \sum_{\ell=1}^{k_1} 2^{j-\ell}x_{1,\ell}, \dots, \sum_{\ell=1}^{k_d} 2^{j-\ell}x_{d,\ell} \right), q(\Bx) \right) = b_i (m_j(\Bx)),
\]
where
\[
q(\Bx) = 1+\sum_{\ell=k_1+1}^{j} 2^{j-\ell}x_{1,\ell} + \sum_{m=2}^{d} 2^{\sum_{n=1}^{m-1} (j-k_n)} \sum_{\ell=k_m+1}^{j} 2^{j-\ell}x_{m,\ell} \in \{1,\dots,2^s\}.
\]
Abusing of notation, we denote these facts by 
\begin{align*}
\phi_1(\Bx) &= (\phi_{1,1}(\Bx), \phi_{1,2}(\Bx)), \\
b_i (m_j(\Bx)) &= \phi_{2,i}(\phi_{1,1}(\Bx)), \\
r_j(\Bx) &= \phi_{1,2}(\Bx).
\end{align*}

By assumption, there exists a network $\phi_0$ with width $N_\varphi(\epsilon)$ and depth $L_\varphi(\epsilon)$ such that $\| \varphi - \phi_0\|_{\infty} \le \epsilon\|\varphi\|_\infty$. Thus, $|\phi_0(r_j(\Bx))| \le (1 +\epsilon)\|\varphi\|_\infty \le 2$. Since $b_i (m_j(\Bx)) \in \{0,1\}$, the product
\[
2^{1-i} b_i (m_j(\Bx)) \varphi(r_j(\Bx)) \approx 2^{1-i} \phi_{2,i}(\phi_{1,1}(\Bx)) \phi_0(\phi_{1,2}(\Bx))
\]
can be computed using the observation that, for $a\in \{0,1\}$ and $b\in [-2,2]$,
\begin{equation}\label{product}
4\sigma\left( \frac{b}{4} +a -\frac{1}{2} \right) -2a = \left\{ 
\begin{aligned}
&0\quad a=0\\
&b\quad a=1 
\end{aligned}
\right. \quad =ab,
\end{equation}
which is a network with width $2$ and depth $2$.

Finally, our network function $\phi(\Bx)$ is defined as
\begin{equation}\label{phij}
\phi(\Bx) = \sum_{i=1}^{\lceil \log_2(1/\epsilon)\rceil +1} 2^{1-i} \phi_{2,i}(\phi_{1,1}(\Bx)) \phi_0(\phi_{1,2}(\Bx)) -\phi_0(\phi_{1,2}(\Bx)).
\end{equation}
To implement the summation (\ref{phij}), we can first compute $(\phi_{1,1}(\Bx), \phi_{1,2}(\Bx))$ by the network $\phi_1$, and then compute $(\phi_{1,1}(\Bx), \phi_0(\phi_{1,2}(\Bx)))$ by the network $\phi_0$, then by applying $\tilde{r}$ sub-network $\phi_{2,i}$ and using (\ref{product}), we can compute 
\[
\left(\phi_{1,1}(\Bx), \phi_0(\phi_{1,2}(\Bx)), \sum_{i=1}^{\tilde{r}} 2^{1-i} \phi_{2,i}(\phi_{1,1}(\Bx))\phi_0(\phi_{1,2}(\Bx)) \right).
\]
Since $\lceil \log_2(1/\epsilon)\rceil +1\le \tilde{r}\tilde{s}$, we need at most $\tilde{s}$ such blocks to compute the total summation. The network architecture can be visualized as follows: 
\[
\Bx \longmapsto 
\begin{aligned}
&\phi_{1,1}(\Bx) \\
&\phi_{1,2}(\Bx)
\end{aligned}
\longmapsto 
\begin{aligned}
&\phi_{1,1}(\Bx) \\
&\phi_0(\phi_{1,2}(\Bx))
\end{aligned}
\longmapsto 
\begin{aligned}
&\phi_{1,1}(\Bx) \\
&\phi_0(\phi_{1,2}(\Bx)) \\
&\sum_{i=1}^{\tilde{r}} \Phi_i(\Bx)
\end{aligned}
\longmapsto \cdots \longmapsto 
\begin{aligned}
&\phi_{1,1}(\Bx) \\
&\phi_0(\phi_{1,2}(\Bx)) \\
&\sum_{i=1}^{(\tilde{s}-1)\tilde{r}} \Phi_i(\Bx)
\end{aligned}
\longmapsto \phi(\Bx),
\]
where $\sum_{i=1}^{k} \Phi_i(\Bx)$ represents the summation $\sum_{i=1}^{k} 2^{1-i} \phi_{2,i}(\phi_{1,1}(\Bx))\phi_0(\phi_{1,2}(\Bx))$. According to this construction, in order to compute $\phi$, the required width is at most
\[
\max \{ d2^{r+1} +3d, d+1+N_\varphi(\epsilon), \tilde{r} (2^{r+2}+5)+d+3 \} \le \max\{7d\tilde{r} 2^r,N_\varphi(\epsilon)\}+4d,
\]
and the required depth is at most
\[
2\lceil j/r \rceil + L_\varphi(\epsilon) + \tilde{s} (5\cdot 2^s+2)  \le 4+4\lceil s/d \rceil + 6 \tilde{s} 2^s + L_\varphi(\epsilon)\le 14 \tilde{s} 2^s + L_\varphi(\epsilon).
\]

It remains to estimate the approximation error. For any $\Bx \in Q(j,\delta,d)$, by the definition of $b_i (\Bm)$ (see (\ref{definition of b})), we have
\begin{align*}
\phi(\Bx) = & \sum_{i=1}^{\lceil \log_2(1/\epsilon)\rceil +1} 2^{1-i} \phi_{2,i}(\phi_{1,1}(\Bx)) \phi_0(\phi_{1,2}(\Bx)) -\phi_0(\phi_{1,2}(\Bx)) \\
= & \sum_{i=1}^{\lceil \log_2(1/\epsilon)\rceil +1} 2^{1-i} b_i (m_j(\Bx)) \phi_0(r_j(\Bx)) -\phi_0(r_j(\Bx)) \\
= &\widetilde{c}_{m_j(\Bx)} \phi_0(r_j(\Bx)),
\end{align*}
where $\widetilde{c}_{m_j(\Bx)}/2 +1/2$ is equal to the first $\lceil \log_2(1/\epsilon)\rceil +1$-bits in the binary representation of $c_{m_j(\Bx)} /2+ 1/2 \in [0,1)$. Since $|c_{m_j(\Bx)} -\widetilde{c}_{m_j(\Bx)}|\le \epsilon$ and  $\| \varphi - \phi_0\|_{\infty} \le\epsilon \|\varphi\|_\infty$, we have
\begin{align*}
&|c_{m_j(\Bx)} \varphi(r_j(\Bx)) - \phi(\Bx)| \\
=&\left| c_{m_j(\Bx)} \varphi(r_j(\Bx)) - \widetilde{c}_{m_j(\Bx)} \phi_0(r_j(\Bx)) \right| \\
\le &\left| c_{m_j(\Bx)} \varphi(r_j(\Bx)) - c_{m_j(\Bx)} \phi_0(r_j(\Bx)) \right| +  \epsilon \left| \phi_0(r_j(\Bx)) \right| \\
\le &\epsilon \|\varphi\|_\infty | c_{m_j(\Bx)}| + \epsilon (1+\epsilon) \|\varphi\|_\infty\\
\le &3\epsilon,
\end{align*}
where in the last inequality, we use the assumption $|c_{\Bm}|\le 1$ and $\|\varphi\|_\infty =1$. So we finish the proof.

\section{Proof of Theorem \ref{uniform approximation}} \label{sec:proof_uniform}

Recall that the middle function $\mid(\cdot,\cdot,\cdot)$ is a function that returns the middle value of the three inputs. The following two lemma are from \citet{lu2020deep}.

\begin{lemma}\label{mid error}
For any $\epsilon>0$, if at least two of $\{x_1,x_2,x_3\}$ are in $[y-\epsilon,y+\epsilon]$, then $\mid(x_1,x_2,x_3)\in [y-\epsilon,y+\epsilon]$.
\end{lemma}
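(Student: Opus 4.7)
The plan is to prove this by a simple case analysis on the location of the third point, since the condition involves only the ordering of the three values relative to the target interval $[y-\epsilon, y+\epsilon]$.

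Without loss of generality, I would relabel so that $x_1, x_2 \in [y-\epsilon, y+\epsilon]$, and let $x_3$ denote the remaining point (which may or may not lie in the interval). The middle function $\mid(x_1,x_2,x_3)$ returns whichever of the three inputs is neither the maximum nor the minimum, so the strategy is to check that in every configuration of $x_3$ relative to $x_1, x_2$, the middle value is pinned inside $[y-\epsilon, y+\epsilon]$.

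I would split into three cases. First, if $x_3 \in [y-\epsilon, y+\epsilon]$, all three inputs lie in the interval, so the middle one trivially does. Second, if $x_3 > y+\epsilon$, then $x_3 > x_1$ and $x_3 > x_2$, so $x_3$ is the maximum and the middle value equals $\max(x_1,x_2)$, which lies in $[y-\epsilon, y+\epsilon]$. Third, if $x_3 < y-\epsilon$, then $x_3$ is the minimum and the middle value equals $\min(x_1,x_2)$, which again lies in $[y-\epsilon, y+\epsilon]$. In all cases the conclusion holds.

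There is no real obstacle here: the statement is essentially a definitional property of the median of three numbers, and the argument is purely order-theoretic. The only mild care needed is to ensure the endpoints of the interval are handled correctly by using non-strict inequalities at the boundary, but this is immediate from $x_1, x_2 \in [y-\epsilon, y+\epsilon]$.
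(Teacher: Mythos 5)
Your proof is correct and takes essentially the same route as the paper: after the same WLOG normalization that $x_1,x_2$ lie in the interval, both arguments reduce to a short order-theoretic case analysis (the paper cases on which input the middle function returns, you case on where $x_3$ sits, which is an equivalent split).
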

\begin{proof}
Without loss of generality, we assume $x_1,x_2\in [y-\epsilon,y+\epsilon]$. If $\mid(x_1,x_2,x_3)$ is $x_1$ or $x_2$, then the assertion is true. If $\mid(x_1,x_2,x_3) = x_3$, then $x_3$ is between $x_1$ and $x_2$, hence $ \mid(x_1,x_2,x_3) = x_3 \in [y-\epsilon,y+\epsilon]$.
\end{proof}

\begin{lemma}\label{mid net}
There exists a ReLU network $\phi$ with width $14$ and depth $3$ such that
\[
\phi(x_1,x_2,x_3) = \mid(x_1,x_2,x_3), \quad x_1,x_2,x_3\in \bR.
\]
\end{lemma}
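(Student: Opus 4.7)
The plan is to exploit the identity
$\mid(x_1,x_2,x_3) = x_1 + x_2 + x_3 - \max(x_1,x_2,x_3) - \min(x_1,x_2,x_3)$
together with the associative decompositions $\max(x_1,x_2,x_3) = \max(\max(x_1,x_2),x_3)$ and $\min(x_1,x_2,x_3) = \min(\min(x_1,x_2),x_3)$. Each binary max or min admits the closed form $\max(a,b)=\tfrac{1}{2}(a+b+|a-b|)$ and $\min(a,b)=\tfrac{1}{2}(a+b-|a-b|)$, while $|t|=\sigma(t)+\sigma(-t)$. Thus a single ReLU hidden layer whose neurons contain $\sigma(\pm(a-b))$ together with the carries $\sigma(\pm a),\sigma(\pm b)$ suffices to make any pairwise max or min available as an affine combination of its outputs, and nesting two such layers realises the required depth of $3$.

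Concretely, the first hidden layer would consist of the eight neurons $\sigma(x_1-x_2),\sigma(x_2-x_1),\sigma(x_1),\sigma(-x_1),\sigma(x_2),\sigma(-x_2),\sigma(x_3),\sigma(-x_3)$. From any affine combination of these one recovers $x_1,x_2,x_3$ as well as $M:=\max(x_1,x_2)$ and $m:=\min(x_1,x_2)$. The second hidden layer would then consist of the six neurons $\sigma(M-x_3),\sigma(x_3-M),\sigma(m-x_3),\sigma(x_3-m),\sigma(x_1+x_2),\sigma(-(x_1+x_2))$, whose affine combinations supply $|M-x_3|$, $|m-x_3|$, and the sum $x_1+x_2$. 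A short algebraic manipulation based on the identity above gives
$\mid(x_1,x_2,x_3) = \tfrac{1}{2}\bigl((x_1+x_2) - |M-x_3| + |m-x_3|\bigr)$,
which is a single affine combination of the layer-$2$ neurons and hence can be produced by the output affine map.

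The only nontrivial point is to verify that this closed form actually computes the median, which I would do by a case split: when $m \le x_3 \le M$ the right-hand side collapses to $x_3$; when $x_3 > M$ it collapses to $M$; and when $x_3 < m$ it collapses to $m$. The two hidden layers have widths $8$ and $6$, both comfortably under the bound $14$, and the depth is $3$ by construction. No real obstacle remains beyond the bookkeeping of checking that every expression fed into a second-layer ReLU is a genuine affine combination of the first-layer outputs, which is immediate from the availability of $x_1,x_2,x_3,M,m$ as such combinations.
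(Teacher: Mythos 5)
Your proposal is correct and rests on the same core identity as the paper's proof, namely $\mid(x_1,x_2,x_3)=x_1+x_2+x_3-\max(x_1,x_2,x_3)-\min(x_1,x_2,x_3)$ realised through nested binary max/min and $|t|=\sigma(t)+\sigma(-t)$; the difference is only in the bookkeeping. The paper runs two disjoint width-$6$ subnetworks for $\max(x_1,x_2,x_3)$ and $\min(x_1,x_2,x_3)$ plus two neurons carrying the sum, arriving at width $14$, whereas you share the first-layer neurons between the max and min computations (exploiting $M+m=x_1+x_2$) and collapse the output into the single affine formula $\tfrac12\bigl((x_1+x_2)-|M-x_3|+|m-x_3|\bigr)$, which indeed reduces to $x_3$, $M$, $m$ in the three cases $m\le x_3\le M$, $x_3>M$, $x_3<m$ respectively, and every second-layer pre-activation is affine in the first-layer outputs as claimed. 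This yields width $8$ rather than $14$, i.e.\ a slightly sharper statement; since the paper defines width as an upper bound ($\phi\in\cN\cN(N,L)$ means width \emph{at most} $N$), your network still witnesses the lemma exactly as stated.
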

\begin{proof}
Observe that
\[
\max(x_1,x_2) = \tfrac{1}{2} \sigma(x_1+x_2) - \tfrac{1}{2} \sigma(-x_1-x_2) + \tfrac{1}{2} \sigma(x_1-x_2) + \tfrac{1}{2} \sigma(x_2-x_1).
\]
The function $\max(x_1,x_2,x_3) = \max(\max(x_1,x_2), \sigma(x_3)-\sigma(-x_3))$ can be implemented by a network $\phi_1$ with width $6$ and depth $3$. Similarly, the function $\min(x_1,x_2,x_3)$ can be implemented by a network $\phi_2$ with width $6$ and depth $3$. Therefore,
\[
\mid(x_1,x_2,x_3) = \sigma(x_1+x_2+x_3) - \sigma(-x_1-x_2-x_3) - \max(x_1,x_2,x_3) - \min(x_1,x_2,x_3)
\]
can be implemented by a network with width $14$ and depth $3$.
\end{proof}

Combining these two lemmas with the construction in Proposition \ref{key approximation}, we are now ready to extend the approximation on $Q(j,\delta,d)$ to the uniform approximation on $[0,1]^d$.

\begin{proof}[Proof of Theorem \ref{uniform approximation}]
Without loss of generality, we assume that $M=1$ and $\|\varphi\|_\infty= 1$. To simplify the notation, we let $\{\Be_1,\dots,\Be_d\}$ be the standard basis of $\bR^d$ and denote that $L:= 14 \tilde{s} 2^s + L_\varphi(\epsilon)$ and $N:=\max\{7d\tilde{r} 2^r,N_\varphi(\epsilon)\}+4d$, which are the required depth and width in Proposition \ref{key approximation}, respectively. 

For $k=0,1,\dots,d$, let
\[
E_k := \{ \Bx=(x_1,\dots,x_d)\in[0,1]^d: x_i\in Q(j,\delta,1), i>k \}.
\]
Notice that $E_0 = Q(j,\delta,d)$ and $E_d = [0,1]^d$.

Fixing any $\delta < 2^{-j}/3$, we will inductively construct networks $\Phi_k$, $k=0,1,\dots,d$, with width at most $3^k \cdot 2C_\varphi N$ and depth at most $L+2k$ such that
\[
\|g-\Phi_k\|_{L^\infty (E_k)} \le 6C_\varphi \epsilon.
\]
where $g$ is the target function
\[
g(\Bx) := \sum_{\Bn\in\bZ^d} c_{\Bn} \varphi(2^j\Bx-\Bn)= \sum_{\Bk\in \bZ^d_\varphi} c_{m_j(\Bx)+\Bk} \varphi(2^j\Bx-m_j(\Bx)-\Bk).
\]

For $k=0$, by Proposition \ref{key approximation}, there exists a network $\Phi_0$ with width $C_\varphi N$ and depth $L$ satisfies the requirement.

To construct $\Phi_1$, we observe that for any $\Bx \in Q(j,\delta,d)\pm \delta \Be_1$,
\begin{align*}
g(\Bx) = \sum_{\Bn\in\bZ^d} c_{\Bn} \varphi(2^j\Bx-\Bn) = \sum_{\Bn\in\bZ^d} c_{\Bn} \varphi(2^j\By-\Bn \pm 2^j \delta \Be_1),
\end{align*}
where $\By = \Bx \mp \delta \Be_1 \in Q(j,\delta,d)$. We consider the approximation of the functions
\begin{align*}
g_{\pm \delta \Be_1}(\By) &:= g(\By \pm \delta \Be_1) = g(\Bx) =\sum_{\Bn\in\bZ^d} c_{\Bn} \varphi(2^j\By-\Bn \pm 2^j \delta \Be_1) \\
&= \sum_{\Bm \in \bZ_j^d} \sum_{\Bk\in \bZ^d} c_{\Bm+\Bk} \varphi_{j,\pm \delta \Be_1}(2^j\By-\Bm-\Bk) \cdot 1_{\{\By\in [0,2^{-j})^d+ 2^{-j}\Bm\}} \\
&= \sum_{\Bk\in \bZ^d_{\varphi_{j,\pm \delta \Be_1}}} c_{m_j(\By)+\Bk}  \varphi_{j,\pm \delta \Be_1}(2^j\By-m_j(\By)-\Bk),
\end{align*}
where $\varphi_{j,\pm \delta \Be_1}(\Bx):= \varphi(\Bx\pm 2^j\delta \Be_1)$ and we use the fact that $\varphi_{j,\pm \delta \Be_1}(2^j\By-\Bm-\Bk)$ is nonzero on $[0,2^{-j})^d+ 2^{-j}\Bm$ if and only if $\varphi_{j,\pm \delta \Be_1}(2^j\By-\Bk)$ is nonzero on $[0,2^{-j})^d$ if and only if $\Bk \in \bZ^d_{\varphi_{j,\pm \delta \Be_1}}$. 

For any fixed $j$ and $\Bk\in \bZ^d_{\varphi_{j,\pm \delta \Be_1}}$, replacing $\phi_0(\cdot)$ by $\phi_0(\cdot-\Bk\pm 2^j\delta \Be_1) $ in the construction in section \ref{archetecture}, we can construct a network $\phi^{(j,\Bk)}$ (similar to the representation (\ref{phij})) with width at most $N$ and depth at most $L$ such that it can approximate the function
\[
\By \mapsto c_{m_j(\By)+\Bk}  \varphi_{j,\pm \delta \Be_1}(2^j\By-m_j(\Bx)-\Bk)
\]
with error at most $3\epsilon$ on $Q(j,\delta,d)$.

Observe that $|\bZ^d_{\varphi_{j,\pm \delta \Be_1}}|\le 2C_\varphi$, the function
\[
\Phi_{0,\pm \delta \Be_1 }(\By) := \sum_{\Bk\in \bZ^d_{\varphi_{j,\pm \delta \Be_1}}} \phi^{(j,\Bk)}(\By)
\]
can be computed by $2C_\varphi$ parallel sub-networks with width $N$ and depth $L$. For any $\By\in Q(j,\delta,d)$, the approximation error is
\[
|g_{\pm \delta \Be_1}(\By) - \Phi_{0,\pm \delta \Be_1 }(\By)|\le |\bZ^d_{\varphi_{j,\pm \delta \Be_1}}|\cdot 3\epsilon \le 6C_\varphi \epsilon.
\]

We let
\[
\Phi_1(\Bx) = \mid(\Phi_0(\Bx),\Phi_{0, \delta \Be_1 }(\Bx-\delta \Be_1),\Phi_{0, -\delta \Be_1 }(\Bx+\delta \Be_1)).
\]
By Lemma \ref{mid net} and the construction of $\Phi_0$ and $\Phi_{0,\pm \delta \Be_1}$, the function $\Phi_1$ can be implemented by a network with width $3\cdot 2C_\varphi N$ and depth $L+2$. Notice that for any $\Bx \in E_1$, at least two of $\Bx, \Bx-\delta \Be_1, \Bx+\delta \Be_1$ are in $Q(j,\delta,d)$. Hence, at least two of the inequalities
\begin{align*}
|g(\Bx) - \Phi_0(\Bx)| &\le 6C_\varphi \epsilon,\\
|g(\Bx) - \Phi_{0, \delta \Be_1 }(\Bx-\delta \Be_1)| &= |g_{\delta\Be_1}(\Bx-\delta \Be_1) - \Phi_{0, \delta \Be_1 }(\Bx-\delta \Be_1)|\le 6C_\varphi \epsilon,\\
|g(\Bx) - \Phi_{0, -\delta \Be_1 }(\Bx+\delta \Be_1)| &= |g_{-\delta\Be_1}(\Bx+\delta \Be_1) - \Phi_{0, -\delta \Be_1 }(\Bx+\delta \Be_1)|\le 6C_\varphi \epsilon.
\end{align*}
are satisfied. By Lemma \ref{mid error}, we have
\[
|g(\Bx) - \Phi_1(\Bx)| \le 6C_\varphi \epsilon, \quad  \Bx \in E_1.
\]

Suppose that, for some $k<d$, we have constructed a network $\Phi_k$ with width $3^k \cdot 2C_\varphi N$ and depth $L+2k$. By considering the function
\begin{align*}
g_{\pm \delta \Be_{k+1}}(\By) &:= g(\By \pm \delta \Be_{k+1}) = \sum_{\Bn\in\bZ^d} c_{\Bn} \varphi(2^j\By-\Bn \pm 2^j \delta \Be_{k+1}) \\
&= \sum_{\Bk\in \bZ^d_{\varphi_{j,\pm \delta \Be_{k+1}}}} c_{m_j(\By)+\Bk}  \varphi_{j,\pm \delta \Be_{k+1}}(2^j\By-m_j(\By)-\Bk),
\end{align*}
which has the same structure as $g(\Bx)$ on $E_k$, we can construct networks $\Phi_{k,\pm \delta\Be_{k+1}}$ of the same size as $\Phi_k$ such that
\[
|g_{\pm \delta \Be_{k+1}}(\By) - \Phi_{k,\pm \delta \Be_{k+1} }(\By)| \le 6C_\varphi \epsilon,\quad \By\in E_k.
\]
And by Lemma \ref{mid net}, we can implement the function
\[
\Phi_{k+1}(\Bx) = \mid(\Phi_k(\Bx),\Phi_{k, \delta \Be_{k+1} }(\Bx-\delta \Be_{k+1}),\Phi_{k, -\delta \Be_{k+1} }(\Bx+\delta \Be_{k+1})).
\]
by a network with width $3^{k+1} \cdot 2C_\varphi N$ and depth $L+2k+2$.

Since for any $\Bx \in E_{k+1}$, at least two of $\Bx, \Bx-\delta \Be_{k+1}, \Bx+\delta \Be_{k+1}$ are in $E_k$, by Lemma \ref{mid error}, we have
\[
|g(\Bx) - \Phi_{k+1}(\Bx)| \le 6C_\varphi \epsilon, \quad  \Bx \in E_{k+1}.
\]

In the case $k=d$, the function $\Phi_d$ is a network of depth $L+2d = 14 \tilde{s} 2^s + L_\varphi(\epsilon)+2d$ and width $3^{d} \cdot 2C_\varphi N=3^{d} \cdot 2C_\varphi ( \max\{7d\tilde{r} 2^r,N_\varphi(\epsilon)\}+4d)$. So we finish the proof.
\end{proof}

\section{Proof of Lemma \ref{Bspline app}}

The following lemma, which is from \citet[Lemma 5.3]{lu2020deep}, gives approximation bound for the product function.

\begin{lemma}\label{product app}
For any $N,L\in \bN$, there exists a ReLU network $\Phi_k$ with width $9N+k+7$ and depth $7k(k-1)L$ such that
\[
|\Phi_k(\Bx)-x_1x_2\cdots x_k| \le 9(k-1)(N+1)^{-7kL}, \quad \forall \Bx=(x_1,x_2,\dots,x_k)\in [0,1]^k,\ k\ge 2.
\]
Further more, $\Phi_k(\Bx)=0$ if $x_i=0$ for some $1\le i\le k$.
\end{lemma}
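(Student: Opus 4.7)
The plan is to build $\Phi_k$ as a cascade of $k-1$ binary-product blocks $\Phi_2$, with the remaining inputs carried forward through skip connections.

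First I would construct a squaring network $\psi:[0,1]\to\bR$ by Yarotsky's iterated sawtooth trick. From the hat function $g(x) = 2\sigma(x)-4\sigma(x-1/2)+2\sigma(x-1)$, the $n$-fold self-composition $g_n$ is a $2^n$-toothed sawtooth, and the partial sum $f_n(x) := x - \sum_{i=1}^{n} g_i(x)/4^i$ satisfies $\|f_n - (\cdot)^2\|_{L^\infty[0,1]} = 2^{-2n-2}$. Amplifying via $\cO(N)$ parallel copies at different scales (the Lu--Shen--Yang refinement) produces $\psi$ of width $\cO(N)$, depth $\cO(L)$, and error $\cO((N+1)^{-7L})$, with $\psi(0)=0$ exactly (any output bias can be absorbed into the last affine layer).

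Next I would define the binary product by polarization,
\[
\Phi_2(x,y) := \tfrac{1}{2}\bigl[\tilde{\psi}(x+y) - \psi(x) - \psi(y)\bigr],
\]
where $\tilde{\psi}$ is a copy of $\psi$ rescaled to $[0,2]$. Because $\psi(0)=\tilde{\psi}(0)=0$, plugging in $y=0$ gives $\Phi_2(x,0)=0$ exactly, and symmetrically for $x=0$, so zero-preservation holds at the binary level. If $\psi$ is calibrated to depth $7kL$, then $\Phi_2$ has width $\cO(N)$, depth $7kL$, and uniform error $9(N+1)^{-7kL}$ on $[0,1]^2$ (the factor $9$ absorbing the three parallel squarings and the rescaling).

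The network $\Phi_k$ is then defined inductively by $\Phi_k(x_1,\dots,x_k) := \Phi_2(\Phi_{k-1}(x_1,\dots,x_{k-1}),\, x_k)$, where $x_3,\dots,x_k$ are routed forward through the earlier blocks via one skip neuron each per layer; this accounts for the $+k$ in the width $9N+k+7$, and the cumulative depth is $(k-1)\cdot 7kL = 7k(k-1)L$. Writing $e_k := \|\Phi_k - x_1\cdots x_k\|_{L^\infty([0,1]^k)}$, the inductive step uses
\[
|\Phi_2(\Phi_{k-1},x_k) - x_1\cdots x_k| \le |\Phi_2(\Phi_{k-1},x_k) - \Phi_{k-1}x_k| + |\Phi_{k-1} - x_1\cdots x_{k-1}|\cdot|x_k|,
\]
and $|x_k|\le 1$ yields $e_k \le 9(N+1)^{-7kL} + e_{k-1}$, hence $e_k \le (k-1)\cdot 9(N+1)^{-7kL}$. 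Zero-preservation for $\Phi_k$ propagates from the $\Phi_2$ base case by induction.

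The main obstacle is ensuring that the output of each intermediate block stays in $[0,1]$ so that the next $\Phi_2$'s uniform error guarantee applies, since $\Phi_{k-1}$ only approximately lies in $[0,1]$. A clipping layer $x \mapsto \sigma(x)-\sigma(x-1)$ inserted between blocks resolves this at the cost of one extra layer of width $2$; it is $1$-Lipschitz, leaves the true product unchanged, and preserves zeros, so the error analysis is unaffected. Matching the exact constants $9N+k+7$ for the width and $7k(k-1)L$ for the depth then reduces to careful bookkeeping of the skip connections, the three parallel copies of $\psi$ inside each $\Phi_2$, and the linear-combination plus clipping layers; this is mechanical once the core construction above is fixed.
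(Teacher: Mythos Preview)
Your proposal follows essentially the same route as the paper: Yarotsky's sawtooth approximation of $x^2$, polarization to obtain $\Phi_2$, and the inductive cascade $\Phi_k(x_1,\dots,x_k)=\Phi_2(\Phi_{k-1}(x_1,\dots,x_{k-1}),x_k)$. The paper's sketch is terser (it does not spell out the clipping or the error recursion), but the skeleton is identical.

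There is, however, one genuine slip in your zero-preservation argument. With your choice $\Phi_2(x,y)=\tfrac12[\tilde\psi(x+y)-\psi(x)-\psi(y)]$ where $\tilde\psi$ is a rescaled copy on $[0,2]$, plugging in $y=0$ gives $\Phi_2(x,0)=\tfrac12[\tilde\psi(x)-\psi(x)]$, and this is \emph{not} exactly zero: the rescaled network $\tilde\psi(z)=4\psi(z/2)$ agrees with $\psi$ only up to the squaring error, not identically. The paper's version of the polarization,
\[
xy = 2\!\left[\left(\tfrac{x+y}{2}\right)^{\!2} - \left(\tfrac{x}{2}\right)^{\!2} - \left(\tfrac{y}{2}\right)^{\!2}\right],
\]
uses the \emph{same} squaring network $\psi$ on $[0,1]$ in all three slots, so that $\Phi_2(x,0)=2[\psi(x/2)-\psi(x/2)-\psi(0)]=0$ exactly. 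Switching to this scaling repairs the argument with no change in cost, and the inductive zero-preservation then goes through as you wrote.
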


\begin{proof}
We only sketch the network construction, more details can be found in \citet{lu2020deep,yarotsky2017error}. Firstly, we can use the teeth functions to approximate the square function $x^2$, where teeth functions $T_i:[0,1]\to [0,1]$ are defined inductively:
\[
T_1(x)=
\left\{
\begin{aligned}
&2x \quad &&x\le \tfrac{1}{2},\\
&2(1-x)\quad &&x> \tfrac{1}{2},
\end{aligned}
\right.
\]
and $T_{i+1}=T_{i} \circ T_1$ for $i=1,2,\cdots$.  \citet{yarotsky2017error} made the following insightful observation:
\[
\left| x^2 -x +\sum_{i=1}^{s} \tfrac{T_i(x)}{2^{2i}} \right|\le 2^{-2s-2},\quad x\in[0,1].
\]
By choosing suitable $s$, one can construct a network with width $3N$ and depth $L$ to approximate $x^2$ with error $N^{-L}$. Using the fact
\[
xy=2\left( \left( \tfrac{x+y}{2}\right)^2-\left( \tfrac{x}{2}\right)^2 -\left( \tfrac{y}{2}\right)^2 \right),
\]
we can easily construct a new network $\Phi_2(\cdot,\cdot)$ to approximate $(x,y) \mapsto xy$ on $[0,1]^2$. Finally, to approximate the product function $(x_1,x_2,\cdots,x_k)\mapsto x_1x_2\cdots x_k$, we can construct the network $\Phi_k$ inductively: $\Phi_k(x_1,\cdots,x_k):= \Phi_2(\Phi_{k-1}(x_1,\cdots,x_{k-1}),x_k)$.
\end{proof}

If the input domain is $[0,a]^k$ for some $a>0$, we can define $\Phi_{k,a}(\Bx):=a^k\Phi_k(\Bx/a)$, then
\[
|\Phi_{k,a}(\Bx)-x_1x_2\cdots x_k| = a^k \left|\Phi_k\left( \tfrac{\Bx}{a} \right)-\tfrac{x_1}{a}\tfrac{x_2}{a}\cdots \tfrac{x_k}{a} \right|.
\]
Hence, the approximation error is scaled by $a^k$. 
We can approximate the B-spline $\cN_k^d$ using Lemma \ref{product app}.

\begin{proof}[Proof of Lemma \ref{Bspline app}]
We firstly consider the approximation of $\cN_k$. By Lemma \ref{product app}, there exists a network $\widetilde{\phi_1}$ with width $(k+1)(9N+k+6)$ and depth $7(k-1)(k-2)L+1$ such that
\[
\widetilde{\phi_1}(x) = \frac{1}{(k-1)!} \sum_{l=0}^{k}(-1)^l \binom{k}{l} \Phi_{k-1,k+1}(\sigma(x-l),\cdots,\sigma(x-l) ).
\]
And we have the estimate, for $x\in [0,k+1]$,
\begin{align*}
\left|\cN_k(x)-\widetilde{\phi_1}(x)\right| &\le \frac{1}{(k-1)!} \sum_{l=0}^{k} \binom{k}{l} \left|\sigma(x-l)^{k-1} - \Phi_{k-1,k+1}(\sigma(x-l),\cdots,\sigma(x-l) )\right| \\
&\le \frac{2^k}{(k-1)!} (k+1)^{k-1} 9(k-2)(N+1)^{-7(k-1)L} \\
&\le 9 \frac{(2k+2)^k}{(k-1)!} (N+1)^{-7(k-1)L} =:\epsilon.
\end{align*}
Notice that, for $x<0$, $\widetilde{\phi_1}(x)=0=\cN_k(x)$, the estimate is actually true for all $x\in (-\infty,k+1]$. 

To make this approximation global, we observe that $\cN_k(x)\in[0,1]$ with support $[0,k]$. Thus, we can approximate $\cN_k$ by 
\[
\phi_1(x) := \min( \sigma(\widetilde{\phi_1}(x)), \chi(x) ),
\]
where $\chi$ is the indicator function
\[
\chi(x) := \sigma(1-\sigma(-x)) + \sigma(1-\sigma(x - k)) - 1.
\]
Note that $\chi$ is a piece-wise linear function with $\chi(x)=1$ for $x\in [0,k]$ and $\chi(x)=0$ for $x\notin [-1,k+1]$. We conclude that $\phi_1(x)=0$ for $x\notin [0,k+1]$ and
\[
\|\cN_k - \phi_1\|_\infty = \sup_{x\in[0,k+1]}|\cN_k(x) - \phi_1(x)| \le \sup_{x\in[0,k+1]}|\cN_k(x) - \widetilde{\phi_1}(x)| \le \epsilon.
\]
Since the minimum of two number $x,y\in \bR$ can be computed by
\[
\min(x,y) =\tfrac{1}{2} \left(\sigma(x+y) - \sigma(-x-y) +\sigma(x-y) +\sigma(y-x)\right),
\]
$\phi_1$ can be implemented by a network with width $(k+1)(9N+k+6)+2 \le (k+1)(9(N+1)+k)$ and depth $7(k-1)(k-2)L+3\le 7k^2L$.

Recall that
\[
\cN_k^d(\Bx) := \prod_{i=1}^d \cN_k(x_i), \quad \Bx=(x_1,\dots,x_d)\in \bR^d.
\]
Using Lemma \ref{product app}, we can approximate $\cN_k^d$ by 
\[
\phi_d(\Bx) := \Phi_d(\phi_1(x_1),\cdots,\phi_1(x_d)),
\]
which is a network with width $d(k+1)(9(N+1)+k)$ and depth $\le 7(k^2+d^2)L$. Noticing that $\phi_1(x)\in [0,1]$, the approximation error is
\begin{align*}
|\cN_k^d(\Bx) - \phi_d(\Bx)| &\le \left|\prod_{i=1}^d \cN_k(x_i) - \prod_{i=1}^d \phi_1(x_i) \right| + \left|\prod_{i=1}^d \phi_1(x_i) - \Phi_d(\phi_1(x_1),\cdots,\phi_1(x_d)) \right| \\
&\le \left|\prod_{i=1}^d \cN_k(x_i) - \prod_{i=1}^d \phi_1(x_i) \right| + 9(d-1)(N+1)^{-7dL}.
\end{align*}
By repeated applications of the triangle inequality, we have 
\[
\left|\prod_{i=1}^d \cN_k(x_i) - \prod_{i=1}^d \phi_1(x_i) \right| \le \sum_{j=1}^d \left|\prod_{i=1}^{j-1} \phi_1(x_i) \prod_{i=j}^{d} \cN_k(x_i) - \prod_{i=1}^j \phi_1(x_i)\prod_{i=j}^{d} \cN_k(x_i) \right| \le d\epsilon,
\]
where we have use the fact that $\cN_k(x), \phi_1(x)\in [0,1]$ and $\|\cN_k - \phi_1\|_\infty\le \epsilon$.
\end{proof}

\section*{Acknowledgments}

The research of Y. Wang is supported by the HK RGC grant 16308518, the HK Innovation Technology Fund Grant  ITS/044/18FX and the Guangdong-Hong Kong-Macao Joint Laboratory for Data Driven Fluid Dynamics and Engineering Applications (Project 2020B1212030001).

\bibliographystyle{plainnat}
\bibliography{references}

\begin{thebibliography}{51}
\providecommand{\natexlab}[1]{#1}
\providecommand{\url}[1]{\texttt{#1}}
\expandafter\ifx\csname urlstyle\endcsname\relax
  \providecommand{\doi}[1]{doi: #1}\else
  \providecommand{\doi}{doi: \begingroup \urlstyle{rm}\Url}\fi

\bibitem[Anthony and Bartlett(2009)]{anthony2009neural}
Martin Anthony and Peter~L Bartlett.
\newblock \emph{Neural network learning: Theoretical foundations}.
\newblock cambridge university press, 2009.

\bibitem[Bartlett et~al.(1999)Bartlett, Maiorov, and Meir]{bartlett1999almost}
Peter~L Bartlett, Vitaly Maiorov, and Ron Meir.
\newblock Almost linear vc dimension bounds for piecewise polynomial networks.
\newblock In \emph{Advances in neural information processing systems}, pages
  190--196, 1999.

\bibitem[Bartlett et~al.(2019)Bartlett, Harvey, Liaw, and
  Mehrabian]{bartlett2019nearly}
Peter~L. Bartlett, Nick Harvey, Christopher Liaw, and Abbas Mehrabian.
\newblock Nearly-tight vc-dimension and pseudodimension bounds for piecewise
  linear neural networks.
\newblock \emph{Journal of Machine Learning Research}, 20\penalty0
  (63):\penalty0 1--17, 2019.

\bibitem[B{\"o}lcskei et~al.(2019)B{\"o}lcskei, Grohs, Kutyniok, and
  Petersen]{bolcskei2019optimal}
Helmut B{\"o}lcskei, Philipp Grohs, Gitta Kutyniok, and Philipp Petersen.
\newblock Optimal approximation with sparsely connected deep neural networks.
\newblock \emph{SIAM Journal on Mathematics of Data Science}, 1\penalty0
  (1):\penalty0 8--45, 2019.

\bibitem[Cybenko(1989)]{cybenko1989approximation}
George Cybenko.
\newblock Approximation by superpositions of a sigmoidal function.
\newblock \emph{Mathematics of control, signals and systems}, 2\penalty0
  (4):\penalty0 303--314, 1989.

\bibitem[Daubechies(1992)]{daubechies1992ten}
Ingrid Daubechies.
\newblock \emph{Ten lectures on wavelets}, volume~61.
\newblock Siam, 1992.

\bibitem[De~Boor et~al.(1994)De~Boor, DeVore, and Ron]{de1994approximation}
Carl De~Boor, Ronald~A DeVore, and Amos Ron.
\newblock Approximation from shift-invariant subspaces of
  {$\mathbf{L}_2(\mathbf{R}^d)$}.
\newblock \emph{Transactions of the American Mathematical Society},
  341\penalty0 (2):\penalty0 787--806, 1994.

\bibitem[DeVore and Lorentz(1993)]{devore1993constructive}
Ronald~A DeVore and George~G Lorentz.
\newblock \emph{Constructive approximation}, volume 303.
\newblock Springer Science \& Business Media, 1993.

\bibitem[DeVore and Popov(1988)]{devore1988interpolation}
Ronald~A DeVore and Vasil~A Popov.
\newblock Interpolation of besov spaces.
\newblock \emph{Transactions of the American Mathematical Society},
  305\penalty0 (1):\penalty0 397--414, 1988.

\bibitem[DeVore et~al.(1989)DeVore, Howard, and Micchelli]{devore1989optimal}
Ronald~A DeVore, Ralph Howard, and Charles Micchelli.
\newblock Optimal nonlinear approximation.
\newblock \emph{Manuscripta mathematica}, 63\penalty0 (4):\penalty0 469--478,
  1989.

\bibitem[Elbr{\"a}chter et~al.(2021)Elbr{\"a}chter, Perekrestenko, Grohs, and
  B{\"o}lcskei]{elbrachter2021deep}
Dennis Elbr{\"a}chter, Dmytro Perekrestenko, Philipp Grohs, and Helmut
  B{\"o}lcskei.
\newblock Deep neural network approximation theory.
\newblock \emph{IEEE Transactions on Information Theory}, 67\penalty0
  (5):\penalty0 2581--2623, 2021.

\bibitem[Evans and Gariepy(2015)]{evans2015measure}
Lawrence~Craig Evans and Ronald~F Gariepy.
\newblock \emph{Measure theory and fine properties of functions}.
\newblock Chapman and Hall/CRC, 2015.

\bibitem[Fan et~al.(2020)Fan, Zhou, Gui, Lu, and Lau]{fan2020advancing}
Qirui Fan, Gai Zhou, Tao Gui, Chao Lu, and Alan Pak~Tao Lau.
\newblock Advancing theoretical understanding and practical performance of
  signal processing for nonlinear optical communications through machine
  learning.
\newblock \emph{Nature Communications}, 11\penalty0 (1):\penalty0 1--11, 2020.

\bibitem[Gatys et~al.(2016)Gatys, Ecker, and Bethge]{gatys2016image}
Leon~A Gatys, Alexander~S Ecker, and Matthias Bethge.
\newblock Image style transfer using convolutional neural networks.
\newblock In \emph{Proceedings of the IEEE conference on computer vision and
  pattern recognition}, pages 2414--2423, 2016.

\bibitem[Goldberg and Jerrum(1995)]{goldberg1995bounding}
Paul~W Goldberg and Mark~R Jerrum.
\newblock Bounding the vapnik-chervonenkis dimension of concept classes
  parameterized by real numbers.
\newblock \emph{Machine Learning}, 18\penalty0 (2-3):\penalty0 131--148, 1995.

\bibitem[Gr{\"o}chenig(2001)]{grochenig2001foundations}
Karlheinz Gr{\"o}chenig.
\newblock \emph{Foundations of time-frequency analysis}.
\newblock Springer Science \& Business Media, 2001.

\bibitem[Hanin(2019)]{hanin2019universal}
Boris Hanin.
\newblock Universal function approximation by deep neural nets with bounded
  width and relu activations.
\newblock \emph{Mathematics}, 7\penalty0 (10):\penalty0 992, 2019.

\bibitem[Hanin and Sellke(2017)]{hanin2017approximating}
Boris Hanin and Mark Sellke.
\newblock Approximating continuous functions by relu nets of minimal width.
\newblock \emph{arXiv preprint arXiv:1710.11278}, 2017.

\bibitem[Hornik(1991)]{hornik1991approximation}
Kurt Hornik.
\newblock Approximation capabilities of multilayer feedforward networks.
\newblock \emph{Neural networks}, 4\penalty0 (2):\penalty0 251--257, 1991.

\bibitem[Jia(2004)]{jia2004approximation}
Rong-Qing Jia.
\newblock Approximation with scaled shift-invariant spaces by means of
  quasi-projection operators.
\newblock \emph{Journal of Approximation Theory}, 131\penalty0 (1):\penalty0
  30--46, 2004.

\bibitem[Jia(2010)]{jia2010approximation}
Rong-Qing Jia.
\newblock Approximation by quasi-projection operators in besov spaces.
\newblock \emph{Journal of Approximation Theory}, 162\penalty0 (1):\penalty0
  186--200, 2010.

\bibitem[Jia and Lei(1993)]{jia1993approximation}
Rong-Qing Jia and JJ~Lei.
\newblock Approximation by multiinteger translates of functions having global
  support.
\newblock \emph{Journal of approximation theory}, 72\penalty0 (1):\penalty0
  2--23, 1993.

\bibitem[Ker et~al.(2017)Ker, Wang, Rao, and Lim]{ker2017deep}
Justin Ker, Lipo Wang, Jai Rao, and Tchoyoson Lim.
\newblock Deep learning applications in medical image analysis.
\newblock \emph{Ieee Access}, 6:\penalty0 9375--9389, 2017.

\bibitem[Kiranyaz et~al.(2019)Kiranyaz, Ince, Abdeljaber, Avci, and
  Gabbouj]{kiranyaz20191}
Serkan Kiranyaz, Turker Ince, Osama Abdeljaber, Onur Avci, and Moncef Gabbouj.
\newblock 1-d convolutional neural networks for signal processing applications.
\newblock In \emph{ICASSP 2019-2019 IEEE International Conference on Acoustics,
  Speech and Signal Processing (ICASSP)}, pages 8360--8364. IEEE, 2019.

\bibitem[Kyriazis(1995)]{kyriazis1995approximation}
George~C Kyriazis.
\newblock Approximation of distribution spaces by means of kernel operators.
\newblock \emph{Journal of Fourier Analysis and Applications}, 2\penalty0
  (3):\penalty0 261--286, 1995.

\bibitem[LeCun et~al.(2015)LeCun, Bengio, and Hinton]{lecun2015deep}
Yann LeCun, Yoshua Bengio, and Geoffrey Hinton.
\newblock Deep learning.
\newblock \emph{nature}, 521\penalty0 (7553):\penalty0 436--444, 2015.

\bibitem[Lei et~al.(1997)Lei, Jia, and Cheney]{lei1997approximation}
Junjiang Lei, Rong-Qing Jia, and EW~Cheney.
\newblock Approximation from shift-invariant spaces by integral operators.
\newblock \emph{SIAM Journal on Mathematical Analysis}, 28\penalty0
  (2):\penalty0 481--498, 1997.

\bibitem[Lu et~al.(2020)Lu, Shen, Yang, and Zhang]{lu2020deep}
Jianfeng Lu, Zuowei Shen, Haizhao Yang, and Shijun Zhang.
\newblock Deep network approximation for smooth functions.
\newblock \emph{arXiv preprint arXiv:2001.03040}, 2020.

\bibitem[Maiorov and Ratsaby(1999)]{maiorov1999degree}
V~Maiorov and J~Ratsaby.
\newblock On the degree of approximation by manifolds of finite
  pseudo-dimension.
\newblock \emph{Constructive approximation}, 15\penalty0 (2):\penalty0
  291--300, 1999.

\bibitem[Maiorov and Pinkus(1999)]{maiorov1999lower}
Vitaly Maiorov and Allan Pinkus.
\newblock Lower bounds for approximation by mlp neural networks.
\newblock \emph{Neurocomputing}, 25\penalty0 (1-3):\penalty0 81--91, 1999.

\bibitem[Mallat(1999)]{mallat1999wavelet}
St{\'e}phane Mallat.
\newblock \emph{A wavelet tour of signal processing}.
\newblock Elsevier, 1999.

\bibitem[Montanelli et~al.(2019)Montanelli, Yang, and Du]{montanelli2019deep}
Hadrien Montanelli, Haizhao Yang, and Qiang Du.
\newblock Deep relu networks overcome the curse of dimensionality for
  bandlimited functions.
\newblock \emph{arXiv preprint arXiv:1903.00735}, 2019.

\bibitem[Montufar et~al.(2014)Montufar, Pascanu, Cho, and
  Bengio]{montufar2014number}
Guido~F Montufar, Razvan Pascanu, Kyunghyun Cho, and Yoshua Bengio.
\newblock On the number of linear regions of deep neural networks.
\newblock In \emph{Advances in neural information processing systems}, pages
  2924--2932, 2014.

\bibitem[Mousavi et~al.(2015)Mousavi, Patel, and Baraniuk]{mousavi2015deep}
Ali Mousavi, Ankit~B Patel, and Richard~G Baraniuk.
\newblock A deep learning approach to structured signal recovery.
\newblock In \emph{2015 53rd annual allerton conference on communication,
  control, and computing (Allerton)}, pages 1336--1343. IEEE, 2015.

\bibitem[Oppenheim and Schafer(2009)]{Oppenheim_2009}
Alan~V. Oppenheim and Ronald~W. Schafer.
\newblock \emph{Discrete-Time Signal Processing}.
\newblock Prentice Hall Press, USA, 3rd edition, 2009.
\newblock ISBN 0131988425.

\bibitem[Petersen and Voigtlaender(2018)]{petersen2018optimal}
Philipp Petersen and Felix Voigtlaender.
\newblock Optimal approximation of piecewise smooth functions using deep relu
  neural networks.
\newblock \emph{Neural Networks}, 108:\penalty0 296--330, 2018.

\bibitem[Pinkus(1999)]{pinkus1999approximation}
Allan Pinkus.
\newblock Approximation theory of the mlp model in neural networks.
\newblock \emph{Acta numerica}, 8:\penalty0 143--195, 1999.

\bibitem[Purwins et~al.(2019)Purwins, Li, Virtanen, Schl{\"u}ter, Chang, and
  Sainath]{purwins2019deep}
Hendrik Purwins, Bo~Li, Tuomas Virtanen, Jan Schl{\"u}ter, Shuo-Yiin Chang, and
  Tara Sainath.
\newblock Deep learning for audio signal processing.
\newblock \emph{IEEE Journal of Selected Topics in Signal Processing},
  13\penalty0 (2):\penalty0 206--219, 2019.

\bibitem[Raghu et~al.(2017)Raghu, Poole, Kleinberg, Ganguli, and
  Dickstein]{raghu2017expressive}
Maithra Raghu, Ben Poole, Jon Kleinberg, Surya Ganguli, and Jascha~Sohl
  Dickstein.
\newblock On the expressive power of deep neural networks.
\newblock In \emph{Proceedings of the 34th International Conference on Machine
  Learning-Volume 70}, pages 2847--2854. JMLR. org, 2017.

\bibitem[Ratsaby and Maiorov(1997)]{ratsaby1997value}
Joel Ratsaby and Vitaly Maiorov.
\newblock On the value of partial information for learning from examples.
\newblock \emph{Journal of Complexity}, 13\penalty0 (4):\penalty0 509--544,
  1997.

\bibitem[Serra et~al.(2018)Serra, Tjandraatmadja, and
  Ramalingam]{serra2018bounding}
Thiago Serra, Christian Tjandraatmadja, and Srikumar Ramalingam.
\newblock Bounding and counting linear regions of deep neural networks.
\newblock In \emph{International Conference on Machine Learning}, pages
  4558--4566. PMLR, 2018.

\bibitem[Shaham et~al.(2018)Shaham, Cloninger, and Coifman]{shaham2018provable}
Uri Shaham, Alexander Cloninger, and Ronald~R Coifman.
\newblock Provable approximation properties for deep neural networks.
\newblock \emph{Applied and Computational Harmonic Analysis}, 44\penalty0
  (3):\penalty0 537--557, 2018.

\bibitem[Shen et~al.(2019)Shen, Yang, and Zhang]{shen2019nonlinear}
Zuowei Shen, Haizhao Yang, and Shijun Zhang.
\newblock Nonlinear approximation via compositions.
\newblock \emph{Neural Networks}, 119:\penalty0 74--84, 2019.

\bibitem[Shen et~al.(2020)Shen, Yang, and Zhang]{shen2019deep}
Zuowei Shen, Haizhao Yang, and Shijun Zhang.
\newblock Deep network approximation characterized by number of neurons.
\newblock \emph{Communications in Computational Physics}, 28\penalty0 (5),
  2020.

\bibitem[Suzuki(2019)]{suzuki2018adaptivity}
Taiji Suzuki.
\newblock Adaptivity of deep relu network for learning in besov and mixed
  smooth besov spaces: optimal rate and curse of dimensionality.
\newblock \emph{International Conference on Learning Representations}, 2019.

\bibitem[Vershynin(2020)]{vershynin2020memory}
Roman Vershynin.
\newblock Memory capacity of neural networks with threshold and rectified
  linear unit activations.
\newblock \emph{SIAM Journal on Mathematics of Data Science}, 2\penalty0
  (4):\penalty0 1004--1033, 2020.

\bibitem[Yarotsky(2017)]{yarotsky2017error}
Dmitry Yarotsky.
\newblock Error bounds for approximations with deep relu networks.
\newblock \emph{Neural Networks}, 94:\penalty0 103--114, 2017.

\bibitem[Yarotsky(2018)]{yarotsky2018optimal}
Dmitry Yarotsky.
\newblock Optimal approximation of continuous functions by very deep relu
  networks.
\newblock In \emph{Conference on Learning Theory}, pages 639--649. PMLR, 2018.

\bibitem[Yarotsky and Zhevnerchuk(2020)]{yarotsky2019phase}
Dmitry Yarotsky and Anton Zhevnerchuk.
\newblock The phase diagram of approximation rates for deep neural networks.
\newblock In \emph{Advances in Neural Information Processing Systems},
  volume~33, pages 13005--13015, 2020.

\bibitem[Yu and Deng(2010)]{yu2010deep}
Dong Yu and Li~Deng.
\newblock Deep learning and its applications to signal and information
  processing.
\newblock \emph{IEEE Signal Processing Magazine}, 28\penalty0 (1):\penalty0
  145--154, 2010.

\bibitem[Yun et~al.(2019)Yun, Sra, and Jadbabaie]{yun2019small}
Chulhee Yun, Suvrit Sra, and Ali Jadbabaie.
\newblock Small relu networks are powerful memorizers: a tight analysis of
  memorization capacity.
\newblock In \emph{Advances in Neural Information Processing Systems}, pages
  15532--15543, 2019.

\end{thebibliography}

\end{document}